\documentclass{article}
\usepackage{microtype}
\usepackage{graphicx}
\usepackage{booktabs}
\usepackage{hyperref}

\usepackage[accepted]{icml2025}

\usepackage{amsmath}
\usepackage{amssymb}
\usepackage{mathtools}
\usepackage{amsthm}

\usepackage[capitalize,noabbrev]{cleveref}

\theoremstyle{plain}
\newtheorem{theorem}{Theorem}[section]

\newtheorem{lemma}[theorem]{Lemma}

\theoremstyle{definition}

\theoremstyle{remark}
\newtheorem{remark}[theorem]{Remark}

\usepackage[textsize=tiny]{todonotes}
\icmltitlerunning{On the Dynamic Regret of FTRL: Optimism with History Pruning}

\newif\ifinappendix
\inappendixfalse

\usepackage{thm-restate}
\usepackage{autonum}
\usepackage{bm}
\allowdisplaybreaks
\usepackage{tabularx}
\newcommand\cone{\mathcal{N}_{\mathcal{X}}}
\newcolumntype{Y}{>{\centering\arraybackslash}X}

\usepackage{nicefrac}

\newcommand{\sumT}{\sum_{t=1}^T} 
\newcommand{\sumTO}{\sum_{t=1}^{T-1}} 
\newcommand{\sumtau}{\sum_{\tau=1}^t} 
\newcommand{\sumtauO}{\sum_{\tau=1}^{t-1}} 
\newcommand{\dt}{\|\vec{u}_{t+1} - \vec{u}_t\|}

\newcommand{\ti}{\tilde} 
\newcommand{\dtp}[2]{\langle {#1}, {#2} \rangle}
\newcommand{\grd}{\nabla}

\renewcommand{\vec}[1]{\bm{#1}}

\DeclareMathOperator*{\argmin}{argmin}

\usepackage{subcaption}

\begin{document}

\twocolumn[
\icmltitle{On the Dynamic Regret of Following the Regularized Leader:\\
 Optimism with History Pruning}

\icmlsetsymbol{equal}{*}

\begin{icmlauthorlist}
\icmlauthor{Naram Mhaisen}{yyy}
\icmlauthor{George Iosifidis}{yyy}
\end{icmlauthorlist}

\icmlaffiliation{yyy}{Faculty of Electrical Engineering, Mathematics and Computer Science, TU Delft, Netherlands}

\icmlcorrespondingauthor{Naram Mhaisen}{n.mhaisen@tudelft.nl}

\icmlkeywords{Online Convex Optimization, Online learning, Optimistic learning, FTRL, Dynamic Regret}

\vskip 0.3in
]

\printAffiliationsAndNotice{}

\begin{abstract}
We revisit the Follow the Regularized Leader (FTRL) framework for Online Convex Optimization (OCO) over compact sets, focusing on achieving dynamic regret guarantees. Prior work has highlighted the framework’s limitations in dynamic environments due to its tendency to produce ``lazy" iterates. However, building on insights showing FTRL's ability to produce ``agile” iterates, we show that it can indeed recover known dynamic regret bounds through optimistic composition of future costs and careful linearization of past costs, which can lead to pruning some of them. This new analysis of FTRL against dynamic comparators yields a principled way to interpolate between lazy and agile updates and offers several benefits, including refined control over regret terms, optimism without cyclic dependence, and the application of minimal recursive regularization akin to AdaFTRL. More broadly, we show that it is not the ``lazy" projection style of FTRL that hinders (optimistic) dynamic regret, but the decoupling of the algorithm’s state (linearized history) from its iterates, allowing the state to grow arbitrarily. Instead, pruning synchronizes these two when necessary.
\end{abstract}

\section{Introduction}
This paper addresses the Online Convex Optimization (OCO) problem \citep{zinkevich2003online, shalev2011online, hazan-book}, a popular paradigm for sequential decision making under uncertainty. OCO is regarded as a time-slotted game between a learner and a potentially adversarial environment. At slot $t$, the learner selects an action $\vec{x}_t$ from a convex set $\mathcal{X}$. Then, the environment reveals a cost $f_t:\! \mathcal{X}\!\subset\! \mathbb{R}^{n}\!\to\! \mathbb{R}$, and the learner incurs $f_t(\vec x_t)$. The objective is finding a set of actions $\{\vec x_t\}_{t=1}^T$, for some horizon $T$, that perform well under the functions $\{f_t(\cdot)\}_{t=1}^T$, which is non-trivial since $\vec x_t$ is committed before $f_t(\cdot)$ is revealed. Under the most stringent criteria, the learner's performance is measured by the \emph{dynamic regret $\mathcal{R}_T$:}
\begin{align}
    \mathcal{R}_T \doteq \sumT f_t(\vec x_t) - f_t(\vec u_t),
    \label{eq:dynamic-regret}
\end{align}
where $\{\vec{u}_t\}_{t=1}^T \in \mathcal{X}$ is any set of comparators with desirable costs that we wish to benchmark against. The dynamic regret is thus simply the performance gap between the learner and the comparator sequence. A key complexity measure associated with the comparators is its \emph{path length} $P_T$,
\begin{align}
    \label{eq:path-length}
    P_T \doteq \sum_{t=1}^{T-1} \| \vec{u}_{t+1} - \vec{u}_t \|.
\end{align}
The path length quantifies the variation in the comparator sequence, with larger values indicating a more dynamic environment and a more challenging learning setting.

\subsection{Background and Motivation}
Algorithms with a sub-linear regret guarantee have been behind recent state-of-art advances not only in classical computer science problems such as caching \citep{mhaisen2022optimistic}, portfolio management \citep{tsai2024data}, and generalized assignment \citep{aslan2024fair}, but also in machine learning problems such as the design of (enhanced) ADAM optimizer \citep{ahn2024understanding}, sub-modular optimization \citep{si2024online}, and supervised learning with shifting labels \citep{bai2022adapting} among others. 
However, even in the \emph{static} settings where the comparator is fixed: $\vec{u}_t = \vec{u}, \forall t$, the strategy of minimizing witnessed costs at each $t$ admits \emph{linear} regret, indicating a failure in learning \citep[Sec 2.2]{shalev2011online}. Hence, careful \emph{regularization} is needed to avoid overfitting past data, which is the fundamental idea behind the two main algorithmic families for OCO: Follow the Regularized Leader (FTRL) and Online Mirror Descent (OMD). It is known that both frameworks achieve an order optimal static regret bound of $\mathcal{O}(\sqrt{T})$ \citep[Sec. 5]{orabona2021modern}. 
Further, it has been shown that when the regularization is made \emph{data-dependent}, the regret bounds will indeed be sub-linear and, more importantly, data-dependent. This is a desirable, albeit challenging, objective.

Data-dependent bounds are preferable because they are parametrized by the actual problem instance, $\{f_t(\cdot)\}_{t=1}^T$, rather than crude universal bounds on this instance. I.e., on the Lipschitz constant $L$ and the horizon $T$. While this complicates the analysis, it leads to more custom algorithms in which the ``easiness" of an instance is reflected in the bound. For example, AdaGrad-style bounds \citep{duchi2011adaptive}, and follow-ups, achieve static regret of the form $\mathcal{O}(\sqrt{G_T})$ where  $G_T\!=\!\sum_{t=1}^{T}\|\vec g_t\|^2$,  $\vec{g}_t \in \partial f_t(\vec{x}_t)$ is the sub-gradients norm trajectory. This form of bounds is desirable since they scale with the lengths $\|\vec{g}_t\|$ and hold for any value $T$ rather than being dependent on the worst case $L$ value and on a single pre-provided $T$. Yet, they maintain the order-optimal bound ($\mathcal{O}(\sqrt{T})$) in all cases. Similarly, \citep{chiang2012online} achieves $\mathcal{O}(\sqrt{V_T})$ for smooth functions where $V_T\!\doteq\!\sum_{t=2}^{T}\max_{\vec x} \| \grd f_t(\vec{x}) -  \grd f_{t-1}(\vec{x})\|^2$ is the gradient variation trajectory, which, again, is never more $\mathcal{O}(\sqrt{T})$ but tighter for slowly-varying functions. 

A more general problem-dependent quantity is the accumulated \emph{prediction error} \citep{rakhlin-nips2013, mohri-aistats2016}; suppose the learner receives a prediction $\tilde f_t(\cdot)$ for the cost function $f_t(\cdot)$ \emph{prior} to deciding $\vec{x}_t$, with no guarantees on its accuracy, the quantity of interest is:
\begin{align}
\label{eq:pred-error}
E_T\doteq \sum_{t=1}^{T}\epsilon_t^2, \quad \epsilon_t \doteq \|\vec g_t - \vec{\tilde{g}}_t\|,
\end{align}
where $\ti{\vec{g}}_t \in \partial \tilde f_t(\vec{x}_t)$. Clearly, we can choose $\tilde f_t(\cdot)$ to be $0$ or $f_{t-1}(\cdot)$, recovering the dependence on $G_T$ and $V_T$\footnote{More precisely, for differentiable functions, we recover $V'_T = \sumT \| \grd{f}_t(\vec{x}_t) - \grd{f}_{t-1}(\vec x_{t-1})\|^2 \leq V_T$.}, respectively. Algorithms whose regret depends on $E_T$ are called \emph{optimistic} and are crucial for achieving best-of-both-worlds-style guarantees: \emph{constant} regret in predictable environments and sub-linear in all cases, delivering adaptability without sacrificing robustness. Interestingly, the application of optimistic algorithms extends beyond enabling the use of untrusted predictions in the OCO problem; they have been shown to be key in the more general \emph{delayed} OCO problem \citep{optimdelay21}, as well as the related OCO with memory problem \cite{mhaisen24memory}. Given their significance, we focus on developing ``optimistic" algorithms in this work. That is, algorithms that receive and use predictions of future costs and have regret bounds parametrized by $E_T$. 

While (optimistic) data dependence is well understood for both OMD and FTRL frameworks under the \emph{static} regret metric, the story is different when it comes to dynamic regret. For OMD, the current prevalent form of optimistic problem-dependent dynamic regret bounds first appeared in \citep{jadbabaie2015online}, who used a variant of the Optimistic OMD (OOMD) (two-step variant). This formulation requires that $\vec{g}_t$ is defined before calculating $\vec x_t$, which is only possible for linear functions (recall $\vec{g}_t \in \partial f_t(\vec x_t)$). This ``cyclic dependency" issue was only addressed recently in \citep{scroccaro-composite23}, who obtained bounds that depend on a quantity similar to $E_T$, $D_T \doteq \sum_{t=1}^T \| \grd f_{t}(\vec y_{t-1}) - \grd \ti f_{t}(\vec y_{t-1})\|$, where $\vec{y}_t$ are points generated by an online algorithm.

As for FTRL, the first dynamic regret guarantee has been established by \cite{ahn2024understanding}, showing $\mathcal{R}_T = \mathcal{O}(P^{1/3} T^{2/3})$ for bounded domains. While this guarantee is not data-dependent and suboptimal in $T$, it suffices for the authors’ goal of explaining the behavior of the Adam optimizer. For bounded domains, to our knowledge, no prior work has established dynamic regret guarantees for FTRL, problem-dependent or not, with $\mathcal{O}(P^\beta_T\sqrt{T})$ dependence, for any $\beta\in[0,1]$. This gap raises an intriguing question regarding the performance of FTRL under the dynamic regret metric, particularly given that FTRL can be \emph{equivalent} to OMD under specific regularization and linearization choices \citep[Sec. 6]{mcmahan-survey17}, suggesting its potential applicability in dynamic environments. However, the extent to which FTRL admits meaningful dynamic regret guarantees remains an open problem.

Conceptually, the versatility of FTRL arises from its \emph{richer} state representation, where the ``state" refers to the vector used to determine the next iterate, $\vec{x}_{t+1}$. In OMD, the state of the algorithm is merely the current feasible point, $\vec{x}_t$. In contrast, the state in FTRL is some mapping of all previous cost functions. In the most common case, this mapping is simply the cumulative gradient, $\vec{g}_{1:t}\doteq \sum_{\tau=1}^t \vec g_\tau$. This same versatility, which stems from retaining all past costs, introduces a key drawback: retaining all past costs can hinder adaptation when the costs are nonstationary.

Specifically, it has been demonstrated that FTRL iterates that use such mapping \emph{fail} to achieve sublinear dynamic regret even for \emph{constant} path lengths \citep[Thm. 2]{jacobsen2022parameter}. In essence, the aggregation of past costs obscures the switching patterns in the data (i.e., in $\{f_\tau(\cdot)\}_{\tau=1}^t$), which are crucial for the iterates to adapt appropriately, particularly when competing with moving comparators. Since most FTRL variants in the literature aggregate past gradients, FTRL is often considered unsuitable for dynamic environments \citep[]{chen2024optimistic}. However, these findings do not dismiss the potential of \emph{all} FTRL variants. On the contrary, they give insight into how to design variants that adapt to changing comparators, a key motivation behind the pruning mechanism we analyze here.

This paper seeks to address the ambiguity regarding FTRL’s performance under dynamic regret. Clarifying this issue is not only intellectually compelling but also important for establishing more refined problem-dependent bounds, as we demonstrate in the sequel. It will also help explain the notable performance gap between lazy (i.e., typically FTRL) and greedy (i.e., typically OMD) methods in dynamic settings. Furthermore, we introduce a new set of FTRL-native analysis tools, expanding its applicability in dynamic environments and paving the way for dynamic regret guarantees in other OCO settings (e.g., delayed feedback or memory constraints), where FTRL is often the framework of choice.

\subsection{Methodology and Contributions}
As noted earlier, certain forms of FTRL are equivalent to OMD, suggesting that dynamic regret guarantees should hold in these cases. The equivalence arises when the update minimizes the regularized linearized history, which is the starting point of this paper. Specifically, let  $r_t(\cdot)$  be a data-dependent strongly convex regularizer that evolves with  $t$, potentially depending on past costs and actions. Then, the standard linearized FTRL update is  
\begin{align}
    \label{eq:ftrl-update-general}
    \vec x_{t+1} = \argmin_{\vec{x}\in\mathcal{X}} \dtp{\vec{g}_{1:t}}{\vec x} + r_{1:t}(\vec x).
\end{align}
If $\mathcal{X} = \mathbb{R}^n$, and $r_t(\vec{x})$ is proximal\footnote{A proximal regularizer $r_t(\vec{x})$ is minimized at $\vec x_t$.}, this update is equivalent to the OMD update which uses $r_{1:t}(\cdot)$ as the mirror map. This is proven in \citep[Thm. 11]{mcmahan-survey17}, even for the more general case of composite costs.

Here, we consider the \emph{optimistic} version of this update. That is, we append the prediction $\ti f_{t+1}(\vec x)$ to the sum in \eqref{eq:ftrl-update-general}. This will later allow us to have problem-dependent bounds that are modulated by $E_T$. Secondly, we focus on compact sets $\mathcal{X}\subset\mathbb{R}^n$. A primary design choice in our method is to incorporate the set constraint as an additional indicator function to each prediction. This is equivalent to modeling each cost function as a composite function: $f_t(\vec{x}) + I_{\mathcal{X}}(\vec{x})$. The indicator part is then always assumed to be ``predicted" perfectly. Nonetheless, the linearization of the past \emph{composite} costs is now different. Namely, our proposed update becomes: 
\begin{multline}
    \vec x_{t+1}=\argmin_{\vec{x}} \dtp{\vec{p}_{1:t}}{\vec x}+ r_{1:t}(\vec x) 
    \\
    + \ti f_{t+1}(\vec{x}) + I_\mathcal{X}(\vec{x}), \label{eq:update-rule-paper}
\end{multline}
with the state vector $\vec{p}_{1:t}$ calculated as the aggregation of
\begin{align}
&\vec{p}_t = \vec g_t + \vec g_t^{I},\\
&\vec{g}_t \in \partial f_t(\vec x_t),\quad \vec{g}^{I}_t \in \partial I_\mathcal{X}(\vec x_{t}) = \cone(\vec{x}_{t}),
\end{align}
where $\cone(\vec{x})$ is the normal cone at $\vec x$, and is defined as
\[
\cone(\vec{x}) \doteq \{\vec{g} \big| \dtp{\vec g}{\vec y - \vec x} \leq 0, \forall \vec y \in \mathcal{X}\}.
\]

The simple yet key observation is that with the extra flexibility provided by $\vec{g}_{t}^I$, the state of the algorithm need not be the simple aggregation $\vec{g}_{1:t}$. Rather, some of the summands can be attenuated or \emph{pruned} by carefully selecting $\vec{g}_{t}^I$ from the cone $\cone(\vec{x}_t)$. 
This is possible since $\cone(\vec{x}_t)$ contains all (scaled) negative subgradients of the expression in \eqref{eq:update-rule-paper} when $\vec{x}_t$ lies on the boundary of $\mathcal{X}$ $\vec{x}_t \in \textbf{bd}(\mathcal{X})$ (from the optimality conditions for constrained problems, see e.g., \citep[Thm. 3.67]{beck-book}).
In words, when an iterate leaves the feasible set and is thus projected back, we can choose to prune the state that led to this situation and replace it with an \emph{alternative} state $\vec{p}_{1:t}$, that induces the same iterate but is smaller in norm. This construction is crucial, as we later show that the norm of the state is the key bottleneck for FTRL when competing with time-varying comparators. 
Fig.~\ref{fig:illu} illustrates how different state constructions behave upon a switch in cost direction; note that the FTRL update, with a fixed $r(\cdot)$ can be expressed as the projection of $\nabla r^\star(-\vec{g}_{1:t})$, where $r^\star(\cdot)$ is the conjugate of $r(\cdot)$ (see def. in Appendix \ref{app:dual-view}).

\begin{figure}
    \centering
    \includegraphics[width=0.475\textwidth]{./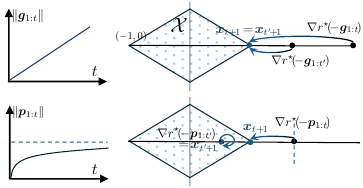}
\vspace{-1mm}\caption{
Effect of dual state size on iterates agility.
We consider two slots \(t<t'\), where gradients switch direction: 
\(\vec{g}_\tau\!=\!(-1, 0)\) for \(\tau\leq t\), and \(\vec{g}_\tau\!=\!(1, 0)\) for \(\tau\!>\!t\).  
\textbf{Top}: Standard FTRL accumulates a large state \(\vec{g}_{1:t}\), and the update via $\nabla r^\star(\vec{g}_{1:t})$ becomes insensitive to the change in direction; both \(\vec{g}_{1:t}\) and \(\vec{g}_{1:t'}\) map to the same iterate.  
\textbf{Bottom}: A well-maintained state \(\vec{p}_{1:t}\) remains bounded, and hence its mapping stays close to \(\mathcal{X}\), enabling \(\vec{x}_{t'}\) to start aligning quickly with the new better iterate direction $(-1,0)$.  
}
\label{fig:illu}
\vspace{-4mm}
\end{figure}

Our main contribution is formalizing this intuition to provide a dynamic regret analysis of Optimistic FTRL, leading to a new variant, \emph{Optimistic Follow the Pruned Leader} (\texttt{OptFPRL}). This variant achieves \emph{zero} dynamic regret when predictions are perfect. This is because the quality of predictions controls \emph{all} regret terms, including $P_T$. To our knowledge, this is the first variant to explicitly have such full dependence on prediction accuracy without oracle tuning. In the general case, \texttt{OptFPRL} also maintains the minimax optimal rate of $\mathcal{O}(\sqrt{(1+P_T)T})$ when $P_T$ is known. We also present a version that does not require prior knowledge of $P_T$, but assumes observability, while still maintaining dependence on prediction errors.

Next, since \texttt{OptFPRL} follows an FTRL-style analysis, it is well-suited to incremental regularization in the manner of AdaFTRL \citep{orabona2018scale}. In this scheme, we set the regularization \emph{recursively}, incrementing it by, roughly, the regret by time $t$ had we followed $\vec{x}_t, \forall \tau \leq t$ (local regret). This is different from the more common approach, which sets this regularization as an \emph{upper-bound} on this local regret. While the two are nearly equivalent in the worst case, the former allows for more refined bounds by applying only the minimal necessary regularization at each slot. Unlike prior optimistic dynamic regret algorithms, this style of regularization offers a more granular bound.

In summary, we investigate FTRL's dynamic regret over compact sets, and identify the unbounded growth of the state as the main bottleneck. By leveraging a simple and general mechanism, pruning past gradients, we gain a new degree of freedom that enables improved optimistic bounds.

\section{Related Work}
\textbf{Optimism and (dynamic) regret}. The pursuit of data dependence has been a central theme in online learning research since the introduction of AdaGrad \citep{duchi2011adaptive, streeter}. This has eventually led to the development of Optimistic FTRL (e.g., \citep{ mohri-aistats2016}). For a comprehensive survey of data-dependent online learning, we refer readers to \citep{JOULANI2020108}. However, these studies focus on static regret. Although the dynamic regret metric has been part of the OCO framework since its introduction, the first data-dependent dynamic regret bound only appeared in \citep{jadbabaie2015online}. The authors established a general bound, from which the result $\mathcal{O}(\sqrt{(P^*_T+1)(E_T+1)}$ can be derived, where $P^*_T$ is specific to the sequence of minimizers comparators $\{\vec{u}_t = \argmin_{\vec{x}}f_{t}(\vec{x})\}$, but in fact the bounds hold for any sequence whose path is \emph{observable} online. This is done via a specialized doubling trick for non-monotone quantities. By removing the assumption of observable path lengths (i.e., considering all sequences simultaneously), dynamic regret bounds typically lose the sublinear dependence on $P_T$. For instance, in \citep{jadbabaie2015online}, the bound becomes $\mathcal{O}((P_T+1)\sqrt{E_T+1})$. 

To address this, \citep{zhang2018adaptive} proposed a meta-learning framework that achieves $\mathcal{O}(\sqrt{T(1 + P_T)})$ for any sequence, which is shown to be minimax optimal. This framework was made data-dependent in \citep{zhao2020dynamic}, who obtained $\mathcal{O}(\sqrt{(V_T + P_T + 1)(1+P_T)}$ bound, among others. In this framework, multiple sub-learners are employed, each implementing Online Gradient Descent (OGD) with a different (doubling) learning rate, tuned for various ranges of $P_T$. These sub-learners are then ``tracked" by a variant of the Hedge meta-algorithm. This two-layer approach was later unified within the framework of Optimistic OMD and made more efficient in terms of the number of gradient queries \citep{zhao2024adaptivity}. 
Overall, knowing $P_T$ in advance, or assuming it is observable online, allows us to tune the regularization (or learning rate for OMD) with this knowledge, getting the better dependence $\sqrt{P_T}$. Without such assumptions, we can use the meta-learning framework, which learns online the best such regularization/learning rate under smoothness assumptions.

Setting aside the order of dependence on $P_T$ for a moment, we examine the quantity $E_T$. The aforementioned studies suffer the shortcoming hinted at in the introduction: to make $E_T$ small, we require knowledge of $\grd f_t(\vec{x}_t)$ at \emph{the start} of time slot $t$, which is generally not feasible\footnote{Unless all functions are originally linear. Note that linearization does not solve this issue, as it is performed \emph{after} the learner has committed to its action.} even if access to $\grd f_t(\cdot)$ was provided (i.e., perfect prediction). This is because $\vec{x}_t$ remains unknown at the start of $t$ — it is the very point being determined. While the bounds are still optimistic, they are not informative for the predictor design. In the literature, this ``cyclic" issue was only identified in \citep[Sec. 1.B]{scroccaro-composite23}. 
Thus, the authors introduce another related quantity $D_T$, defined earlier, and obtain $\mathcal{O}((1+P_T)(1+\sqrt{D_T}))$.\footnote{The authors also derive bounds based on ``temporal variation", but we do not consider this quantity here.} In the FTRL variants we propose, we do not have the cyclic issue because we do not require linearizing the predictions $\tilde f_t(\cdot)$.


The aforementioned studies are based on OMD, particularly on the ``two-steps" variant of \citep{chiang2012online}, where the learner selects two points in each iteration, an intermediate one and the actual action. This distinction from our work is not merely technical. For instance, in the related ``OCO with delay" setting, prior work has established guarantees for Optimistic FTRL and the one-step version of Optimistic OMD proposed in \citep[Sec. 7.2]{JOULANI2020108}. However, similar guarantees are not yet known for the two-step variant of OMD \citep{optimdelay21}, and it remains unclear whether the same analysis can be extended to them. 

Additionally, focusing again on optimism, a common limitation in the aforementioned studies is that the $P_T$ quantity\footnote{Or $\sqrt{P_T}$, assuming a known budget or an observable sequence with a doubling trick.} in the bounds is unaffected by the predictions. That is, the $P_T$ term may appear independently of any controllable quantity such as $E_T$, leading to bounds of $\mathcal{O}(P_T)$ even under perfect predictions (or zero gradient variation). While this may partly reflect limitations in the existing analyses rather than the algorithms themselves, the current bounds do not fully capture the interaction between prediction quality and path length. In contrast, \texttt{OptFPRL} analysis reveals that $P_T$, or more precisely, each $\|\vec{u}_{t+1} - \vec{u}_t\|$, is multiplied by the prediction error (e.g., $\epsilon_t$ or $\sqrt{E_t}$). Thus, the effect of $P_T$ can be attenuated when predictions are accurate.

\textbf{OMD, FTRL, and (linearized) history.} 
The interplay between OMD and FTRL has received increasing attention in recent years. \cite{fang2022online} studied a modified version of OMD under the static regret metric and showed its equivalence to Dual Averaging, a time-adaptive instance of FTRL, up to some terms in the normal cone. In this paper, we show that it is precisely these normal cone terms that become critical to achieving dynamic regret guarantees. For dynamic regret, \cite{jacobsen2022parameter} provides a comprehensive study via ``centered" OMD, which incorporates FTRL-like centering properties. Their focus is primarily on unbounded domains, where such centering is essential. While their work integrates FTRL features into OMD, we take the opposite approach: extending native FTRL results to dynamic settings. This approach reveals failure modes of FTRL in non-stationary environments and offers principled solutions. Specifically, we investigate how modulating FTRL's state, in bounded domains, leads to regret bounds that are fully modulated by prediction accuracy.

In fact, our update is mostly related to the following form
\begin{align}
    &\vec{x}_{t+1}=\argmin_{\vec{x}} \dtp{\vec{g}_{1:t}+\vec{g}^\psi_{1:t-1}}{\vec x}+r(\vec{x})+I_\mathcal{X}(\vec{x}),
    \\
    &\vec g^\psi_{t}\!\in\!\partial I_\mathcal{X}(\vec{x}_{t+1}),
\end{align}
which appears in \citep{mcmahan-survey17} under the name ``FTRL Greedy", and was studied under the \emph{static} regret metric. Beyond explicitly modeling the sub-gradient selection from the cone, which allows controlling when and what to prune, we extend this formulation by incorporating: $(i)$ function predictions, $(ii)$ prediction-adaptive regularization, and $(iii)$ recursive regularization inspired by AdaFTRL. These modifications require different analysis tools, especially under the \emph{dynamic} regret metric.

FTRL variants that reduce dependence on history have recently been proposed by  \citep{zhangdiscounted, ahn2024understanding} using \emph{geometric discounting} of all past costs, which is specifically designed for the metric of ``discounted" regret and its applications. Interestingly, however, \citet{ahn2024understanding} also observes that such manipulation of FTRL's state can endow it with certain (not necessarily optimal) dynamic regret guarantees. Though their method differs, the core insight aligns with ours: limiting FTRL’s memory is essential for adapting to non-stationarity.

We also note the existence of other approaches for modeling ``optimism" in OCO, other than seeking $E_T$ dependence, such as the Stochastically Extended Adversary (SEA)  model \citep{sarah-between} that was studied in \cite{chen2024optimistic} for OMD with dynamic comparators. In addition, there exists the ``correlated hints" interpretation of optimism \citep{bhaskara2023competing}, where the prediction quality is measured by their correlation with the actual cost. This later line of work assumes \emph{strongly} convex domains and obtains a $\mathcal{O}((1+P_T)\log^2(T)\sqrt{B})$ bound, where $B$ is the number of slots without correlation. Lastly, an interesting reduction from dynamic to static settings is explored in \citep{jacobsen2024equivalence}, which revealed a fundamental tradeoff between gradient variability and the comparator sequence's complexity measures (e.g., the path length considered here).

\begin{algorithm}[t]
	\caption{\small{Optimistic Follow the Pruned Leader} (\texttt{OptFPRL})}
	\label{alg:opt-fprl}
    \textbf{Input}: Compact set $\mathcal{X}$, strategy for selecting  $\sigma_t, \forall t.$ \\
    \textbf{Output}: $\{\boldsymbol{x}_t\}_{t=1}^T$.
    \begin{algorithmic}[1] 
    \STATE set $\vec x_1 = \arg\min_{x} \ti f_1({\vec x}) + I_{\mathcal{X}}(x)$
        \FOR{$t=1,2,\ldots,T$}
            \STATE Use action $\vec{x}_t$
            \STATE \textit{*($f_t(\cdot)$ is revealed)*}
            \STATE Incur cost $f_t(\vec x_t)$ and compute $\vec g_t \in \partial f_t(\vec{x}_t)$
            \STATE Compute $\vec{\ti g}_t\in\partial\ti f_t(\vec x_t)$  and the error $\epsilon_t = \|\vec g_t - \vec{\ti g}_t\|$
            \STATE Calculate the parameter $\sigma_t$ using $\epsilon_t$.
            \STATE Set $\vec g^I_t$ according to \eqref{eq:pruning-cond-n}.
            \STATE Compute the (pruned) vector $\vec p_t = \vec g_t + \vec g^I_t$.
            \STATE Receive prediction $\ti f_{t+1}(\cdot)$.
            \STATE Compute $\vec x_{t+1}^{\text{uc}}$ by solving \eqref{eq:update-rule-paper} without $I_{\mathcal{X}}(\cdot)$.
            \STATE Set $\vec x_{t+1}=\Pi(\vec x_{t+1}^{\text{uc}})$.
        \ENDFOR
    \end{algorithmic}
\end{algorithm}

\section{\texttt{OptFPRL}}
In this section, we present the proposed algorithm and characterize its dynamic regret. The routine of $\texttt{OptFPRL}$ is described in Alg. \ref{alg:opt-fprl}. It takes as input the compact convex set $\mathcal{X}$, along with a strategy for determining the regularization parameters $\sigma_t$ based on information available up to and including slot $t$.

The initial action is based on the first prediction. Then, upon executing each $\vec{x}_{t}$ (line $3$), the true cost $f_t(\vec{x})$ is revealed and the subgradient $\vec{g}_t$ is computable (line $5$). In line $6$, we evaluate the prediction error $\epsilon_t$, which is used in line $7$ to update the regularization parameter $\sigma_t$ of the regularizer $r_t(\cdot)$ according to some pre-determined strategy. 

We use scaled Euclidean regularizers of the form
\begin{align}
        &\ r_t(\vec x) \doteq \frac{\sigma_{t}}{2} \|\vec x\|^2, \forall t \geq 1. \label{eq:regs}
\end{align}
The regularizers are set such that $r_{1:t}$ is $1$-strongly convex w.r.t. the scaled Euclidean norm $\|\cdot\|_{t} \doteq \sqrt{\sigma_{1:t}}\|\cdot\|$ whose dual norm is $\|\cdot\|_{t,*}= \nicefrac{1}{\sqrt{\sigma_{1:t}}}\|\cdot\|$, hence we refer to $\sigma_t$ also as the ``strong convexity" parameters.

Next, we select $\vec{g}_t^I$ according to the following: for $t=1$, set $\vec{g}_1^I = -\vec{g}_1$ if $\epsilon_1 = 0$, and $\vec{g}_1^I = 0$ otherwise. For all $t \geq 2$:
\begin{align}
\label{eq:pruning-cond-n}
 \vec g_t^{I} = 
    \begin{cases}
       -(\vec p_{1:t-1} + \vec{\ti g}_{t} + \sigma_{1:t-1} \vec x_{t}) & \text{if}\ \vec x_{t}^{\text{uc}} \notin \mathcal{X} 
       \\
        0 & \text{otherwise,}
    \end{cases}
\end{align}
where $\vec{x}_t^{\text{uc}}$ is the \underline{u}n\underline{c}onstrained iterate obtained by solving \eqref{eq:update-rule-paper} in $\mathbb{R}^n$ (i.e., without the indicator function).

To see why we can always set $\vec g_t^{I}$ as such, note that when $\vec x_{t}^{\text{uc}} \notin \mathcal{X}$, then $\vec x_t \in \textbf{bd}(\mathcal{X})$ (the projection of a point outside a compact convex set lies on the boundary). Hence, $\cone(\vec{x}_t)$ contains vectors other than $0$. In particular, it contains $-(\vec p_{1:t-1} + \vec{\ti g}_{t} + \sigma_{1:t-1} \vec x_{t})$ and in fact all positive multiples of this vector. This follows directly from the optimality condition for the constrained iterate in \eqref{eq:update-rule-paper}, (see, e.g., \citep[Thm. 3.67]{beck-book}), and the definition of the normal cone. For the second case, $0$ is always a valid subgradient of the indicator function since, in this case, $\vec x_t ^{\text{uc}}= \vec x_t \in \mathcal{X}$.

Intuitively, this choice of linearization ensures an alternative state $\vec{p}_{1:t}$ (instead of $\vec{g}_{1:t})$. The former shall stop growing in norm compared to the latter beyond a certain $t$, since the action will hit the boundary of $\mathcal{X}$, starting the pruning thereafter. This will be formalized in the analysis, where we show that $\|\vec p_{1:t}\|$ cannot grow faster than the regularization, which is added by us optimistically (i.e., $\propto \sqrt{E_t}$). This is not true in general for an arbitrary choice of linearization, particularly for the default choice of using only $\vec{g}_t$. 

We note that $\vec{g}_t^I$ need not be non-zero at every time slot $t$ where $\vec{x}_t^{\text{uc}} \notin \mathcal{X}$. Instead, pruning of the accumulated state can be delayed for a fixed number of steps $k$, resulting in a \emph{hybrid} state of the form $\vec{p}_{1:{k-1}} + \vec{g}_{k:t}$. This is discussed further in Appendix~\ref{app:dual-view}.

In lines $10, 11$, a prediction for the next cost is received, and the next unconstrained iterate is updated. Lastly, a feasible point is then recovered via a Euclidean\footnote{Technically, the projection is w.r.t.  $\|\cdot\|_t$. Since $\|\cdot\|_t$ is a scaled Euclidean, the result is the same.} projection into $\mathcal{X}$.

Next, we characterize the dynamic regret of \texttt{OptFPRL} under different regularization strategies.

\subsection{Dynamic Regret of \texttt{OptFPRL}}

In this section, we explore different strategies for setting the regularization parameters $\sigma_t$, and analyze the resulting dynamic regret bounds. We begin by outlining the general setting shared across all regularization strategies.

\textbf{Settings 1.}
Let $\mathcal{X} \subset \mathbb{R}^d$ be a compact, convex set such that $\|\vec{x}\| \leq R$ for all $\vec{x} \in \mathcal{X}$. Let $\{f_t(\cdot), \tilde{f}_t(\cdot)\}_{t=1}^T$ be any sequence of $L$-Lipschitz convex functions. Define the path length $P_T$ as in \eqref{eq:path-length}, the cumulative prediction error $E_T$ as in \eqref{eq:pred-error}, and the hybrid term $H_T \doteq \sum_{t=1}^{T-1} \epsilon_t \dt$.

\subsubsection{$P_T$-agnostic Regularization}
The first regularization strategy we consider is the standard optimistic one, which sets $\sigma_t$ such that $\sigma_{1:t}\propto \sqrt{E_T}$:\vspace{-1mm}
\begin{equation}
    \begin{aligned}
    &\sigma=\frac{1}{4R},\quad \sigma_1=\sigma \epsilon_1,
    \\
    & \sigma_t=\sigma \left(\sqrt{E_t}-\sqrt{E_{t-1}}\right),  \forall t \geq 2.\label{eq:regs-params-1} 
\end{aligned}
\end{equation}
\begin{theorem}
\label{thm:1}
Under Settings 1, Alg. \ref{alg:opt-fprl} run with the regularization strategy in \eqref{eq:regs-params-1} produces points $\{\vec{x}_t\}_{t=1}^T$ such that, for any $T$, the dynamic regret $\mathcal{R}_T$ satisfies:
\begin{align}
    \mathcal{R}_T &\leq  \big(5.8R+ (1/2)P_T\big)\sqrt{E_T} + H_T \quad \label{eq:th1}
    \\
    &=  \mathcal{O}\left((1+P_T)\sqrt{E_T}\right). 
\end{align}
\end{theorem}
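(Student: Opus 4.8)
The plan is to follow the standard FTRL ``be-the-leader'' analysis, but adapted to (i) composite costs with the indicator function, (ii) the pruned state $\vec p_{1:t}$ in place of $\vec g_{1:t}$, and (iii) optimistic prediction terms. First I would write the per-step regret decomposition. Since each $\vec x_t$ minimizes the regularized linearized history plus the current prediction $\ti f_t(\cdot)+I_\X(\cdot)$, I treat the sequence as FTRL over the surrogate linear losses $\dtp{\vec p_t}{\cdot}$ (which encode both the true subgradient $\vec g_t$ and the normal-cone term $\vec g_t^I$). The key point is that adding any $\vec g_t^I\in\cone(\vec x_t)$ does not change the FTRL iterate — this is exactly the pruning identity in \eqref{eq:pruning-cond-n} — so $\vec x_{t+1}$ solving \eqref{eq:update-rule-paper} with state $\vec p_{1:t}$ equals the iterate we would get with the ``honest'' state. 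At the same time, because $\vec g_t^I\in\cone(\vec x_t)$ and $\vec u_t\in\X$, we have $\dtp{\vec g_t^I}{\vec x_t-\vec u_t}\ge 0$, so the surrogate regret $\sumT\dtp{\vec p_t}{\vec x_t-\vec u_t}$ upper bounds the linearized true regret $\sumT\dtp{\vec g_t}{\vec x_t-\vec u_t}$, which in turn (convexity) upper bounds $\mathcal R_T$. So it suffices to bound the surrogate regret.

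Next I would invoke the strengthened FTRL lemma (the ``prediction-shifted'' or optimistic FTRL bound) for time-varying regularizers $r_{1:t}$ and moving comparators. Decompose $\dtp{\vec p_t}{\vec x_t-\vec u_t}=\dtp{\vec p_t-\ti{\vec g}_t}{\vec x_t-\vec x_{t+1}} + \big(\text{terms that telescope against }r_{1:t}(\vec u_t)\big)$ plus the path-length contributions. Concretely, the plan is: (a) the ``stability'' term $\dtp{\vec p_t-\ti{\vec g}_t}{\vec x_t-\vec x_{t+1}}\le \|\vec p_t-\ti{\vec g}_t\|_{t,*}\|\vec x_t-\vec x_{t+1}\|_t$, then bound $\|\vec x_t-\vec x_{t+1}\|_t$ by $\|\vec p_t-\ti{\vec g}_t\|_{t,*}$ using $1$-strong convexity of $r_{1:t}$ w.r.t.\ $\|\cdot\|_t$ together with an optimistic argument (the iterate $\vec x_{t+1}$ is itself the FTRL minimizer that ``knows'' $\ti{\vec g}_{t+1}$ but not $\vec g_t$, so the relevant gradient gap is $\vec p_t-\ti{\vec g}_t$). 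Here $\vec p_t-\ti{\vec g}_t = \vec g_t-\ti{\vec g}_t+\vec g_t^I$, and I will need to argue $\|\vec g_t-\ti{\vec g}_t+\vec g_t^I\|$ is controlled — this is where the bound $\|\vec p_{1:t}\|\lesssim \sigma_{1:t}R$ (the ``state stays synchronized with the regularization'' claim flagged in the text) enters, giving per-step bound $\approx \|\vec p_t-\ti{\vec g}_t\|^2/\sigma_{1:t}$ with the right constants and $\sum_t \|\vec g_t-\ti{\vec g}_t\|^2/\sqrt{E_t}=\sum_t\epsilon_t^2/\sqrt{E_t}=\Theta(\sqrt{E_T})$. (b) The comparator-movement term: the time-varying part of FTRL against moving $\vec u_t$ produces $\sum_t r_t(\vec u_t)$-type terms plus cross terms of the form $\|\vec p_{1:t}\|\cdot\|\vec u_{t+1}-\vec u_t\|$ or $\sigma_{1:t}R\,\dt$; using $\|\vec p_{1:t}\|\lesssim \sigma_{1:t}R$ and $\sigma_{1:t}=\sigma\sqrt{E_t}$ with $\sigma=1/(4R)$, each $\dt$ gets multiplied by $\tfrac14\sqrt{E_t}\le\tfrac14\sqrt{E_T}$, summing to $\le \tfrac14 P_T\sqrt{E_T}$ (the $(1/2)P_T$ in \eqref{eq:th1} absorbs a factor of $2$ from two such cross terms). (c) The regularization penalty $r_{1:T}(\vec u_T)\le \tfrac{\sigma_{1:T}}{2}R^2=\tfrac{R}{8}\sqrt{E_T}$, and the initial-step term $\propto R\sqrt{E_T}$ from $t=1$. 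Collecting (a)+(b)+(c) with explicit constants yields $(5.8R+\tfrac12 P_T)\sqrt{E_T}+H_T$, where the hybrid term $H_T=\sum_t\epsilon_t\dt$ appears as the unavoidable cross term between prediction error and comparator drift.

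The main obstacle — and the genuinely new ingredient — is step (a)'s dependence on the claim that the pruned state stays bounded by the regularization, i.e.\ $\|\vec p_{1:t}\|\le c\,\sigma_{1:t}R$ for a suitable constant. This needs an inductive argument: if $\vec x_{t+1}^{\text{uc}}\in\X$ then $\vec p_{1:t}=-\sigma_{1:t}\vec x_{t+1}-\ti{\vec g}_{t+1}$ by the unconstrained optimality condition, so $\|\vec p_{1:t}\|\le\sigma_{1:t}R+L$; if $\vec x_{t+1}^{\text{uc}}\notin\X$ then pruning at step $t+1$ will reset the state, and the case where pruning has not yet triggered must be handled by carrying the bound from the last pruned step forward and controlling the at-most-$L$-per-step growth against the at-least-proportional growth of $\sigma_{1:t}$ (this is exactly why $\sigma$ is tied to $\sqrt{E_t}$ and why the $1/(4R)$ constant is chosen). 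Making this induction tight enough to get the stated numerical constants, while simultaneously threading the optimistic ``self-bounding'' step for $\|\vec x_t-\vec x_{t+1}\|_t$, is the delicate part; everything else is bookkeeping with strong convexity, Cauchy–Schwarz, and the telescoping sum $\sum_t(\sqrt{E_t}-\sqrt{E_{t-1}})^{-1}\epsilon_t^2$-type estimates standard in AdaGrad-style analyses.
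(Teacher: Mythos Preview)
Your overall architecture is right --- linearize through the composite $j_t=f_t+I_\X$, use the surrogate $\sum_t\dtp{\vec p_t}{\vec x_t-\vec u_t}$, split into a stability piece, a comparator-movement piece, and a regularization penalty, and invoke the bounded-state lemma $\|\vec p_{1:t}\|\le R\sigma_{1:t-1}+\epsilon_t$ for the comparator-movement piece. Parts (b) and (c) match the paper. The gap is entirely in (a), and it is a real one.

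First, a conceptual slip: ``adding any $\vec g_t^I\in\cone(\vec x_t)$ does not change the FTRL iterate, so $\vec x_{t+1}$ with state $\vec p_{1:t}$ equals the iterate with the honest state $\vec g_{1:t}$'' is false. Pruning \emph{does} change future iterates; that is the whole point of the mechanism. What \emph{is} true (and what the paper isolates as a separate lemma) is that the \emph{current} iterate $\vec x_t$ remains the minimizer of $h_{0:t-1}(\cdot)+\dtp{\ti{\vec g}_t+\vec g_t^I}{\cdot}$, because $\vec g_t^I$ lies in the normal cone at $\vec x_t$. These are different statements.

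Second, and this is the actual gap: your stability bound is $\|\vec p_t-\ti{\vec g}_t\|_{t,*}^2=\|\vec g_t-\ti{\vec g}_t+\vec g_t^I\|_{t,*}^2$, and you propose to control $\|\vec g_t^I\|$ via the state bound $\|\vec p_{1:t}\|\lesssim\sigma_{1:t}R$. That will not work. The state bound controls the \emph{cumulative} $\vec p_{1:t}$, not the individual $\vec g_t^I$; and on a pruning step, $\vec g_t^I=-(\vec p_{1:t-1}+\ti{\vec g}_t+\sigma_{1:t-1}\vec x_t)$ has norm of order $\sigma_{1:t-1}R\sim\sqrt{E_t}$, so your per-step stability cost becomes $\sim E_t/\sqrt{E_t}=\sqrt{E_t}$, which does not sum to $O(\sqrt{E_T})$. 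The paper's fix is exactly the observation above: since $\vec x_t$ minimizes $h_{0:t-1}(\cdot)+\dtp{\ti{\vec g}_t+\vec g_t^I}{\cdot}$, you compare this function to $h_{0:t-1}(\cdot)+\dtp{\vec p_t}{\cdot}=h_{0:t-1}(\cdot)+\dtp{\vec g_t+\vec g_t^I}{\cdot}$. Both include $\vec g_t^I$, so the perturbation is exactly $\dtp{\vec g_t-\ti{\vec g}_t}{\cdot}$, and the standard strong-convexity argument yields $\tfrac12\|\vec g_t-\ti{\vec g}_t\|_{t-1,*}^2=\epsilon_t^2/(2\sigma_{1:t-1})$ with no $\vec g_t^I$ contamination. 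In short: the state bound is the right tool for (b), not for (a); for (a) you need the ``$\vec x_t$ is still the minimizer after adding $\vec g_t^I$'' observation, applied to eliminate $\vec g_t^I$ from the stability comparison rather than to equate pruned and unpruned iterates.
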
 
\textbf{Remarks.}
We observe from \eqref{eq:th1} that all terms in the regret bound are modulated by the prediction errors. When the predictions are perfect, the bound reduces to zero (regardless of $P_T$), and it gracefully degrades as prediction errors grow. Note that by Cauchy-Schwarz and the boundness of $\mathcal{X}$, the hybrid term can be bounded as $H_T\leq \sqrt{2R}\sqrt{E_TP_T}$. Therefore, the overall bound is never worse than $\mathcal{O}((1+P_T)\sqrt{E_T})$, matching known OMD results when $P_T$ is not accounted for, and tighter when predictions are accurate. In the static comparator case ($P_T=0$), the bound recovers the standard $\mathcal{O}(\sqrt{E_T})$ result.

\subsubsection{\!Regularization with Prior $P_T$ Knowledge}
In many cases, $P_T$ can be provided to the algorithm a priori as a measure of the comparator's complexity (e.g., \citep{si2024online,yang2016tracking,besbes2015non}). That is, we wish to compete against any sequence whose path length is at most the provided value of $P_T$. With this given target, we can adjust the regularization to account for the expected nonstationarity and obtain better bounds, as outlined next: 
\begin{equation}
    \begin{aligned}
    & \sigma = \frac{1}{2\sqrt{2RP'_T}},\quad  P'_T\doteq 2R + P_T,\quad \sigma_1=\sigma \epsilon_1,
    \\
    & \sigma_t=\sigma \left(\sqrt{E_t}-\sqrt{E_{t-1}}\right),  \forall t \geq 2.\label{eq:regs-params-2-CR} 
\end{aligned}
\end{equation}

\begin{theorem}
\label{thm:2}
Under Settings 1, Alg. \ref{alg:opt-fprl} run with the regularization strategy in \eqref{eq:regs-params-2-CR}
produces points $\{\vec{x}_t\}_{t=1}^T$ such that, for any $T$, the dynamic regret $\mathcal{R}_T$ satisfies:
\begin{align}
    &\mathcal{R}_T \leq\bigg(\!4\sqrt{2R^2\!+\!P_T}+\!\frac{R}{8}\!+\sqrt{\frac{RP_T}{2}}\bigg)\sqrt{E_T}  +H_T\label{eq:th2}
    \\
    &=  \mathcal{O}\left((1+\sqrt{P_T})\sqrt{E_T}\right).
\end{align}
\end{theorem}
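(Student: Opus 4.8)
The plan is to follow exactly the same FTRL-style analysis that underpins Theorem~\ref{thm:1}, only changing the scale of the regularization constant $\sigma$ so as to balance the $P_T$-dependent term against the regularization penalty. First I would write down the generic dynamic-regret decomposition for the update~\eqref{eq:update-rule-paper}: a standard FTRL/strong-convexity argument (the "be-the-leader" / "be-the-pruned-leader" telescoping, together with the optimistic cancellation of $\ti{\vec g}_{t+1}$ against $\vec p_{t+1}$) gives a bound of the form
\begin{align}
\mathcal{R}_T \;\le\; r_{1:T}(\vec u_T) \;-\; r_{1:T}(\vec u_1) \;+\; \sum_{t=1}^T \frac{\epsilon_t^2}{2\sigma_{1:t}} \;+\; (\text{comparator-shift terms}) \;+\; H_T, \nonumber
\end{align}
where the comparator-shift terms arise from replacing the fixed comparator by the moving sequence $\{\vec u_t\}$ and are controlled by $\|\vec p_{1:t}\|\cdot\|\vec u_{t+1}-\vec u_t\|$. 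The crucial structural input — already established in the machinery behind Theorem~\ref{thm:1} and the surrounding discussion of pruning — is that the pruned state stays bounded, $\|\vec p_{1:t}\| \lesssim \sigma_{1:t} R + \sqrt{E_t}$ (this is precisely the property~\eqref{eq:pruning-cond-n} is engineered to guarantee: once an iterate hits $\mathbf{bd}(\mathcal X)$ the state is reset to a vector of norm $O(\sigma_{1:t}R)$, and between prunings it grows only at the optimistic rate $\sqrt{E_t}$). Feeding this in, the comparator-shift contribution becomes $O\!\big((\sigma_{1:T}R + \sqrt{E_T})\,P_T\big)$, while the regularizer bias is $r_{1:T}(\vec u_T) \le \tfrac{\sigma_{1:T}}{2}R^2$ and the optimistic sum $\sum_t \epsilon_t^2/(2\sigma_{1:t})$ telescopes to $O(\sqrt{E_T}/\sigma)$ given the choice $\sigma_t = \sigma(\sqrt{E_t}-\sqrt{E_{t-1}})$, since $\sigma_{1:t} = \sigma\sqrt{E_t}$.

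The second step is the optimization over $\sigma$. Collecting the $\sigma$-dependent pieces, the bound reads roughly
\begin{align}
\mathcal{R}_T \;\lesssim\; \sigma\sqrt{E_T}\,\big(R^2 + P_T\big) \;+\; \frac{\sqrt{E_T}}{\sigma} \;+\; \sqrt{E_T}\,P_T^{\text{lower order}} \;+\; H_T. \nonumber
\end{align}
Minimizing $\sigma (R^2+P_T) + 1/\sigma$ over $\sigma$ gives $\sigma \asymp 1/\sqrt{R^2+P_T}$, which is exactly the prescription~\eqref{eq:regs-params-2-CR} with $P'_T = 2R + P_T$ (the constants $2R$, the factor $2\sqrt 2$, and the split into $4\sqrt{2R^2+P_T} + R/8 + \sqrt{RP_T/2}$ come out of carrying the explicit constants through this balancing, together with bounding the residual $\|\vec p_{1:t}\|$ contribution using $\|\vec p_{1:t}\| \le \sigma_{1:t}R + \sqrt{E_t}$ and $\sqrt{E_t} \le \sqrt{E_T}$). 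The $\sqrt{RP_T/2}\,\sqrt{E_T}$ summand is precisely the $\sqrt{E_T}\cdot\sqrt{E_T}$-free part of the $\sqrt{E_T}\,P_T$ term that survives after the $\sigma$-optimization, i.e. the piece multiplied by the $\sqrt{E_t}$ (rather than the $\sigma_{1:t}R$) part of the state bound. Finally, since $\sqrt{R^2+P_T} = O(1+\sqrt{P_T})$ and likewise $\sqrt{RP_T} = O(\sqrt{P_T})$, and $H_T \le \sqrt{2R}\sqrt{E_T P_T} = O(\sqrt{P_T}\sqrt{E_T})$ by Cauchy–Schwarz as in the remarks, the whole bound collapses to $\mathcal{O}((1+\sqrt{P_T})\sqrt{E_T})$, giving the second line of~\eqref{eq:th2}.

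The main obstacle, and the step I would spend the most care on, is establishing the boundedness of the pruned state $\|\vec p_{1:t}\|$ with the right dependence on $\sigma_{1:t}$ — i.e. showing that pruning actually "synchronizes" the state with the iterate so that $\|\vec p_{1:t}\| \le \sigma_{1:t}R + \sqrt{E_t}$ (or whatever the exact constant is). This is the heart of why FTRL works here at all, and it requires a careful inductive argument: assuming the bound at $t-1$, analyze the two cases of~\eqref{eq:pruning-cond-n} — if $\vec x_t^{\text{uc}}\in\mathcal X$ no pruning occurs and one must argue the state grew by at most $\epsilon_t$ plus a regularization increment (using the optimistic cancellation and that $\vec x_t$ was the actual unconstrained minimizer), while if $\vec x_t^{\text{uc}}\notin\mathcal X$ the state is reset to $-(\vec p_{1:t-1}+\ti{\vec g}_t + \sigma_{1:t-1}\vec x_t) + \vec g_t$ whose norm must be shown to be $O(\sigma_{1:t}R + \epsilon_t)$ using $\vec x_t\in\mathbf{bd}(\mathcal X)$, $\|\vec x_t\|\le R$, and the normal-cone optimality characterization of $\vec x_t$ as the projection. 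Everything else — the telescoping, the optimistic algebra, and the final $\sigma$-balancing — is routine once this norm control is in hand, and indeed is essentially inherited verbatim from the proof of Theorem~\ref{thm:1} with $\sigma$ left as a free parameter to be tuned at the end.
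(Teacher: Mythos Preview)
Your overall plan matches the paper's proof: reuse the Theorem~\ref{thm:1} machinery (Lemma~\ref{lemma:strong-dyn-opt-ftrl}, the bounds on parts \textbf{(I)} and \textbf{(II)}, and the state-boundedness Lemma~\ref{lemma:state-is-bounded}) with $\sigma$ left free, then set $\sigma = 1/(2\sqrt{2RP'_T})$ at the end. The skeleton is right, but one detail in your sketch would actually break the argument.

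The state bound is $\|\vec p_{1:t}\| \le R\sigma_{1:t-1} + \epsilon_t$ (Lemma~\ref{lemma:state-is-bounded}), \emph{not} $\sigma_{1:t}R + \sqrt{E_t}$ as you write in the second paragraph. You in fact describe the correct per-step growth ``at most $\epsilon_t$'' in your final paragraph, so you have the right lemma in mind; but in the middle you feed in $\sqrt{E_t}$, and that weaker form is fatal here. Substituted into \textbf{(II)} it produces a term $\sum_{t}\sqrt{E_t}\,\|\vec u_{t+1}-\vec u_t\| \le \sqrt{E_T}\,P_T$ that is \emph{independent of $\sigma$} and hence cannot be reduced by choosing $\sigma \asymp 1/\sqrt{P'_T}$ --- you would be stuck at the $O(P_T\sqrt{E_T})$ rate of Theorem~\ref{thm:1}. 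With the correct $\epsilon_t$ bound, that piece sums to exactly $H_T$ (which is why $H_T$ appears in your decomposition at all); and it is instead the $2R\sigma_{1:t}$ part of \textbf{(II)} that, after writing $\sigma_{1:t}=\sigma\sqrt{E_t}$ and plugging in $\sigma$, becomes $2R\sigma\sqrt{E_T}P_T \le \sqrt{R/2}\,\sqrt{E_T}\,P_T/\sqrt{P'_T} \le \sqrt{RP_T/2}\,\sqrt{E_T}$. So your attribution of the $\sqrt{RP_T/2}\,\sqrt{E_T}$ summand is reversed. A minor cosmetic point: the regularizer contribution in Lemma~\ref{lemma:strong-dyn-opt-ftrl} is $\sum_t r_t(\vec u_t)$, not $r_{1:T}(\vec u_T)-r_{1:T}(\vec u_1)$; it still bounds to $\tfrac{R^2}{2}\sigma_{1:T}$.
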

\textbf{Remarks.} This regularization strategy preserves the full modulation by prediction errors while also matching the minimax-optimal bound $R_T = \Omega (\sqrt{(1+P_T) T})$ \citep[Thm. 2]{zhang2018adaptive} even when all predictions fail. This type of bound is new for FTRL-style algorithms. Furthermore, in its full dependency on $E_T$ (i.e., without $P_T$ or constant terms that are independent of $E_T$), it represents a refinement even compared to OMD.  More broadly, access to a prior bound on $P_T$ enables tailoring the regularization to the expected nonstationarity, allowing us to safeguard the minimax rate while still adapting to prediction accuracy.

\subsubsection{Regularization with Unknown $P_T$}
If $P_T$ is unknown but observable, it can be estimated online alongside $E_t$. However, since $\sqrt{\nicefrac{E_t}{ P_t}}$ is no longer necessarily monotonic, we should safeguard against negative regularization coefficients. First, define the augmented seen path length at $t$ as: 
\begin{align}
    P'_t &\doteq 2R + P_t = 2R + \sum_{\tau=1}^{t-1} \|\vec u_{\tau+1} -  \vec u_{\tau}\|. 
\end{align}
We adopt a regularization strategy that attempts to track $\sqrt{E_T/P_T}$ by its online estimate $\sqrt{E_t/P_t}$, while using a $\max(\cdot,\cdot)$ operator to ensure non-negative regularization.
\begin{equation}
    \begin{aligned}
    &\sigma = \frac{1}{2\sqrt{2R}}, \quad \sigma_1=\frac{\sigma\epsilon_1}{\sqrt{P'_1}},
    \\
    &\sigma_t\! =\! {\sigma}\max\!\left(\!0,\sqrt{\frac{E_t} {P'_t}}\!-\! \sqrt{\frac{E_{t-1}} {P'_{t-1}}}\right), \forall t\geq2.  \label{eq:regs-params-3}
\end{aligned}
\end{equation}
\begin{theorem}
\label{thm:3}
Under Settings 1, Alg. \ref{alg:opt-fprl} run with the regularization strategy in \eqref{eq:regs-params-3} produces points $\{\vec{x}_t\}_{t=1}^T$ such that, for any $T$ the dynamic regret $\mathcal{R}_T$ satisfies:
\begin{align}
    &\mathcal{R}_T \leq 5.5\sqrt{R}\sqrt{E_TP'_T}+ H_T  + \sqrt{\nicefrac{R}{2}}\ A_T\label{eq:th3} 
    \\
    &=  \mathcal{O}\left((1+\sqrt{P_T})\sqrt{E_T} + {A_T}\right), \text{where}
    \\
    &A_T\doteq \sum_{t=1}^{T} \sum_{\tau\in[t]^+} \left(\sqrt{\frac{E_{\tau-1}}{P'_{t-1}}}-\sqrt{\frac{E_\tau}{P'_\tau}}\right) \|\vec u_{t+1} - \vec u_t\|,
    \\
    &[t]^+ =\left\{2 \leq \tau \leq t \ \bigg| \sqrt{\frac{E_{\tau-1}} {P'_{\tau-1}}} -  \sqrt{\frac{E_\tau} {P'_\tau}} \geq 0\right\}.
\end{align}
\end{theorem}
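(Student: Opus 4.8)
The plan is to run the same dissection of $\mathcal R_T$ used for Theorems~\ref{thm:1} and~\ref{thm:2}, and then redo only the two places the regularization schedule enters: the optimistic ``stability'' sum and the comparator/regularization penalty. The shared decomposition (valid for any non-negative increments $\sigma_t$) bounds $\mathcal R_T$ by three pieces, up to a constant-in-$T$ first-step term of order $\sigma_1 R^2$: (i) a stability term which, by $1$-strong convexity of $r_{1:t}$ in $\|\cdot\|_t$ together with the optimistic cancellation $\vec g_t - \vec{\ti g}_t$, is at most $\sum_t \tfrac{\epsilon_t^2}{2\sigma_{1:t}}$; (ii) the prediction-modulated path term $H_T=\sum_t \epsilon_t\dt$, which falls out of the \emph{pruned} state bound $\|\vec p_{1:t}\|\le \epsilon_t + R\,\sigma_{1:t-1}$ that the rule~\eqref{eq:pruning-cond-n} enforces the moment an iterate hits $\textbf{bd}(\mathcal X)$; and (iii) a path-weighted regularization penalty of order $R\sum_t\sigma_{1:t}\dt$, carrying both the regularizer against the moving comparator and the $\sigma_{1:t-1}R$ remnant of the pruned state. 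Everything then reduces to plugging schedule~\eqref{eq:regs-params-3} into (i) and (iii).

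The one genuinely new ingredient is a two-sided control of the cumulative regularization. Writing $s_\nu\doteq\sqrt{E_\nu/P'_\nu}$ and noting $\sigma_1/\sigma=s_1$, the elementary identity $\max(0,x)=x+\max(0,-x)$, applied termwise to the increments in~\eqref{eq:regs-params-3}, turns the truncated increments into the true (telescoping) increments plus the accumulated downward variation of $\{s_\nu\}$:
\[
\frac{\sigma_{1:t}}{\sigma}\;=\;s_t\;+\!\!\sum_{\tau\in[t]^+}\!\!\big(s_{\tau-1}-s_\tau\big)\;\ge\;s_t\,.
\]
The inequality $\sigma_{1:t}\ge\sigma s_t=\sigma\sqrt{E_t/P'_t}$ is what prevents the truncation from hurting the stability term, while the excess over $\sigma s_t$ is exactly the downward-variation sum that, once it is carried into (iii), multiplied by $R$ and weighted by the comparator increments $\dt$, becomes $\sqrt{R/2}\,A_T$ — the prefactors matching through $2R\sigma=\sqrt{R/2}$ with $\sigma=1/(2\sqrt{2R})$.

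It then remains to bound the ``ideal'' $s_t$-parts. For (i): $\sigma_{1:t}\ge\sigma s_t$ and monotonicity of $E_t$ and $P'_t$ give $\sum_t\tfrac{\epsilon_t^2}{2\sigma_{1:t}}\le\tfrac{\sqrt{P'_T}}{2\sigma}\sum_t\tfrac{\epsilon_t^2}{\sqrt{E_t}}\le\tfrac1\sigma\sqrt{E_TP'_T}=2\sqrt{2R}\,\sqrt{E_TP'_T}$ via the standard $\sum_t a_t/\sqrt{a_{1:t}}\le 2\sqrt{a_{1:T}}$ inequality. For the $s_t$-part of (iii): $R\sigma\sum_t s_t\dt\le R\sigma\sqrt{E_T}\sum_t \dt/\sqrt{P'_t}$, and since $P'_{t+1}=P'_t+\dt$ with $P'_t$ increasing and $\dt\le 2R\le P'_t$, one gets $\sum_t\dt/\sqrt{P'_t}=O(\sqrt{P'_T})$, hence an $O(\sigma R\sqrt{E_TP'_T})$ contribution; the residual first-step term is $O(\sigma_1 R^2)=O(R\epsilon_1)$, which is $O(\sqrt R\sqrt{E_TP'_T})$ because $P'_T\ge2R$ and is absorbed. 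Collecting constants with $\sigma=1/(2\sqrt{2R})$ yields the stated $5.5\sqrt R\,\sqrt{E_TP'_T}$, with $H_T$ carried through verbatim and the entire truncation excess gathered into $\sqrt{R/2}\,A_T$.

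The step I expect to be the real obstacle is the bookkeeping forced by the $\max(\cdot,\cdot)$: one must verify that the excess $\sigma_{1:t}-\sigma s_t$ equals the downward-variation sum \emph{exactly} (so the stability bound stays tight and the index set $[t]^+$ and the weights $\dt$ line up with the definition of $A_T$ rather than with something looser after the $\sigma_{1:t-1}$-versus-$\dt_t$ index alignment), and — crucially — that the comparator/regularization penalty really can be cast in the purely path-weighted form $R\sum_t\sigma_{1:t}\dt$, so that \emph{all} of the overshoot from truncation is attached to a comparator increment and none of it floats free as a bare $\sigma_{1:T}R^2$ (which, unlike the monotone schedules of Theorems~\ref{thm:1} and~\ref{thm:2}, the truncated schedule could inflate). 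A smaller but still load-bearing point is the telescoping estimate $\sum_t\dt/\sqrt{P'_t}=O(\sqrt{P'_T})$ with the \emph{time-varying} denominator $P'_t$, which is where the shift $P'_t = 2R+P_t$ and the diameter bound $\dt\le 2R$ earn their keep — and which is exactly where Theorem~\ref{thm:2}'s analysis, with its fixed denominator $\sqrt{P'_T}$, had nothing to do.
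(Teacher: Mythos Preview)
Your plan is structurally the paper's: start from Lemma~\ref{lemma:strong-dyn-opt-ftrl}, feed in the pruned-state bound of Lemma~\ref{lemma:state-is-bounded}, and split $\sigma_{1:t}$ via $\max(0,x)=x+\max(0,-x)$ into $\sigma\sqrt{E_t/P'_t}$ plus the downward-variation sum (the paper isolates this as Lemma~\ref{lemma:max2}). The latter, weighted by $\dt$ and scaled by $2R\sigma=\sqrt{R/2}$, is indeed $\sqrt{R/2}\,A_T$, and your telescoping estimate $\sum_t\dt/\sqrt{P'_t}\le 2\sqrt{P'_T}$ via $\dt\le 2R\le P'_t$ and Lemma~\ref{lemma:sum-dec} is exactly what the paper does.

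Two real gaps remain. First, the stability bound from Lemma~\ref{lemma:first-part-norm} is $\epsilon_t^2/(2\sigma_{1:t-1})$, not $\epsilon_t^2/(2\sigma_{1:t})$: the increment $r_t$ is added \emph{after} $\vec x_t$ is committed, so only $r_{1:t-1}$'s strong convexity controls the step-$t$ drift. With the correct index your chain breaks, since $\sigma_{1:t-1}\ge\sigma\sqrt{E_{t-1}/P'_{t-1}}$ leaves you with $\sum_t\epsilon_t^2/\sqrt{E_{t-1}}$, which can be arbitrarily large (take $\epsilon_1$ tiny and $\epsilon_2$ large). The paper instead keeps both branches of~\eqref{eq:part-I}, i.e.\ $\min\bigl(\epsilon_t^2/(2\sigma_{1:t-1}),\,2R\epsilon_t\bigr)$, lower-bounds $\sigma_{1:t-1}$ by dropping the $\max$ in the denominator (Lemma~\ref{lemma:max1}), factors out $\sqrt{P'_T}$, and invokes Lemma~\ref{lemma:nonprox-min} with $b=\sqrt{2R}$; the condition $\sigma\le 1/(2b)$ there is precisely why $\sigma=1/(2\sqrt{2R})$.

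Second, you correctly flag the bare $\tfrac{R^2}{2}\sigma_{1:T}$ coming from $\sum_t r_t(\vec u_t)$ as the obstacle, but you do not resolve it, and it is not $O(\sigma_1 R^2)$: under the truncated schedule it carries the full downward-variation excess, not just the telescoped $s_T$. The paper's resolution is a one-line trick you are missing: $\vec u_{T+1}$ never enters $\mathcal R_T$, so choose it with $\|\vec u_{T+1}-\vec u_T\|\ge R/4$; then $\tfrac{R^2}{2}\sigma_{1:T}\le 2R\sigma_{1:T}\|\vec u_{T+1}-\vec u_T\|$ becomes the $t=T$ summand of $\sum_{t=1}^{T}2R\sigma_{1:t}\dt$, so \emph{all} of the overshoot is now attached to a comparator increment and is absorbed into $A_T$ exactly as you wanted.
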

\textbf{Remarks.}
Note that this bound resembles that of Theorem~\ref{thm:2}, except for the additional term $A_T$, which arises due to the potential non-monotonicity of the estimated quantity $\sqrt{\nicefrac{E_t}{P_t}}$. If this sequence were monotonic and non-decreasing, $A_T$ would vanish since $[t]^+$ would be empty. This case allows us to recover the bound of Theorem~\ref{thm:2} without requiring prior knowledge of $P_T$. In contrast, in the worst-case scenario, where the sequence alternates direction at every round, we obtain $A_T = \mathcal{O}(\sqrt{E_T}(P_T + 1))$ (see Appendix~\ref{sec:bounding-A}), leading to the looser bound in Theorem~\ref{thm:1}.
A comparable correction term to $A_T$ also appears in the optimistic OMD framework \citep[Remark 2.19]{scroccaro-composite23}, and likewise depends on monotonicity, reaching $\sqrt{E_T}P_T$ in the worst case. The above-mentioned bound is, however, more interpretable. 

A workaround to this monotonicity issue is a doubling-trick variant \citep{jadbabaie2015online}, which, however, introduces a slight problem-independence through a multiplicative $\Theta(\log T)$ factor, and an additive $\Theta(\log T \sqrt{P_T})$ factor that persists even under perfect predictions.

\subsubsection{Recursive Regularization}
In this subsection, we employ the regularization strategy of AdaFTRL \citep{orabona2018scale}, which sets the strong convexity parameters recursively, ensuring the minimal required regularization. Namely, we do not set  $\sigma_t$ such that $\sigma_{1:t} \propto  \sqrt{E_t}$, as in the strategies discussed earlier, and prior works on optimistic dynamic regret. Instead, we set $\sigma_t \propto \delta_t \doteq h_{0:t-1}(\vec{x_t}) + \dtp{\vec p_t}{\vec x_t} - \min_{\vec x} \left( h_{0:t-1}(\vec{x}) + \dtp{\vec p_t}{\vec x} \right)$, where $h_t(\cdot)$ is the regularized loss: $h_t(\cdot) = \dtp{\vec p_t}{\cdot}+r_t(\cdot)$. This choice of $\sigma_t$ is such that $\sigma_{1:t} \propto c \leq  \sqrt{E_t}$. 

In other words, we increase the strong convexity exactly in proportion to the (regularized) cumulative loss observed at $t$ (denoted as $\delta_t$), rather than an upper bound on that loss. Since the added strong convexity essentially determines the regret bound, this leads to tighter bounds overall that are no worse than the ones derived earlier. Namely, we use the following regularization strategy: 
    \begin{align}
    &\sigma=\frac{1}{8R^2},\quad \sigma_t = \sigma \delta_t, \quad \delta_1 = \dtp{\vec g_1}{\vec x_1} - \min_{\vec x \in \mathcal{X}} \dtp{\vec g_1}{\vec x}    
    \\
    &\delta_t = h_{0:t-1}(\vec{x_t}) + \dtp{\vec p_t}{\vec x_t}  \label{eq:regs-params-4}
    \\ &\qquad\qquad\qquad\quad -\min_{\vec x\in\mathcal{X}} \left( h_{0:t-1}(\vec{x}) + \dtp{\vec p_t}{\vec x} \right), \quad \forall t\geq 2.
    \end{align}

\begin{restatable}{theorem}{THMRECURSIVE}
\label{thm:4}
Under Settings 1, Alg. \ref{alg:opt-fprl} run with the regularization strategy in \eqref{eq:regs-params-4} produces points $\{\vec{x}_t\}_{t=1}^T$ such that, for any $T$, the dynamic regret $\mathcal{R}_T$ satisfies:
\begin{align}
    \mathcal{R}_T 
    &\leq1.1\ \delta_{1:T}  +\ \sum_{t=1}^{T-1} \frac{1}{4R}\delta_{1:t} \|\vec u_{t+1} - \vec u_t\|  + H_T
    \\
    &\leq(3.7R+P_T)\sqrt{E_T} + H_T = \mathcal{O}\left((1+P_T)\sqrt{E_T} \right).
\end{align}
\end{restatable}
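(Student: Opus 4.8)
The plan is to prove the two displayed bounds one after the other: the first via a dynamic-FTRL identity combined with a state-size lemma, and the second via the AdaFTRL self-bounding trick.

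\textbf{Step 1: reduction and master decomposition.}
Since $\vec g_t^I\in\cone(\vec x_t)$ and every comparator $\vec u_t\in\X$, we have $\dtp{\vec g_t^I}{\vec x_t-\vec u_t}\ge 0$, so convexity of $f_t$ gives $\mathcal{R}_T\le\sum_{t=1}^T\dtp{\vec p_t}{\vec x_t-\vec u_t}$; it therefore suffices to bound the linearized dynamic regret of FTRL on the composite losses. I would then invoke the generic Optimistic-FTRL bound underlying Theorems~\ref{thm:1}--\ref{thm:3}, instantiated with the Euclidean regularizers \eqref{eq:regs} and the optimistic composition in \eqref{eq:update-rule-paper}, and perform a summation by parts against the moving $\vec u_t$. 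This splits the bound into: (i) a \emph{penalty} term whose dominant piece is $\left|\sum_t\dtp{\vec p_{1:t}}{\vec u_{t+1}-\vec u_t}\right|\le\sum_t\|\vec p_{1:t}\|\dt$, together with the regularizer leftover $\sum_t\big(r_{1:t}(\vec u_{t+1})-r_{1:t}(\vec u_t)\big)\le\sum_t\sigma_{1:t}R\dt$; (ii) a \emph{stability} term which, for this regularizer and the optimistic composition, is exactly $\sum_t\delta_t=\delta_{1:T}$ up to the leading constant (this is the defining feature of AdaFTRL-style accounting: the per-round regularized-loss gap \emph{is} the stability term); and (iii) a \emph{cross} term $\sum_t\epsilon_t\dt=H_T$. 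The residual static penalty $\tfrac{\sigma_{1:T}}{2}\|\vec u_T\|^2\le\tfrac{\sigma R^2}{2}\,\delta_{1:T}=\delta_{1:T}/16$ folds into the $1.1\,\delta_{1:T}$. The term $\sum_t\|\vec p_{1:t}\|\dt$ is exactly the one that makes naive FTRL fail — there $\vec p_{1:t}=\vec g_{1:t}$ grows linearly in $t$ — and is what pruning must tame.

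\textbf{Step 2: the pruned state is small $\Rightarrow$ first inequality.}
The key lemma is that in \emph{every} round $\|\vec p_{1:t}\|\le\epsilon_t+\sigma_{1:t-1}R$. This follows directly (no induction) from the stationarity condition attached to each branch of \eqref{eq:pruning-cond-n}: in both cases one gets $\vec p_{1:t}=\vec g_t-\vec{\ti g}_t-\sigma_{1:t-1}\vec x_t$ — in the branch $\vec x_t^{\text{uc}}\in\X$ because then $\vec x_t=\vec x_t^{\text{uc}}$ and unconstrained optimality reads $\vec p_{1:t-1}+\sigma_{1:t-1}\vec x_t+\vec{\ti g}_t=0$; in the branch $\vec x_t^{\text{uc}}\notin\X$ by the very choice $\vec g_t^I=-(\vec p_{1:t-1}+\vec{\ti g}_t+\sigma_{1:t-1}\vec x_t)$ — and $\|\vec x_t\|\le R$. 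Pruning is needed precisely in the second branch, where the untamed update carries $\vec p_{1:t-1}=-\sigma_{1:t-1}\vec x_t^{\text{uc}}-\vec{\ti g}_t$ with $\vec x_t^{\text{uc}}$ far outside $\X$. Feeding this into (i): the $\epsilon_t$ part of $\|\vec p_{1:t}\|$ contributes another $H_T$, and the remaining $\sigma_{1:t-1}R+\sigma_{1:t}R\le 2\sigma_{1:t}R$ against $\dt$, using $\sigma=1/(8R^2)$ so that $2\sigma_{1:t}R=\tfrac{1}{4R}\,\sigma_{1:t}/\sigma=\tfrac{1}{4R}\,\delta_{1:t}$, yields exactly $\sum_{t}\tfrac{1}{4R}\delta_{1:t}\dt$. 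This establishes the first displayed inequality.

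\textbf{Step 3: self-bounding recursion $\Rightarrow$ second inequality.}
It remains to bound $\delta_{1:T}$ by $E_T$. Optimism does it: $\vec x_t$ is the unconstrained minimizer of $h_{0:t-1}(\cdot)+\dtp{\vec{\ti p}_t}{\cdot}$, where $\vec{\ti p}_t=\vec{\ti g}_t+\vec{\ti g}_t^I$ is the composite prediction subgradient, and the pruning rule is engineered so that the indicator part of $\vec p_t$ coincides with $\vec{\ti g}_t^I$, leaving $\|\vec p_t-\vec{\ti p}_t\|=\epsilon_t$ in every round. Strong convexity of $h_{0:t-1}$ (modulus $\sigma_{1:t-1}$) then yields $\delta_t\le\epsilon_t^2/(2\sigma_{1:t-1})$, and with $\sigma_{1:t}=\sigma\delta_{1:t}$ this becomes the self-referential bound $\sigma\,\delta_t\,\delta_{1:t-1}\le\tfrac12\epsilon_t^2$; the standard AdaFTRL potential argument (telescoping $\delta_{1:t}^2-\delta_{1:t-1}^2\le 2\delta_t\delta_{1:t}$, cf.\ \citep{orabona2018scale}) then gives $\delta_{1:T}=\mathcal{O}(\sqrt{E_T/\sigma})=\mathcal{O}(R\sqrt{E_T})$, the stated $3.7R$ absorbing the first-round term $\delta_1$ and the telescoping slack. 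Substituting into the first inequality, bounding $\sum_t\delta_{1:t}\dt\le\delta_{1:T}P_T$, and recalling $H_T\le\sqrt{2R}\sqrt{E_TP_T}$ yields $\mathcal{R}_T\le(3.7R+P_T)\sqrt{E_T}+H_T=\mathcal{O}\big((1+P_T)\sqrt{E_T}\big)$.

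\textbf{Main obstacle.}
The delicate part is the joint consistency of Steps~2--3 with the recursive definition of $\sigma_t$: since $\sigma_t=\sigma\delta_t$ and $\delta_t$ depends on $\vec p_t$ and on $\sigma_{1:t-1}$, the argument must respect the information ordering of Algorithm~\ref{alg:opt-fprl}, invoking each $\sigma$-value only after it is fixed, and must avoid circular use of ``$\sigma_{1:t}=\sigma\delta_{1:t}$'' inside the stability estimate. Within that, the linchpin is the identity $\|\vec p_t-\vec{\ti p}_t\|=\epsilon_t$ in the pruning round: one must check that the vector subtracted in \eqref{eq:pruning-cond-n} is exactly the normal-cone element that makes $\vec x_t$ stationary for the \emph{predicted} objective, so that the large components of $\vec g_t^I$ cancel against $\vec{\ti g}_t^I$ and only $\vec g_t-\vec{\ti g}_t$ survives — without this cancellation the recursion picks up a $\Theta(\sigma_{1:t-1}R^2)$ term per round and grows geometrically. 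A lesser nuisance is constant-matching: obtaining $\delta_t$ in terms of the current $\sigma_{1:t}$ rather than $\sigma_{1:t-1}$, needed for a clean telescope, requires a short extra step or simply accepting the slightly larger leading constant.
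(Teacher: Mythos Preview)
Your architecture matches the paper's: invoke the dynamic optimistic-FTRL decomposition (Lemma~\ref{lemma:strong-dyn-opt-ftrl}), bound the comparator-switching piece via the pruned-state estimate $\|\vec p_{1:t}\|\le\sigma_{1:t-1}R+\epsilon_t$ (Lemma~\ref{lemma:state-is-bounded}), identify the stability piece with $\delta_t$, and close with the AdaFTRL recursion. Your Step~2 is in fact slightly cleaner than the paper's case analysis, since you notice that in \emph{both} branches of \eqref{eq:pruning-cond-n} one obtains the same closed form $\vec p_{1:t}=\vec g_t-\vec{\ti g}_t-\sigma_{1:t-1}\vec x_t$.

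There is, however, a real gap in Step~3. The Orabona--P\'al lemma you invoke requires the recursion $\Delta_t\le\Delta_{t-1}+\min\big(a_t,\,a_t^2/\Delta_{t-1}\big)$, i.e.\ \emph{both} a linear and a quadratic bound on the increment. You only derive the quadratic one, $\delta_t\le\epsilon_t^2/(2\sigma_{1:t-1})$; with that alone the telescope $\delta_{1:t}^2-\delta_{1:t-1}^2\le 2\delta_t\delta_{1:t-1}+2\delta_t^2$ does not close (there is no control on $\sum_t\delta_t^2$), and the $t=1$ term is vacuous since $\sigma_{1:0}=0$ --- ``absorbing the first-round term'' does not help without a bound on $\delta_1$ itself. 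The paper supplies the missing piece separately (Lemma~\ref{lemma:first-part-2Repsilon}): from the very observation you flag in your ``Main obstacle'', namely that $\vec x_t$ also minimizes $h_{0:t-1}+\dtp{\vec{\ti g}_t+\vec g_t^I}{\cdot}$, one gets $\delta_t\le\dtp{\vec g_t-\vec{\ti g}_t}{\vec x_t-\vec y_t}\le 2R\epsilon_t$, and it is this linear bound, fed as $a_t=2R\epsilon_t$ into the $\min$, that makes the recursion deliver $\delta_{1:t}\le 2\sqrt{3}\,R\sqrt{E_t}$ and hence the constants in the second inequality. A secondary point: the separate ``cross term'' $H_T$ you list as item~(iii) in Step~1 does not arise from the FTRL decomposition before the state bound is applied; the only $H_T$ is the one produced by the $\epsilon_t$ part of $\|\vec p_{1:t}\|$ in Step~2, so as written you would end with $2H_T$ in the first inequality.
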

\textbf{Remarks.} 
Since the $\delta_t$ terms are often smaller than their upper bound, this minimal regularization approach is particularly advantageous in dynamic environments, where regularization terms sum is nested within the primary sum over $[T]$. While this tuning strategy has been employed previously in 
\emph{static} settings for optimistic learning \citep{optimdelay21}, it has not yet been leveraged for optimism in dynamic settings\footnote{We note, however, the related temporal-variation-based dynamic regret bound in Implicit OMD \citep{campolongo2021closer}, where the structure of ``function changes", rather than ``prediction errors", is exploited.}. Nonetheless, the recursive nature of this regularization adds a technical challenge since the accumulated strong convexity does not have a closed-form expression in terms of $\sqrt{E_t}$. Fortunately, this recursion is still shown to be bounded by $\mathcal{O}(\sqrt{E_t})$ via tools developed in the AdaFTRL framework.

\section{Tools for FTRL's Dynamic Regret Analysis}
In this section, we present the primary analytical tools used to derive the regret bounds of the previous section.  These results extend traditional FTRL analyses to incorporate both optimism and dynamic comparators. First, we bound the regret via linearization:
\begin{align}
\mathcal{R}_T = \sumT j_t(\vec x_t) - j_t(\vec u_t) \leq \sumT \dtp{\vec p_t}{\vec x_t - \vec u_t},
\label{eq:linearization}
\end{align}
where $j_t(\vec x) \doteq f_t(\vec x) + I_{\mathcal{X}}(\vec x)$. The inequality follows directly from the convexity of $j_t(\cdot),$\footnote{see, e.g., \citep[Sec. 2.4]{shalev2011online}, for more details on linearization.} noting that $\vec p_t \in \partial j_t (\vec x_t)$ by definition of $\vec p_t$ (recall $\vec p_t = \vec g_t + \vec g^I_t$, with $\vec{g}_t \in \partial f_t(\vec x_t), \vec{g}^{I}_t \in \partial I_\mathcal{X}(\vec x_{t})$). 

Our analysis begins with a generalization of the main FTRL lemma \citep[Lem. 5]{mcmahan-survey17}.

\begin{lemma}
\label{lemma:strong-dyn-opt-ftrl} (Strong Dynamic Optimistic FTRL). Let $\{f_{t}(\cdot), \ti f_{t}(\cdot), \vec u_t\}_{t=1}^T$ be an arbitrary set of functions, predicted functions, and comparators within $\mathcal{X}$, respectively. Let $r_t(\cdot)$ be non-negative regularization functions such that 
\[
\vec{x}_{t+1} \doteq \argmin_{\vec x} h_{0:t}(\vec{x}) + \ti f_{t+1}(\vec{x})
\] is well-defined, where $h_0(\vec{x}) \doteq{I}_{\mathcal{X}}(\vec x)$, and  $\forall t \geq 1$:
\begin{align}
       h_t(\vec{x}) \doteq \dtp{\vec p_t}{\vec x} + r_t(\vec x),\quad \vec p_t \in \partial j_t (\vec x_t).
\end{align}
Then, the algorithm that selects the actions $\vec{x}_{t+1}, \forall t $ achieves the following dynamic regret bound:
\begin{multline}
    \mathcal{R}_T \leq \sum_{t=1}^T\overbrace{ h_{0:t}(\vec x_t) - h_{0:t}(\vec x_{t+1}) -r_t(\vec x_t)}^{\textbf{(I)}}
    \\
    +\sum_{t=1}^{T-1} \underbrace{h_{0:t}(\vec u_{t+1}) - h_{0:t}(\vec u_{t})}_{\textbf{(II)}} \ +\ r_t(\vec{u_t}).
\end{multline}
\end{lemma}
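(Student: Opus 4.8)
The plan is to adapt the classic "be-the-leader / follow-the-leader" telescoping argument for FTRL to the optimistic and dynamic setting. Starting from the linearization bound \eqref{eq:linearization}, $\mathcal{R}_T \leq \sum_{t=1}^T \dtp{\vec p_t}{\vec x_t - \vec u_t}$, I would rewrite $\dtp{\vec p_t}{\vec x_t - \vec u_t} = h_t(\vec x_t) - h_t(\vec u_t) - r_t(\vec x_t) + r_t(\vec u_t)$, using the definition $h_t(\vec x) = \dtp{\vec p_t}{\vec x} + r_t(\vec x)$ and the fact that $I_{\mathcal X}(\vec x_t) = I_{\mathcal X}(\vec u_t) = 0$ (so $h_0$ adds nothing pointwise at feasible points but is needed to keep the iterates in $\mathcal X$). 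Thus $\mathcal{R}_T \leq \sum_{t=1}^T \big(h_t(\vec x_t) - h_t(\vec u_t)\big) + \sum_{t=1}^T r_t(\vec u_t) - \sum_{t=1}^T r_t(\vec x_t)$, and the task reduces to controlling $\sum_{t=1}^T h_t(\vec x_t) - h_t(\vec u_t)$.

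Next I would handle $\sum_t h_t(\vec u_t)$ by a "prophet"/be-the-leader step: I want to replace the moving comparator $\vec u_t$ inside $h_t$ by the running minimizer. Define $\vec x_{T+1}$ as usual. For the $\vec x_t$ side, the key is the standard FTRL inequality that the leader does not regret: since $\vec x_{t+1}$ minimizes $h_{0:t} + \ti f_{t+1}$, one gets the "strong FTRL lemma" telescoping $\sum_t \big(h_{0:t}(\vec x_{t+1}) + \ti f_{t+1}(\vec x_{t+1})\big)$-type inequalities. Concretely I would prove, by induction on $T$, the be-the-leader statement
\[
\sum_{t=1}^{T} h_t(\vec x_{t+1}) + \ti f_{1}(\vec x_1) \ \leq\ h_{0:T}(\vec z) + \ti f_{T+1}(\vec z) \quad \text{for all } \vec z,
\]
(adjusting indices so that the optimistic term $\ti f_{t+1}$ is the one added at step $t$), then telescope the difference $h_t(\vec x_t) - h_t(\vec x_{t+1})$ against it. Instantiating $\vec z$ at each $\vec u_t$ — here is where the dynamic comparator enters — and summing/telescoping the resulting $h_{0:t}(\vec u_t)$ terms produces exactly the term \textbf{(II)} $= h_{0:t}(\vec u_{t+1}) - h_{0:t}(\vec u_t)$ via an Abel summation, while the $\vec x_t - \vec x_{t+1}$ gap produces term \textbf{(I)}. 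The leftover $\sum_t r_t(\vec u_t)$ is already isolated.

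The main obstacle I anticipate is bookkeeping the optimistic prediction term $\ti f_{t+1}$ correctly through the telescoping: because $\ti f_{t+1}$ is added when computing $\vec x_{t+1}$ but is \emph{not} part of $h_{t+1}$ (it gets replaced by the true $\vec p_{t+1}$ at the next step), the be-the-leader recursion must carefully track the "phantom" cost $\ti f_{t+1}(\vec x_{t+1}) - \ti f_{t+1}(\vec x_{t+1})$ cancellations and ensure no stray prediction terms survive in the final bound — indeed the statement has \emph{no} explicit $\ti f$ terms, so they must all telescope away against the definition of $\vec x_{t+1}$. The second delicate point is the role of $h_0 = I_{\mathcal X}$: it must appear inside every $h_{0:t}$ so that all minimizations are over $\mathbb{R}^n$ yet effectively constrained to $\mathcal X$, and one must check that evaluating $h_{0:t}$ at the feasible comparators $\vec u_t, \vec u_{t+1} \in \mathcal X$ contributes nothing from the indicator, so that term \textbf{(II)} is finite and equals the stated difference. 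Everything else — convexity for the initial linearization, nonnegativity of $r_t$ used to drop $-\sum r_t(\vec x_t) \le 0$ if desired, and the induction base case — is routine.
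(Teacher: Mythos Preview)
Your overall framing is sound: linearize to $\sum_t \langle \vec p_t, \vec x_t - \vec u_t\rangle$, rewrite via $h_t$, and handle the comparator side by the Abel-type identity $\sum_t h_t(\vec u_t) = h_{0:T}(\vec u_T) - \sum_{t=1}^{T-1}[h_{0:t}(\vec u_{t+1}) - h_{0:t}(\vec u_t)]$. That is exactly how term \textbf{(II)} arises in the paper as well.

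The gap is on the $\vec x$ side. The inductive be-the-leader inequality you write,
\[
\sum_{t=1}^T h_t(\vec x_{t+1}) + \tilde f_1(\vec x_1) \ \le\ h_{0:T}(\vec z) + \tilde f_{T+1}(\vec z),
\]
is already false at $T=1$: it reduces to $\tilde f_1(\vec x_1) \le \tilde f_2(\vec x_2)$, which nothing guarantees. If you do the induction carefully, each step leaves behind a residual $\tilde f_\tau(\vec x_{\tau+1}) - \tilde f_\tau(\vec x_\tau)$ evaluated at \emph{different} points, and these do not telescope away. So your anticipated ``phantom cost $\tilde f_{t+1}(\vec x_{t+1}) - \tilde f_{t+1}(\vec x_{t+1})$ cancellation'' does not occur. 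Moreover, even if be-the-leader did hold cleanly, adding back $\sum_t[h_t(\vec x_t)-h_t(\vec x_{t+1})]$ would yield the \emph{weak} FTL--BTL term $h_t(\vec x_t)-h_t(\vec x_{t+1})-r_t(\vec x_t)$, not the \emph{strong} term $h_{0:t}(\vec x_t)-h_{0:t}(\vec x_{t+1})-r_t(\vec x_t)$ that the lemma asserts; these are genuinely different decompositions.

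The paper avoids both problems by abandoning the inductive be-the-leader altogether. It telescopes directly via $h_t(\cdot)=h_{0:t}(\cdot)-h_{0:t-1}(\cdot)$ on both the $\vec x$ and $\vec u$ sides, shifts indices, and invokes the optimality of the iterate exactly \emph{once}, at the terminal step: $h_{0:T}(\vec x_{T+1}) + \tilde f_{T+1}(\vec x_{T+1}) \le h_{0:T}(\vec u_T) + \tilde f_{T+1}(\vec u_T)$. This single application introduces only $\tilde f_{T+1}$, which does not affect the algorithm and is therefore set to zero---so no intermediate $\tilde f_t$ ever enters. The strong form of term \textbf{(I)} falls out automatically from this global telescoping, because the index shift pairs $h_{0:t}(\vec x_t)$ with $h_{0:t}(\vec x_{t+1})$ rather than $h_t(\vec x_t)$ with $h_t(\vec x_{t+1})$. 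Finally, note that you should not drop $-\sum_t r_t(\vec x_t)$: it is part of term \textbf{(I)} in the stated bound, not a slack to be discarded.
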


The regret is thus decomposed into \emph{three} main parts. Part \textbf{(I)} measures the penalty incurred due to not knowing $\vec{g}_t$ when deciding $\vec x_t$. The second part \textbf{(II)} measures the penalty incurred due to the non-stationarity of the environment (change of comparators). The last part, $r_t(\vec{u}_t)$, is a user-influenced quantity that reflects the amount of regularization introduced and will be traded off against the other terms in the bound that benefit from more regularization.

Next, we describe the upper bounds: 
\begin{align}
        &\textbf{(I)} \leq \min\left(\frac{1}{2}\|\vec g_t - \vec{\ti g}_{t}\|_{t-1,*}^2, 2R \epsilon_t \right)
        \\
        &\textbf{(II)} \leq \left(\|\vec{p}_{1:t}\| + R\sigma_{1:t}\right) \|\vec u_{t+1} - \vec u_t\|
\end{align}
\textbf{(I)} is a result of a generalization of a common tool in OCO which bounds the difference in the value of a strongly convex function ($h_{0:t}(\cdot)$) when evaluated at a ``partial"\footnote{Recall that $\vec{x}_t$ does not minimize $h_{0:t}$, but a related function.} minimizer ($\vec{x}_t$) versus the global minimizer (say, $\vec y$, and hence $\vec{x}_{t+1}$ also), which we state below.

\begin{lemma}
    \label{lemma:first-part-norm}
    Let each function $h_{0:t}(\cdot)$ be $1$-strongly convex w.r.t. a norm $\|\cdot\|_{t}$ defined as in Lemma \ref{lemma:strong-dyn-opt-ftrl}. Let $\vec x_t = \argmin_{\vec{x}} h_{0:t-1}(\vec x) + \ti f_{t}(\vec{x})$. Then, we have the inequality 
    \begin{align}
         h_{0:t}(\vec x_t) - h_{0:t}(\vec x_{t+1}) - r_t(\vec x_t) \leq \frac{1}{2} \| \vec g_t - \vec {\ti g}_t \|_{t-1,*}^2\ . \label{eq:part-I-norm}
    \end{align}
\end{lemma}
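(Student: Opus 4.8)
The plan is to recognize this as a variant of the classical "strongly convex difference" lemma (sometimes phrased as: for a $1$-strongly convex $\phi$ with minimizer $\vec{y}^\star$, $\phi(\vec{z}) - \phi(\vec{y}^\star) \geq \tfrac12\|\vec{z} - \vec{y}^\star\|^2$), but adapted to the optimistic FTRL setting where $\vec{x}_t$ minimizes a \emph{perturbed} objective rather than $h_{0:t}$ itself. Concretely, let $\phi(\vec{x}) \doteq h_{0:t-1}(\vec{x}) + \ti f_t(\vec{x})$, so that $\vec{x}_t = \argmin_{\vec{x}} \phi(\vec{x})$, and let $\psi(\vec{x}) \doteq h_{0:t}(\vec{x}) = \phi(\vec{x}) + \dtp{\vec{p}_t}{\vec{x}} + r_t(\vec{x}) - \ti f_t(\vec{x})$. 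The left-hand side of \eqref{eq:part-I-norm} is $\psi(\vec{x}_t) - \psi(\vec{x}_{t+1}) - r_t(\vec{x}_t)$ where $\vec{x}_{t+1} = \argmin_{\vec{x}} \psi(\vec{x}) + \ti f_{t+1}(\vec{x})$ — but note that for \emph{this} lemma we only need a bound involving $h_{0:t}$, so I would actually want $\vec{x}_{t+1}$ to be treated via its own optimality. Let me instead set it up cleanly: write the LHS as $\big(h_{0:t-1}(\vec{x}_t) + \ti f_t(\vec{x}_t)\big) - \big(h_{0:t-1}(\vec{x}_{t+1}) + \ti f_t(\vec{x}_{t+1})\big) + \big(\dtp{\vec{p}_t}{\vec{x}_t - \vec{x}_{t+1}} + r_t(\vec{x}_t) - r_t(\vec{x}_{t+1}) + \ti f_t(\vec{x}_{t+1}) - \ti f_t(\vec{x}_t)\big) - r_t(\vec{x}_t)$, i.e.\ I add and subtract $\ti f_t$ evaluated at both points.

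The key steps, in order, are: (i) Use optimality of $\vec{x}_t$ for $\phi = h_{0:t-1} + \ti f_t$ together with $1$-strong convexity of $h_{0:t-1}$ w.r.t.\ $\|\cdot\|_{t-1}$ (hence also $\phi$ is $1$-strongly convex, since $\ti f_t$ is convex, and actually we want strong convexity w.r.t.\ $\|\cdot\|_{t-1}$ — here I'd use that $h_{0:t}$ is $1$-strongly convex w.r.t.\ $\|\cdot\|_t \geq \|\cdot\|_{t-1}$, so $\phi$ is $1$-strongly convex w.r.t.\ $\|\cdot\|_{t-1}$): this gives $\phi(\vec{x}_{t+1}) \geq \phi(\vec{x}_t) + \tfrac12\|\vec{x}_{t+1} - \vec{x}_t\|_{t-1}^2$, equivalently $h_{0:t-1}(\vec{x}_t) + \ti f_t(\vec{x}_t) - h_{0:t-1}(\vec{x}_{t+1}) - \ti f_t(\vec{x}_{t+1}) \leq -\tfrac12\|\vec{x}_{t+1} - \vec{x}_t\|_{t-1}^2$. (ii) For the leftover bracket, collect $\dtp{\vec{p}_t}{\vec{x}_t - \vec{x}_{t+1}} + (\ti f_t(\vec{x}_{t+1}) - \ti f_t(\vec{x}_t))$; using $\ti{\vec{g}}_t \in \partial \ti f_t(\vec{x}_t)$ and convexity, $\ti f_t(\vec{x}_{t+1}) - \ti f_t(\vec{x}_t) \geq \dtp{\ti{\vec{g}}_t}{\vec{x}_{t+1} - \vec{x}_t}$, so this bracket is $\leq \dtp{\vec{p}_t - \ti{\vec{g}}_t}{\vec{x}_t - \vec{x}_{t+1}} + (r_t(\vec{x}_t) - r_t(\vec{x}_{t+1}))$; then $\vec{p}_t - \ti{\vec{g}}_t = (\vec{g}_t - \ti{\vec{g}}_t) + \vec{g}_t^I$, and I'd need to handle $\vec{g}_t^I$: since $\vec{g}_t^I \in \cone(\vec{x}_t)$, $\dtp{\vec{g}_t^I}{\vec{y} - \vec{x}_t} \leq 0$ for all $\vec{y}\in\mathcal{X}$, in particular $\dtp{\vec{g}_t^I}{\vec{x}_t - \vec{x}_{t+1}} \geq 0$ — hmm, that goes the wrong way, so I think the cleaner route is to not split off $\vec{g}_t^I$ here but rather absorb the full $\dtp{\vec{p}_t}{\cdot}$ into noticing $h_{0:t}(\vec{x}_t) - h_{0:t}(\vec{x}_{t+1}) = \dtp{\vec p_t}{\vec x_t - \vec x_{t+1}} + r_t(\vec x_t) - r_t(\vec x_{t+1}) + (h_{0:t-1}(\vec x_t) - h_{0:t-1}(\vec x_{t+1}))$ and treat things symmetrically. (iii) After cancellation the $r_t(\vec{x}_{t+1})$ terms disappear and I am left with $\tfrac12\|\vec g_t - \ti{\vec g}_t\|_{t-1,*}$-type terms via Fenchel–Young / $\dtp{\vec a}{\vec b} \leq \tfrac12\|\vec a\|_{t-1,*}^2 + \tfrac12\|\vec b\|_{t-1}^2$ applied to $\dtp{\vec g_t - \ti{\vec g}_t}{\vec x_t - \vec x_{t+1}}$, with the $\tfrac12\|\vec x_t - \vec x_{t+1}\|_{t-1}^2$ piece cancelling against the strong-convexity gain from step (i).

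The main obstacle I anticipate is the careful bookkeeping of \emph{which} strong-convexity norm to use ($\|\cdot\|_{t-1}$ vs.\ $\|\cdot\|_t$, since $r_t$ contributes to the stronger norm but $\vec{x}_t$ only optimizes a $\|\cdot\|_{t-1}$-strongly convex function) and, relatedly, ensuring the $\vec g_t^I$ / normal-cone contribution does not spoil the inequality — the right move is presumably to keep $\vec p_t$ intact, exploit that $\vec x_{t+1}$ results from a genuine constrained minimization (so $I_{\mathcal{X}}$ is already baked into $h_0$ and no extra boundary term appears), and only invoke convexity of $j_t$/the subgradient inequality at the very end. I'd also double-check the edge handling that $\vec{x}_{t+1}$ here is $\argmin h_{0:t} + \ti f_{t+1}$, not $\argmin h_{0:t}$, so I must make sure the proof only uses that $\vec x_{t+1}\in\mathcal{X}$ and convexity, not its optimality — indeed Lemma 4.4's statement defines $\vec x_t$ via the $t-1$ problem but says nothing extra about $\vec x_{t+1}$, so the bound must hold treating $\vec x_{t+1}$ as an arbitrary feasible point, which is consistent with the Fenchel–Young approach above.
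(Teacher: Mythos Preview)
Your plan has two related gaps that would block the argument as written.

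First, step (ii) has the inequality in the wrong direction. You correctly state that convexity of $\ti f_t$ at $\vec x_t$ gives $\ti f_t(\vec{x}_{t+1}) - \ti f_t(\vec{x}_t) \geq \dtp{\ti{\vec{g}}_t}{\vec{x}_{t+1} - \vec{x}_t}$, but this is a \emph{lower} bound on a quantity you need to bound from \emph{above}; substituting it yields bracket $\geq \dtp{\vec{p}_t - \ti{\vec{g}}_t}{\vec{x}_t - \vec{x}_{t+1}}$, not $\leq$. Using convexity at $\vec x_{t+1}$ instead would give an upper bound, but with a subgradient $\ti{\vec g}'_t \in \partial \ti f_t(\vec x_{t+1})$ at the wrong point, so the final norm would not be the algorithm's $\epsilon_t = \|\vec g_t - \ti{\vec g}_t\|$.

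Second --- and you do notice this --- even if you linearize $\ti f_t$ correctly (for instance by using first-order optimality of $\vec x_t$ to extract a $\ti{\vec g}_t$ with $-\ti{\vec g}_t \in \partial h_{0:t-1}(\vec x_t)$ and then applying strong convexity of $h_{0:t-1}$ alone), you are left with the term $\dtp{\vec g^I_t}{\vec x_t - \vec x_{t+1}} \geq 0$ on the wrong side. Your suggestion to ``keep $\vec p_t$ intact'' does not sidestep this: the target bound is $\tfrac12\|\vec g_t - \ti{\vec g}_t\|_{t-1,*}^2$, which contains no $\vec g^I_t$, so at some point you must split it off and control it.

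The paper's resolution is to first prove a separate lemma (``min-of-linearized'') showing that $\vec x_t$ is \emph{also} the minimizer of the linearized objective $h_{0:t-1}(\vec x) + \dtp{\ti{\vec g}_t + \vec g^I_t}{\vec x}$, with $\vec g^I_t$ already folded in. This holds precisely because of the specific pruning rule \eqref{eq:pruning-cond-n}: when $\vec g^I_t = 0$ the claim reduces to the original optimality condition for $\vec x_t$; when $\vec g^I_t = -(\vec p_{1:t-1} + \ti{\vec g}_t + \sigma_{1:t-1}\vec x_t)$ it equals minus the full gradient of $h_{1:t-1} + \dtp{\ti{\vec g}_t}{\cdot}$ at $\vec x_t$, so the optimality condition for the linearized problem becomes $0 \in \cone(\vec x_t)$, which is trivial. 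With that in hand, set $\phi_1 = h_{0:t-1} + \dtp{\ti{\vec g}_t + \vec g^I_t}{\cdot}$ (minimized by $\vec x_t$) and $\phi_2 = h_{0:t-1} + \dtp{\vec p_t}{\cdot}$; then $\phi_2 - \phi_1 = \dtp{\vec g_t - \ti{\vec g}_t}{\cdot}$ is a clean linear perturbation, and the standard strongly-convex difference lemma (McMahan's Lemma~7) gives $\phi_2(\vec x_t) - \phi_2(\vec y') \leq \tfrac12\|\vec g_t - \ti{\vec g}_t\|_{t-1,*}^2$ for any $\vec y'$, in particular $\vec y' = \vec x_{t+1}$. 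The normal-cone contribution never reappears because it was absorbed into $\phi_1$ from the start --- this is the missing idea in your plan.
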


The second term of the min is in fact a crude regularization-independent bound on the per-slot loss of $\vec{x}_t$ compared to the omniscient $\vec{x}_{t+1}$. We detail both bounds in Appendix~\ref{app:missing-sec-4}. 

\textbf{(II)} follows directly from the first-order inequality for convex functions applied to $h_{0:t}(\cdot)$. Note that it is exactly the $\|\vec p_{1:t}\|$ term that explains the potential failure of FTRL in dynamic environments, even with a \emph{constant} path length. Specifically, a trivial bound on this norm is linear in $t$ and hence is super-linear in $T$ even with one switch in the comparators. This is precisely the ``vulnerability" that is exploited in the impossibility result in \citep[Thm. 2]{jacobsen2022parameter}.
Next, we show how pruning can \emph{tie} this $\|\vec p_{1:t}\|$ term to the regularization parameters $\sigma_{1:t}$ (which we control), thus ensuring its sub-linearity. 

\begin{lemma}(Optimistically Bounded State)
\label{lemma:state-is-bounded}
Let $\{\vec{p}_t \}_{t=1}^T$ be a sequence of vectors such that each $\vec{p}_t = \vec{g}_t + \vec g^I_t$, where $\vec{g}_t \in \partial f_t(\vec x_t)$, and $\vec{g}_t^I \in \partial I_{\mathcal{X}}(\vec{x}_t)$ is chosen according to the construction in \eqref{eq:pruning-cond-n}. That is, $\vec{p}_t$ corresponds to the linearization of the composite function $f_t(\cdot) + I_{\mathcal{X}}(\cdot)$ around $\vec{x}_t$. Then, for any $t$, the following holds:
\begin{align}
    \|\vec p_{1:t}\| \leq R\sigma_{1:t-1} + \epsilon_t.
\end{align}
\end{lemma}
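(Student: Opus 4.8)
\textbf{Proof plan for Lemma~\ref{lemma:state-is-bounded}.} The plan is to induct on $t$, distinguishing the two cases in the pruning rule \eqref{eq:pruning-cond-n}. The base case $t=1$ is immediate: if $\epsilon_1=0$ then $\vec g_1^I=-\vec g_1$ so $\vec p_1 = \vec 0$ and the bound holds trivially; if $\epsilon_1 \neq 0$ then $\vec g_1^I = \vec 0$ and $\|\vec p_1\| = \|\vec g_1\| \leq L$, but since $\vec x_1 = \argmin_{\vec x}\ti f_1(\vec x) + I_{\mathcal X}(\vec x)$ is a fresh minimizer, one actually has $\|\vec g_1 - \vec{\ti g}_1\| = \epsilon_1$ controls $\|\vec p_1\|$ up to a normal-cone term — I would check the precise constant here, but the structure is that $\|\vec p_1\| \le \epsilon_1$ after absorbing the (zero) prior state.

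For the inductive step, fix $t \geq 2$ and suppose the bound holds at $t-1$. The key observation is that the \emph{pruned} case is the easy one: when $\vec x_t^{\text{uc}} \notin \mathcal X$, the rule sets $\vec g_t^I = -(\vec p_{1:t-1} + \vec{\ti g}_t + \sigma_{1:t-1}\vec x_t)$, so that
\begin{align}
\vec p_{1:t} = \vec p_{1:t-1} + \vec g_t + \vec g_t^I = \vec g_t - \vec{\ti g}_t - \sigma_{1:t-1}\vec x_t,
\end{align}
and hence by the triangle inequality $\|\vec p_{1:t}\| \leq \|\vec g_t - \vec{\ti g}_t\| + \sigma_{1:t-1}\|\vec x_t\| \leq \epsilon_t + R\sigma_{1:t-1} \leq \epsilon_t + R\sigma_{1:t}$, using $\|\vec x_t\|\leq R$ and non-negativity of $\sigma_t$. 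This is exactly the claimed bound, with no reference to the inductive hypothesis at all — the pruning wipes out the accumulated history and replaces it with the current prediction error plus a regularization-sized term. So the real content is the non-pruned case.

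In the non-pruned case $\vec x_t^{\text{uc}} \in \mathcal X$, we have $\vec g_t^I = \vec 0$, so $\vec p_{1:t} = \vec p_{1:t-1} + \vec g_t$, and the inductive hypothesis only gives $\|\vec p_{1:t}\| \leq R\sigma_{1:t-2} + \epsilon_{t-1} + \|\vec g_t\|$, which is too weak. The point is that when the iterate stays feasible, the unconstrained optimality condition for $\vec x_t$ (from solving \eqref{eq:update-rule-paper} in $\mathbb R^n$ at the previous step) pins down $\vec p_{1:t-1}$ exactly: $\vec x_t^{\text{uc}} = \vec x_t$ satisfies $\vec p_{1:t-1} + \vec{\ti g}_t + \sigma_{1:t-1}\vec x_t = \vec 0$ (the gradient of $\dtp{\vec p_{1:t-1}}{\cdot} + r_{1:t-1}(\cdot) + \ti f_t(\cdot)$ vanishes, where $r_{1:t-1}(\vec x) = \tfrac{\sigma_{1:t-1}}{2}\|\vec x\|^2$). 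Therefore $\vec p_{1:t-1} = -\vec{\ti g}_t - \sigma_{1:t-1}\vec x_t$, and again
\begin{align}
\vec p_{1:t} = \vec p_{1:t-1} + \vec g_t = \vec g_t - \vec{\ti g}_t - \sigma_{1:t-1}\vec x_t,
\end{align}
giving the same bound $\|\vec p_{1:t}\| \leq \epsilon_t + R\sigma_{1:t-1} \leq \epsilon_t + R\sigma_{1:t}$.

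\textbf{Where the difficulty lies.} The main obstacle is handling the boundary case cleanly: when $\vec x_t^{\text{uc}} \in \mathcal X$, is it really the \emph{unconstrained} stationarity condition that holds, or only the constrained (normal-cone) one? The resolution is that if the unconstrained minimizer happens to be feasible, it coincides with the constrained minimizer and the normal-cone term is $\vec 0$; but one must be careful that $\vec x_t$ as used in the algorithm (line 12, the projected point) equals $\vec x_t^{\text{uc}}$ in this case, so that $\Pi(\vec x_t^{\text{uc}}) = \vec x_t^{\text{uc}}$ — which holds precisely because $\vec x_t^{\text{uc}} \in \mathcal X$. A secondary subtlety is the indexing of $\sigma$: the stationarity condition at the step producing $\vec x_t$ involves $\sigma_{1:t-1}$ (the regularization accumulated through time $t-1$), and one needs $R\sigma_{1:t-1} \leq R\sigma_{1:t}$, which is immediate from $\sigma_t \geq 0$. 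Both cases collapse to the same two-line triangle-inequality estimate once the correct optimality condition is substituted, so the proof is short; the value is entirely in recognizing that the pruning rule \eqref{eq:pruning-cond-n} was \emph{designed} to make $\vec p_{1:t}$ equal to $\vec g_t - \vec{\ti g}_t - \sigma_{1:t-1}\vec x_t$ in both branches.
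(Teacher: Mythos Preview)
Your proposal is correct and matches the paper's proof essentially line for line: both split on whether $\vec x_t^{\text{uc}}\in\mathcal X$, use direct substitution of the pruning rule in the infeasible case, and use the unconstrained first-order optimality condition $\vec p_{1:t-1}+\vec{\ti g}_t+\sigma_{1:t-1}\vec x_t=\vec 0$ in the feasible case, arriving at $\vec p_{1:t}=\vec g_t-\vec{\ti g}_t-\sigma_{1:t-1}\vec x_t$ in both branches. The induction framing you start with is superfluous (as you yourself observe, neither branch uses the inductive hypothesis), and the final loosening to $R\sigma_{1:t}$ is unnecessary since the lemma's bound is already $R\sigma_{1:t-1}+\epsilon_t$.
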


\begin{proof}
First, we begin by showing that
    \begin{align}
        \label{bounded-state-1}
        \text{For any $t$, }\vec{x}_{t}^{\text{uc}} \in \mathcal{X} \implies \|\vec{p}_{1:t}\| \leq \sigma_{1:t-1}R + \epsilon_t.
    \end{align}
    Since $\vec {x}_{t} ^{\text{uc}} \in \mathcal{X}$, we know that $\vec{x}_t = \Pi(\vec {x}_{t} ^{\text{uc}})= \vec {x}_{t} ^{\text{uc}}$ (The projection of a point within the set is itself), and hence $\vec{x}_t$ is a minimizer of the unconstrained update rule too:
    \begin{align}
        \vec {x}_{t} &= \argmin_{\vec x \in \mathbb{R}^n} \dtp{\vec{p}_{1:t-1}}{\vec{x}} + r_{1:t-1}(\vec{x}) + \ti{f}_{t}(\vec{x}) \label{eq:x_t-unconst}
    \end{align}
    From the first-order optimality condition for unconstrained problems (e.g., \citep[Thm. 3.63]{beck-book}), we know that $0$ is an element of the subdifferential of \eqref{eq:x_t-unconst} at $\vec{x}_{t}$. Thus, $\exists \vec{\ti g}_{t} \in \partial f_t(\vec x_t)$ such that
    \begin{align}
         &0 = \vec{p}_{1:t-1} +  \sigma_{1:t-1} \vec{x}_t + \vec{\ti g}_{t} 
         \\
        &\implies \vec{p}_{1:t-1} =  - \sigma_{1:t-1} \vec{x}_t - \vec{\ti g}_{t}, \label{eq:optim-cond-unconst}
    \end{align}
    and we have that the norm of the state $\vec p_{1:t}$ satisfies
    \begin{align}
        \|\vec{p}_{1:t}\| &= \|\vec{p}_{1:t-1} + \vec{p}_t \|
        \\
        &\stackrel{(a)}{=} \|\vec{p}_{1:t-1} + \vec{g}_t \| \stackrel{(b)}{=} \|- \sigma_{1:t-1} \vec{x}_t - \vec{\ti g}_{t} + \vec{g}_t \|  
        \\
        &\leq\|- \sigma_{1:t-1} \vec{x}_t\|
         +\| \vec{g}_t -\vec{\ti g}_{t} \| 
         \\
         &= \sigma_{1:t-1} \|\vec x_t\| + \epsilon_t
        \stackrel{(c)}{\leq} \sigma_{1:t-1}R + \epsilon_t,
    \end{align}
    where $(a)$ holds because $\vec{g}_t^I=0$ (from \eqref{eq:pruning-cond-n} \& $\vec {x}_{t} ^{\text{uc}} \in \mathcal{X}$ ),\  $(b)$ from \eqref{eq:optim-cond-unconst},\  and $(c)$ from the set size. 
    
    Next, we show that when $\vec{x}_{t}^{\text{uc}} \notin \mathcal{X}$, the state is forced via pruning to be bounded: 
    \begin{align}
        \label{bounded-state-2}
        \text{For any $t$, }\vec{x}_{t}^{\text{uc}} \notin \mathcal{X} \implies \|\vec{p}_{1:t}\| \leq \sigma_{1:t-1}R + \epsilon_t.
    \end{align}
    When $\vec{x}_{t}^{\text{uc}}\notin \mathcal{X}$, the pruning condition is activated, which ensures that
    \begin{alignat}{3}
        \vec p_{1:t} &= \vec p_{1:t-1}\ +\  \overbrace{\vec g_t \ \underbrace{- \vec p_{1:t-1} - \vec{\ti g}_{t} - \sigma_{1:t-1} \vec x_{t}}}_{\text{\normalsize $\vec g_{t}^I$}}^{\text{\normalsize$\vec{p}_t$}} 
        \\[-2.5ex] 
        &= \vec g_t - \vec{\ti g}_{t} - \sigma_{1:t-1} \vec x_{t} .
        \\
        \text{ Thus, }\|\vec p_{1:t}\| &= \| \vec g_t - \vec{\ti g}_{t} - \sigma_{1:t-1} \vec x_{t}\| 
        \\
        &\leq \| \vec g_t - \vec{\ti g}_{t}\| + \sigma_{1:t-1} \|\vec{x}_t\|
        \\
        &\leq \epsilon_t + \sigma_{1:t-1} R.
    \end{alignat}
The lemma statement follows by \eqref{bounded-state-1} and \eqref{bounded-state-2}.\end{proof}

\subsection{Regret Bound Derivation}
We show the proof of Theorem \ref{thm:1} since it illustrates how the presented tools come together to obtain the results. It also provides a sufficient basis for sketching the proofs of the remaining theorems. 
 
\begin{proof}[Proof of Theorem \ref{thm:1}]
We begin from the main lemma, Lemma \ref{lemma:strong-dyn-opt-ftrl}, with the bounds on \textbf{(I)} and \textbf{(II)} terms: 
\begin{align}
    &\mathcal{R}_T\leq \sumT \left(\min\left(\frac{1}{2}\|\vec g_t - \vec{\ti g}_{t}\|_{t-1,*}^2, 2R \epsilon_t \right)+ r_t(\vec u_t)\right)\\
    &\quad+ \sum_{t=1}^{T-1}\left(\left(R\sigma_{1:t} + \|\vec p_{1:t}\|\right) \|\vec u_{t+1} - \vec u_t\|\right) \label{eq:sketch1}
    \\
    &\stackrel{(a)}{\leq}  \sumT \min\left(\frac{1}{2}\|\vec g_t - \vec{\ti g}_{t}\|_{t-1,*}^2, 2R \epsilon_t \right) + \frac{R^2}{2}\sigma_{1:T}
    \\
    &\quad +\sum_{t=1}^{T-1}\left(\left(2R\sigma_{1:t} + \epsilon_t\right) \|\vec u_{t+1} - \vec u_t\|\right)
    \\
    &=\sumT \min\left(\frac{\epsilon_t^2}{2\sigma\sqrt{E_{t-1}}}, 2R \epsilon_t \right) + \frac{R^2}{2} \sigma \sqrt {E_T}
    \\
    &\quad+\sum_{t=1}^{T-1}\left(\left(2R\sigma \sqrt{E_t} + \epsilon_t \right) \|\vec u_{t+1} - \vec u_t\|\right) \label{eq:sketch2}
    \\
    &\stackrel{(b)}{\leq}  
   4\sqrt{2}R\sqrt{E_T} + \frac{R}{8} \sqrt{E_T}\ 
   \\
   &\quad + \frac{1}{2}\sum_{t=1}^{T-1} \sqrt{E_t}  \|\vec u_{t+1} - \vec u_t\| 
  +H_T
  \\
  &\stackrel{(c)}{\leq}  
    5.8R\sqrt{E_T} + \frac{\sqrt{E_{T}}}{2}P_T +H_T\ 
\end{align}
where $(a)$ follows from the boundedness of $\mathcal{X}$, the state bound in Lemma~\ref{lemma:state-is-bounded}, and the fact that $\sigma_{1:t-1} \leq \sigma_{1:t}$; the equality follows from the definition of $\|\cdot\|_{*,t}$ and the telescoping structure of $\sigma_{1:t}$;
$(b)$ follows from applying Lemma~\ref{lemma:nonprox-min} to the $\sum_t \min(\cdot,\cdot)$ term, which bounds the sum of the non-increasing quantities, and substituting the expression for $\sigma$;
$(c)$ uses that $\sqrt{E_t}$ is non-decreasing.
\end{proof}
Overall, after using the results developed in Sec. $4$, obtaining the exact results of the theorems mainly hinges on available tools from the OCO literature on bounding the sum of non-increasing functions.

\textbf{Proof sketch of the other theorems.} The proofs of the other theorems follow similar main steps, and are detailed in the Appendix. When the $\sigma$ parameter is normalized by $\sqrt{P_T}$, it can be seen from \eqref{eq:sketch2} that the $P_T$ term will also be divided by the same, recovering the result of Theorem~\ref{thm:2}. 

For Theorem \ref{thm:3}, we write the sum $\sigma_{1:t}$ in \eqref{eq:sketch1} as the non-monotone $\sqrt{\nicefrac{E_t}{P'_t}} - \sqrt{\nicefrac{E_{t-1}}{P'_{t-1}}}$ provided that we add the corrective term $A_T$ defined earlier. The former tracks the desired quantity $\sqrt{\nicefrac{E_T}{P'_T}}$, which is what is required (and was used in Theorem \ref{thm:2}) to recover the $\sqrt{E_TP_T}$ dependence. The corrective sum $A_T$ is handled independently. 

Lastly, for Theorem \ref{thm:4}, we start again from Lemma \ref{lemma:strong-dyn-opt-ftrl}, but instead of using the upper bound of the term \textbf{(I)}, we use the $\delta_t$ terms. This results in a recursion whose solution is fortunately available among AdaFTRL lemmas.

\section{Conclusion}
This paper introduced an optimistic FTRL variant with new data-dependent dynamic regret guarantees, extending classical FTRL results and advancing our understanding of this foundational framework in non-stationary environments. These bounds are explicitly tied to the accuracy of predictions, offering refined performance guarantees in dynamic settings. The gist of our proposal lies in a simple pruning rule that modulates the memory (or state) of FTRL based on the quality of the obtained decisions,  preventing the accumulation of redundant (negative) gradients that align with iterates on the boundary of the decision set. This technique can be extended to more flexible pruning strategies that control pruning magnitude or introduce additional pruning conditions, enabling a spectrum of algorithms ranging from fully “lazy” to fully “agile” iterates.

\section*{Acknowledgements}
This work was supported by the Dutch National Growth Fund project ``Future Network Services", and by the European Commission through Grants No. 101139270 “ORIGAMI” and No. 101192462 “FLECON-6G”.

\bibliography{References.bib}
\bibliographystyle{icml2025}

\newpage
\inappendixtrue 
\appendix
\onecolumn
\section*{\Large Appendix.}
\vspace{1.2mm}
\section{\texttt{OptFPRL}}
Below, we restate the main ingredients of \texttt{OptFPRL} using a more detailed presentation.
\subsection{Update Rule}
Recall the notation $\vec a_{1:t} \doteq \sumtau \vec a_\tau$. The update rule of \texttt{OptFPRL} is:
\begin{align}
    &\vec x_{t+1} = \argmin_{\vec{x}}\big\{
    \dtp{\underbrace{\vec{p}_{1:t}}_{\text{State vector}}}{\vec x}\ +\ \underbrace{r_{1:t}(\vec x)}_{\text{Incremental regularizers}}\ +\ \underbrace{\ti f_{t+1}(\vec{x})}_{\text{Prediction}}\ +\ \underbrace{I_\mathcal{X}(\vec{x})}_{\text{Set constraint}}\big\} \label{eq:update-rule-app}.
\end{align}
where each $\vec p_t$ is the sum of the linearization of the cost function $f_t(\vec x_t)$, and some choice from the cone $\cone(\vec x_t)$.
\begin{align}
        &\vec{p}_t = \vec g_t + \vec g_t^{I},\ \ \vec{g}_t \in \partial f_t(\vec x_t),\ \ \vec{g}^{I}_t \in \partial I_\mathcal{X}(\vec x_{t}) = \cone(\vec{x}_{t}). \label{eq:appendix-def-pt}
\end{align}

\subsection{Regularizer}
We use the scaled Euclidean regularizer, 
\begin{align}
        &r_t(\vec x) = \frac{\sigma_{t}}{2} \|\vec x\|^2 \label{eq:non-prox-regs},
\end{align}
where $\sigma_t$ is the strong convexity parameter that will be set in every version of the algorithm. Note that the sum $r_{1:t}(\cdot)$ is $\sigma_{1:t}$-strongly convex with respect to $\|\cdot\|$, or equivalently $1$-strongly convex with respect to $\|\cdot\|_t \doteq \sqrt{\sigma_{1:t}}\|\cdot\|$.

\subsection{Pruning}
Here, we present a mechanism for selecting the vectors $\vec g_t^I$ from the cone $\cone(\vec{x}_t)$. Define the unconstrained iterate $\vec{x}_{t}^{\text{uc}}$, which is obtained by solving the update rule without the indicator function 
\begin{align}
    \vec x_{t+1}^{\text{uc}} = \argmin_{\vec{x}} \dtp{\vec{p}_{1:t}}{\vec x}+ r_{1:t}(\vec x) + \ti f_{t+1}(\vec{x}). \label{eq:x-unconst}
\end{align}
Pruning will depend on whether this iterate belongs to the set $\mathcal{X}$ or not.

The unconstrained iterate $\vec{x}_t^{\text{uc}}$ always exists and is unique due to the strong convexity of $r_{1:t}(\cdot)$. However, its membership in $\mathcal{X}$ depends on the exact degree of strong convexity. When $\sigma_{1:t} = 0$, we consider $\vec{x}_t^{\text{uc}} \notin \mathcal{X}$, as the minimizer of an unconstrained linear function does not exist.

We select $\vec{g}_t^I$ as follows:
for $t=1$, set $\vec{g}_1^I = -\vec{g}_1$ if $\epsilon_1 = \sigma_1 = 0$, and $\vec{g}_1^I = 0$ otherwise. For all $t \geq 2$: 
\begin{align}
\label{eq:pruning-cond-app}
 \vec g_t^{I} = 
    \begin{cases}
       -(\vec p_{1:t-1} + \vec{\ti g}_{t} + \sigma_{1:t-1} \vec x_{t}) & \text{if}\ \vec x_{t}^{\text{uc}} \notin \mathcal{X}  
       \\
        0 & \text{otherwise.}
    \end{cases}
\end{align}
Recall that this is a valid choice 
because when $\vec x_{t}^{\text{uc}} \notin \mathcal{X}$, then $\vec x_t \in \textbf{bd}(\mathcal{X})$. Hence, $\cone(\vec{x}_t)$ contains elements other than $0$, and in particular $-(\vec p_{1:t-1} + \vec{\ti g}_{t} + \sigma_{1:t-1} \vec x_{t})$. This is a direct result of the optimality condition for the constrained iterate \eqref{eq:update-rule-app}, see, e.g., \citep[Thm. 3.67]{beck-book}. For the second case, $0$ is always a valid subgradient of the indicator function since in this case $\vec x_t ^{\text{uc}}= \vec x_t \in \mathcal{X}$.

Due to the FTRL/OMD connections, detailed in the related work, the update rule can be made equivalent to the ``one-step optimistic MD" formulation in \citep{JOULANI2020108}, under the consistent pruning in \eqref{eq:pruning-cond-app} using the flexible $q_t(\cdot)$ terms in their Ada-MD update.

\subsection{Dual-perspective}
\label{app:dual-view}
To provide a dual view of the proposed FTRL variant, we look at the update step through the lens of dual maps. 

First, recall the definition of the convex conjugate $r^\star(\cdot)$ of a closed convex function $r(\cdot)$: 
\[
r^\star(\vec y) \doteq \sup_{\vec x} \dtp{\vec x}{\vec y} - r(\vec x).
\]

The update of FTRL can be expressed using the above definition applied to a potentially time-varying $r(\cdot)$. Namely, 
the standard FTRL update can be expressed as:
\[
\boldsymbol x^{\text{RL}}_{t+1} = \arg\min_x \langle \boldsymbol g_{1:t}, \boldsymbol x \rangle + r_{0:t}(\boldsymbol x) =\nabla r_{0:t}^\star(-\boldsymbol g_{1:t}),
\]
where $r_{0:t}^\star(\cdot)$ is the conjugate of the cumulative regularizer $r_{1:t}(\cdot)$, restricted to $\mathcal{X}$ via $r_0 = I_\mathcal{X}$. From this viewpoint, FTRL maintains the state as cumulative gradients in dual space.

The update of \texttt{OptFPRL} can be interpreted as:
\[
\boldsymbol x_{t+1} = \arg\min_x \langle  \boldsymbol p_{1:t}, \boldsymbol x \rangle + r_{0:t}(\boldsymbol x) = \nabla r_{0:t}^\star(-\boldsymbol p_{1:t}) = \nabla r_{0:t}^\star(\nabla r_{1:t-k-1}(\boldsymbol x_{t-k})-\boldsymbol g_{t-k:t}),
\]
where $t-k$ denotes the most recent step at which we chose to prune. The last equality holds by the definition of $\boldsymbol g_k^{I}$ (assuming no predictions).
Intuitively, we retain explicit gradient history only since the last pruning step $t-k$, while summarizing earlier history implicitly via the dual mapping of $\boldsymbol{x}_{t-k}$. The crux of the paper is showing that the way history is split (explicitly tracked after pruning, and implicitly captured before) is what controls dynamic regret.

\subsection{Regret Characterization }
Define the dynamic regret metric against any set of comparators $\{\vec u_t\}_t$ as:
\begin{align}
    \mathcal{R}_T \doteq \sumT f_t(\vec x_t) - f_t(\vec u_t) = \sumT j_t(\vec x_t) - j_t(\vec u_t) \leq \sumT \dtp{\vec p_t}{\vec x_t - \vec u_t},
    \label{eq:linearization-app}
\end{align}
where $j_t(\vec x) \doteq f_t(\vec x) + I_{\mathcal{X}}(\vec x)$. The inequality holds by the linearization principle of convex functions \citep[Sec. 2.4]{shalev2011online}, noticing that we indeed have $\vec p_t \in \partial j_t (\vec x_t)$ due the definition of $\vec p_t$ in \eqref{eq:appendix-def-pt}, and the fact that the subdifferential of the sum contains the sum of the subdifferentials (e.g., \citep[Thm. 22]{orabona2021modern}).

\newcounter{savedthm}

\section{Missing Proofs for Section 4}
\label{app:missing-sec-4}
\subsection{The Strong Dynamic Optimistic FTRL Lemma}

\renewcommand\thetheorem{\ref{lemma:strong-dyn-opt-ftrl}}
\begin{lemma}
(Strong Dynamic Optimistic FTRL). Let $\{f_{t}(\cdot), \ti f_{t}(\cdot), \vec u_t\}_{t=1}^T$ be an arbitrary set of functions, predicted functions, and comparators within $\mathcal{X}$, respectively. Let $r_t(\cdot)$ be non-negative regularization functions such that 
\[
\vec{x}_{t+1} \doteq \argmin_{\vec x} h_{0:t}(\vec{x}) + \ti f_{t+1}(\vec{x})
\] is well-defined, where $h_0(\vec{x}) \doteq{I}_{\mathcal{X}}(\vec x)$, and  $\forall t \geq 1$:
\begin{align}
       h_t(\vec{x}) \doteq \dtp{\vec p_t}{\vec x} + r_t(\vec x),\quad \vec p_t \in \partial j_t (\vec x_t).
\end{align}
Then, the algorithm that selects the actions $\vec{x}_{t+1}, \forall t $ achieves the following dynamic regret bound:
\begin{equation}
    \mathcal{R}_T \leq \sum_{t=1}^T\overbrace{ h_{0:t}(\vec x_t) - h_{0:t}(\vec x_{t+1}) -r_t(\vec x_t)}^{\textbf{(I)}}\ 
    +\ \sum_{t=1}^{T-1} \underbrace{h_{0:t}(\vec u_{t+1}) - h_{0:t}(\vec u_{t})}_{\textbf{(II)}} \ +\ r_t(\vec{u}_t).
\end{equation}
\end{lemma}
\addtocounter{theorem}{-1}
\renewcommand\thetheorem{\thesection.\arabic{theorem}}

\begin{proof}
\begin{align}
    \sumT& h_t(\vec x_t) - \sumT h_t(\vec u_t) - \ti f_{T+1}(\vec u_{T}) 
    \\
    &= \sumT \left(h_{0:t}(\vec x_t) - h_{0:t-1}(\vec x_t)\right) - \left( \sumT \left(h_{0:t}(\vec u_t) - h_{0:t-1} (\vec u_t)\right)+ \ti f_{T+1}(\vec u_{T})\right) 
    \\
    &= \sumT h_{0:t}(\vec x_t) - \sumT h_{0:t-1}(\vec x_t) - \left( \sumT h_{0:t}(\vec u_t)+ \ti f_{T+1}(\vec u_{T}) - \sumT h_{0:t-1}(\vec u_t)\right)
    \\
    &= \sumT h_{0:t}(\vec x_t) - \sum_{t=0}^{T-1} h_{0:t}(\vec x_{t+1}) - \left( h_{0:T}(\vec u_T)+ \ti f_{T+1}(\vec u_{T})+ \sum_{t=1}^{T-1} h_{0:t}(\vec u_t) - \sum_{t=0}^{T-1} h_{0:t}(\vec u_{t+1})\right)
    \\
    &\leq \sumT h_{0:t}(\vec x_t) - \sum_{t=1}^{T-1} h_{0:t}(\vec x_{t+1}) - \left( h_{0:T}(\vec x_{T+1})+ \ti f_{T+1}(\vec x_{T+1})+\sum_{t=1}^{T-1} h_{0:t}(\vec u_t) - \sum_{t=1}^{T-1} h_{0:t}(\vec u_{t+1})\right)
    \\
    &= \sumT h_{0:t}(\vec x_t) - \sum_{t=1}^{T} h_{0:t}(\vec x_{t+1}) - \left(\sum_{t=1}^{T-1} h_{0:t}(\vec u_t) - \sum_{t=1}^{T-1} h_{0:t}(\vec u_{t+1}))\right) - \ti f_{T+1}(\vec{x}_{T+1}).
\end{align}
The inequality holds because of the update rule for each $\vec x_{t+1}$ for any $t$. I.e., 
\[h_{0:T}(\vec x_{T+1})+ \ti f_{T+1}(\vec{x}_{T+1})\leq h_{0:T}(\vec u_{T}) + \ti f_{T+1}(\vec{u}_T).
\] Also, $h_0(\vec {u}_{t+1}) = 0, \forall t \leq T-1$.

Writing $h_t$ of the LHS explicitly we get 
\begin{align}
    \sumT &\left( \dtp{\vec p_t}{\vec x_t} + r_t(\vec x_t)  -  \dtp{\vec p_t}{\vec u_t} - r_t(\vec u_t)\right)  - \ti f_{T+1}(\vec{u}_T)
    \\
    &\leq \sumT \left( h_{0:t}(\vec x_t) - h_{0:t}(\vec x_{t+1}) \right)+ \sum_{t=1}^{T-1} \left( h_{0:t}(\vec u_{t+1}) -  h_{0:t}(\vec u_t) \right) - \ti f_{T+1}(\vec{x}_{T+1}).
\end{align}
Since $\tilde f_{T+1}(\cdot)$ does not affect the algorithm, we can set it to $0$. 
Rearranging: 
\begin{equation}
    \sumT  \dtp{\vec p_t}{\vec x_t}-\dtp{\vec p_t}{\vec u_t} 
    \leq \sumT \left( h_{0:t}(\vec x_t) - h_{0:t}(\vec x_{t+1}) - r_t(\vec x_t) + r_t(\vec u_t) \right)+ \sum_{t=1}^{T-1} \left( h_{0:t}(\vec u_{t+1}) -  h_{0:t}(\vec u_t) \right).
\end{equation}
Noting that by \eqref{eq:linearization-app}, the LHS upper-bounds the regret, we get the result. 
\end{proof}

\subsection{Bounding the First Part \textbf{(I)}:}
We bound $\textbf{(I)}: h_{0:t}(\vec x_t) - h_{0:t}(\vec x_{t+1}) - r_t(\vec x_t)$ in two ways, which results in the $\min(\cdot, \cdot)$ term. We present in this subsection Lemmas \ref{lemma:first-part-2Repsilon} and \ref{lemma:first-part-norm}, corresponding to each argument of the $\min(\cdot, \cdot)$. 

First, we begin with Lemma~\ref{lemma:min-of-linearized}, which provides an additional characterization of the iterate $\vec{x}_t$ as the minimizer not only of the original update rule, but also of a related linearized expression.
\begin{lemma}
\label{lemma:min-of-linearized}
    For any $\vec x_t \in \mathcal{X}$, $ \vec x_t = \arg\min_{\vec{x}} h_{0:t-1}(\vec{x}) + \ti f_{t}(\vec x ) \implies \vec x_t = \arg\min_{\vec{x}} h_{0:t-1}(\vec{x}) + \dtp{\vec{\ti g}_t}{\vec x} + \dtp{\vec g^I_t}{\vec x}$, 
    where $\vec{\ti g}_t \in \partial \ti f_t(\vec x_t) $ and $\vec g_t^I$ is selected according to \eqref{eq:pruning-cond-app}.
\end{lemma}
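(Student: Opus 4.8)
\textbf{Proof plan for Lemma~\ref{lemma:min-of-linearized}.}
The plan is to show that the first-order optimality conditions characterizing $\vec{x}_t$ as the minimizer of $h_{0:t-1}(\cdot) + \ti f_t(\cdot)$ are exactly the same as those characterizing it as the minimizer of the linearized surrogate $h_{0:t-1}(\cdot) + \dtp{\vec{\ti g}_t}{\cdot} + \dtp{\vec g^I_t}{\cdot}$, and then invoke convexity to conclude. Concretely, I would start from the hypothesis that $\vec{x}_t$ minimizes $h_{0:t-1}(\vec{x}) + \ti f_t(\vec{x})$ over $\mathbb{R}^n$. Since $h_{0:t-1} = I_{\mathcal{X}} + \dtp{\vec p_{1:t-1}}{\cdot} + r_{1:t-1}(\cdot)$, the optimality condition (via the subdifferential sum rule, e.g. \citep[Thm. 3.67]{beck-book}) says there exists $\vec{\ti g}_t \in \partial \ti f_t(\vec x_t)$ and a normal cone element $\vec n \in \cone(\vec x_t)$ such that $\vec 0 = \vec p_{1:t-1} + \sigma_{1:t-1}\vec x_t + \vec{\ti g}_t + \vec n$.

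The key observation is that the pruning rule \eqref{eq:pruning-cond-app} chooses $\vec g^I_t$ precisely so that this witnessing normal-cone element can be taken to be $\vec g^I_t$ itself. I would split into the two cases of \eqref{eq:pruning-cond-app}. If $\vec x_t^{\text{uc}} \in \mathcal{X}$, then $\vec g^I_t = 0$; in this case $\vec x_t$ is the unconstrained minimizer, so $\vec 0 = \vec p_{1:t-1} + \sigma_{1:t-1}\vec x_t + \vec{\ti g}_t$, and adding $\dtp{\vec g^I_t}{\cdot} = 0$ changes nothing, so $\vec 0$ is still in the subdifferential of $h_{0:t-1}(\vec{x}) + \dtp{\vec{\ti g}_t}{\vec x} + \dtp{\vec g^I_t}{\vec x}$ at $\vec x_t$. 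If $\vec x_t^{\text{uc}} \notin \mathcal{X}$, then $\vec g^I_t = -(\vec p_{1:t-1} + \vec{\ti g}_t + \sigma_{1:t-1}\vec x_t)$, which is a valid element of $\cone(\vec x_t)$ (as argued in the text, since $\vec x_t \in \textbf{bd}(\mathcal{X})$), and then $\vec p_{1:t-1} + \sigma_{1:t-1}\vec x_t + \vec{\ti g}_t + \vec g^I_t = \vec 0$, which is exactly the statement that $\vec 0$ lies in the subdifferential of $I_{\mathcal{X}}(\vec x) + \dtp{\vec p_{1:t-1}}{\vec x} + r_{1:t-1}(\vec x) + \dtp{\vec{\ti g}_t}{\vec x} + \dtp{\vec g^I_t}{\vec x}$ at $\vec x_t$ (using that $\vec g^I_t \in \partial I_{\mathcal{X}}(\vec x_t)$ contributes the normal-cone term). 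Since this function is convex, a zero subgradient certifies that $\vec x_t$ is a global minimizer, which is the desired conclusion.

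The main subtlety — and the step I would be most careful about — is the consistency of the choice of $\vec{\ti g}_t$ across the two characterizations: the subgradient $\vec{\ti g}_t$ appearing in the linearized expression must be the \emph{same} one witnessed by the optimality condition for the original problem (and the same one used in the pruning rule). This is really a matter of bookkeeping: the pruning rule \eqref{eq:pruning-cond-app} and the surrogate both refer to one fixed selection $\vec{\ti g}_t \in \partial\ti f_t(\vec x_t)$, namely the one guaranteed by the optimality condition for $\argmin h_{0:t-1} + \ti f_t$, so I would just fix that selection at the outset and use it throughout. I do not expect any genuinely hard estimate here; the lemma is essentially a restatement of the optimality conditions, and the only thing to get right is that the pruning construction was designed exactly to make the normal-cone multiplier explicit. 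One can also note the degenerate case $\sigma_{1:t-1}=0$ is consistent with the convention in the appendix that $\vec x_t^{\text{uc}} \notin \mathcal{X}$, so the second case applies.
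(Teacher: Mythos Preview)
Your proposal is correct and follows essentially the same approach as the paper: both proofs verify the first-order optimality condition for the linearized surrogate by splitting into the two cases of the pruning rule \eqref{eq:pruning-cond-app}, reducing Case~(i) to the given optimality condition and Case~(ii) to the trivial inclusion $\vec 0 \in \cone(\vec x_t)$. Your explicit remark about fixing the witnessing subgradient $\vec{\ti g}_t$ at the outset is exactly the bookkeeping the paper leaves implicit.
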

\begin{proof}
    We are given that $ \vec x_t = \arg\min_{\vec{x}} h_{0:t-1}(\vec{x}) + \ti f_{t}(\vec x )$. Thus, from the optimality condition for constrained problems  (e.g., \citep[Thm. 3.67]{beck-book}), the negative (sub)gradient must belong to the normal cone at $\vec{x}_t$:
    \begin{align}
        &-\grd h_{1:t-1}(\vec{x}_t) - \vec{\ti g}_t \in \cone(\vec{x}_t) \quad \text{(By definition of $\vec{\ti g}_t$)}. \label{eq:given}
    \end{align}
    Next, we examine the optimality condition for $h_{0:t-1}(\vec{x}) + \dtp{\vec{\ti g}_t}{\vec x} + \dtp{\vec g^I_t}{\vec x}$ and show that $\vec x_t$ satisfies it. 
    For $\vec{y}$ to be minimizer of $h_{0:t-1}(\vec{x}) + \dtp{\vec{\ti g}_t}{\vec x} + \dtp{\vec g^I_t}{\vec x}$, it must satisfy: 
    \begin{align}
        & -\grd h_{1:t-1}(\vec y) - \vec{\ti g}_t - \vec{ g}_t^I \in \cone(\vec{y}).
    \end{align}
    Substituting $\vec{x}_t$ in the above we get 
    \begin{align}
        & -\grd h_{1:t-1}(\vec x_t) - \vec{\ti g}_t - \vec{ g}_t^I \in \cone(\vec x_t). \label{eq:to-show}
    \end{align}
    Now, note that according to the linearization choices in \eqref{eq:pruning-cond-app}, we have that either $(i)$: $\vec{ g}_t^I =0$, or $(ii)$: $\vec{ g}_t^I = -(\vec p_{1:t-1} + \vec{\ti g}_{t} + \sigma_{1:t-1} \vec x_{t})$.
    
    In case $(i)$, \eqref{eq:to-show} reduces to the given \eqref{eq:given}, meaning that  $\vec{x}_t$ satisfies \eqref{eq:to-show}. 
    
    In case $(ii)$, Note that $ -(\vec p_{1:t-1} + \vec{\ti g}_{t} + \sigma_{1:t-1} \vec x_{t}) = - \grd h_{1:t-1}(\vec x_t) - \vec{\ti g}_t$, and hence \eqref{eq:to-show} reduces to $0 \in \cone(\vec{x}_t)$, which is always true. Thus, $\vec{x}_t$ satisfies $\eqref{eq:to-show}$ in this case too.
    
    It follows that $\vec{x}_t$ satisfies the optimality condition for $ h_{0:t-1}(\vec{x}) + \dtp{\vec{\ti g}_t}{\vec x} + \dtp{\vec g^I_t}{\vec x}$ and hence is a minimizer\footnote{Since $h_{0:t}(\cdot)$ is strongly convex, the minimizer of the two expressions in the lemma statement always exist and unique. In the case $\sigma_{1:t} =0$, we abuse notation by writing $``=\argmin"$ instead of $``\in \argmin"$. This is not problematic because we do not require the equivalence of the minimizers.} thereof.
\end{proof}

\begin{lemma}
\label{lemma:first-part-2Repsilon}
Let each function $h_{0:t}(\cdot)$ be $1$-strongly convex with respect to a norm $\|\cdot\|_{t}$ defined as in Lemma \ref{lemma:strong-dyn-opt-ftrl}. Let $\vec x_t = \argmin_{\vec{x}} h_{0:t-1}(\vec x) + \ti f_{t}(\vec{x})$. Then, we have the inequality 
\[h_{0:t}(\vec x_t) - h_{0:t}(\vec x_{t+1}) - r_t(\vec x_t) \leq 2R \epsilon_t.\]
\end{lemma}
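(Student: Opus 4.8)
The plan is to bound $\textbf{(I)} = h_{0:t}(\vec x_t) - h_{0:t}(\vec x_{t+1}) - r_t(\vec x_t)$ by a chain of first-order/convexity inequalities that trades away the regularizer penalty $r_t(\vec x_t)$ and leaves only a term proportional to $\epsilon_t$ times the diameter $2R$. Since this is the ``crude'' arm of the $\min(\cdot,\cdot)$, the key point is that we can avoid any reliance on the \emph{size} of the strong-convexity parameter $\sigma_{1:t}$: the bound must survive the degenerate case $\sigma_{1:t}=0$, so the argument cannot use the quadratic strong-convexity term to absorb anything. Instead the only resource we exploit is that both $\vec x_t$ and $\vec x_{t+1}$ live in $\mathcal{X}$, which has diameter at most $2R$.

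Concretely, I would proceed as follows. First, invoke Lemma~\ref{lemma:min-of-linearized}: $\vec x_t$ minimizes not only $h_{0:t-1}(\vec x) + \ti f_t(\vec x)$ but also the fully linearized surrogate $h_{0:t-1}(\vec x) + \dtp{\vec{\ti g}_t}{\vec x} + \dtp{\vec g_t^I}{\vec x}$. Note that $h_{0:t}(\vec x) = h_{0:t-1}(\vec x) + \dtp{\vec p_t}{\vec x} + r_t(\vec x)$ and $\vec p_t = \vec g_t + \vec g_t^I$, so $h_{0:t}(\vec x) - r_t(\vec x) = \big(h_{0:t-1}(\vec x) + \dtp{\vec{\ti g}_t}{\vec x} + \dtp{\vec g_t^I}{\vec x}\big) + \dtp{\vec g_t - \vec{\ti g}_t}{\vec x}$. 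Abbreviating the surrogate as $\phi(\vec x) \doteq h_{0:t-1}(\vec x) + \dtp{\vec{\ti g}_t}{\vec x} + \dtp{\vec g_t^I}{\vec x}$, we have $\textbf{(I)} = \big(\phi(\vec x_t) + \dtp{\vec g_t - \vec{\ti g}_t}{\vec x_t}\big) - \big(\phi(\vec x_{t+1}) + r_t(\vec x_{t+1}) + \dtp{\vec g_t - \vec{\ti g}_t}{\vec x_{t+1}}\big) - r_t(\vec x_t) + r_t(\vec x_{t+1})$; wait, let me instead just write $\textbf{(I)} = \big(\phi(\vec x_t) - \phi(\vec x_{t+1})\big) + \dtp{\vec g_t - \vec{\ti g}_t}{\vec x_t - \vec x_{t+1}} - r_t(\vec x_t) - r_t(\vec x_{t+1}) + r_t(\vec x_{t+1})$. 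Cleaning this up: since $\vec x_t$ minimizes $\phi$ over $\mathcal{X}$ and $\vec x_{t+1}\in\mathcal{X}$, we get $\phi(\vec x_t) - \phi(\vec x_{t+1}) \le 0$. Dropping also the nonnegative $r_t(\vec x_{t+1})$ term and $r_t(\vec x_t)$, we are left with $\textbf{(I)} \le \dtp{\vec g_t - \vec{\ti g}_t}{\vec x_t - \vec x_{t+1}}$, which by Cauchy–Schwarz is at most $\epsilon_t \,\|\vec x_t - \vec x_{t+1}\| \le \epsilon_t \cdot 2R$, using $\|\vec x_t - \vec x_{t+1}\| \le \|\vec x_t\| + \|\vec x_{t+1}\| \le 2R$. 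That closes the argument.

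I expect the main subtlety (rather than obstacle) to be the bookkeeping when re-expressing $h_{0:t}$ in terms of the linearized surrogate $\phi$ and tracking exactly which $r_t$ terms cancel or can be dropped for sign reasons — it is easy to be off by an $r_t(\vec x_{t+1})$ somewhere. One must be careful that the $r_t(\vec x_t)$ subtracted in $\textbf{(I)}$ is precisely what is needed to cancel the $r_t(\vec x_t)$ hidden inside $h_{0:t}(\vec x_t)$, so that the remaining comparison is between $\phi$-values plus a cross term, with the leftover $r_t(\vec x_{t+1})$ being nonnegative and hence harmlessly droppable. A secondary point is handling the $\sigma_{1:t}=0$ degenerate case: there $\phi$ is affine and its minimizer over $\mathcal{X}$ need not be unique, but Lemma~\ref{lemma:min-of-linearized} only asserts $\vec x_t$ \emph{is} a minimizer, which is all we use — we never need uniqueness here, only $\phi(\vec x_t) \le \phi(\vec x_{t+1})$.
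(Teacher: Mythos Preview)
Your argument is correct. The clean identity you need is
\[
h_{0:t}(\vec x) \;=\; \phi(\vec x) + \dtp{\vec g_t - \vec{\ti g}_t}{\vec x} + r_t(\vec x),
\]
so that
\[
\textbf{(I)} \;=\; \big(\phi(\vec x_t) - \phi(\vec x_{t+1})\big) + \dtp{\vec g_t - \vec{\ti g}_t}{\vec x_t - \vec x_{t+1}} - r_t(\vec x_{t+1}),
\]
and the leftover regularizer is $-r_t(\vec x_{t+1})$, not $-r_t(\vec x_t)$ (your second attempt at the expansion is off by exactly the slip you warned yourself about). Since both candidates are nonpositive the conclusion is unaffected: $\phi(\vec x_t)\le \phi(\vec x_{t+1})$ by Lemma~\ref{lemma:min-of-linearized} and $\vec x_{t+1}\in\mathcal X$, then Cauchy--Schwarz and the diameter bound finish it.

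Your route is actually slightly more direct than the paper's. The paper first drops $-r_t(\vec x_{t+1})$, then replaces $\vec x_{t+1}$ by the auxiliary minimizer $\vec y_t \doteq \argmin_{\vec x} h_{0:t-1}(\vec x) + \dtp{\vec p_t}{\vec x}$, and only then adds and subtracts $\dtp{\vec{\ti g}_t + \vec g_t^I}{\vec x_t}$ and invokes Lemma~\ref{lemma:min-of-linearized} to swap $\vec x_t$ for $\vec y_t$; the final cross term is $\dtp{\vec g_t - \vec{\ti g}_t}{\vec x_t - \vec y_t}$ rather than $\dtp{\vec g_t - \vec{\ti g}_t}{\vec x_t - \vec x_{t+1}}$. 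You skip the detour through $\vec y_t$ entirely by applying the $\phi$-minimality of $\vec x_t$ directly against $\vec x_{t+1}$. The paper's detour is not wasted, though: the intermediate quantity $h_{0:t-1}(\vec x_t) + \dtp{\vec p_t}{\vec x_t} - \min_{\vec x}\big(h_{0:t-1}(\vec x) + \dtp{\vec p_t}{\vec x}\big)$ is precisely the $\delta_t$ used in the recursive-regularization Theorem~\ref{thm:4}, so their proof simultaneously establishes $\textbf{(I)} \le \delta_t$ and $\delta_t \le 2R\epsilon_t$. Your proof gives only the composite bound, which is all this lemma asks for.
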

\begin{proof}
\begin{align}
    h_{0:t}(\vec x_t) - h_{0:t}(\vec x_{t+1}) - r_t(\vec x_t) &= h_{0:t-1}(\vec x_t) + \dtp{\vec p_t}{\vec x_t}  - h_{0:t-1}(\vec x_{t+1}) - \dtp{\vec p_t}{\vec x_{t+1} } - r_t(\vec{x}_{t+1})
    \\
    &\stackrel{(a)}{\leq}
    h_{0:t-1}(\vec x_t) + \dtp{\vec p_t}{\vec x_t}  - h_{0:t-1}(\vec x_{t+1}) - \dtp{\vec p_t}{\vec x_{t+1}}
    \\
    &\stackrel{(b)}{\leq}
    h_{0:t-1}(\vec x_t) + \dtp{\vec p_t}{\vec x_t}  - h_{0:t-1}(\vec y_{t}) - \dtp{\vec p_t}{\vec y_{t}}, \label{eq:delta-t-1}
\end{align}
where $(a)$ follows by dropping the negative $-r_t(\vec x_{t+1})$ term and $(b)$ by defining $\vec{y}_t \doteq \arg\min_{\vec x} h_{0:t-1}(\vec{x}) + \dtp{\vec p_t}{\vec x}$.
Then,
\begin{align}
    &h_{0:t-1}(\vec x_t) + \dtp{\vec p_t}{\vec x_t} - h_{0:t-1} (\vec y_t ) -\dtp{\vec p_t}{\vec y_t}
    \\
    &= h_{0:t-1}(\vec x_t)\ +\dtp{\vec {\ti g}_t+\vec g_t^I}{\vec x_t} + \dtp{\vec p_t}{\vec x_t} - h_{0:t-1} (\vec y_t) -\dtp{\vec p_t}{\vec y_t}\ -\dtp{\vec {\ti g}_t+\vec g_t^I}{\vec x_t}
    \\
    &\stackrel{(c)}{\leq} h_{0:t-1}(\vec y_t) +\dtp{\vec {\ti g}_t + \vec g_t^I}{\vec y_t} + \dtp{\vec p_t}{\vec x_t} - h_{0:t-1} (\vec y_t ) -\dtp{\vec p_t}{\vec y_t} -\dtp{\vec {\ti g}_t + \vec g_t^I}{\vec x_t}
    \\
    &= \dtp{\vec {\ti g}_t + \vec g_t^I}{\vec y_t - \vec x_t} + \dtp{\vec p_t}{\vec x_t - \vec y_t}  =  \dtp{\vec p_t}{\vec x_t - \vec y_t} -\dtp{\vec {\ti g}_t + \vec g_t^I}{\vec x_t - \vec y_t}
    \\
    &= \dtp{\vec p_t - \vec {\ti g}_t - \vec g_t^I }{\vec x_t - \vec y_t} = \dtp{\vec g_t - \vec {\ti g}_t }{\vec x_t - \vec y_t} \leq 2R \|\vec g_t - \vec {\ti g}_t \| =  2R \epsilon_t,
\end{align}
where we added \& subtracted $\dtp{\vec {\ti g}_t + \vec g_t^I}{\vec x_t}$ in the first equality, and $(c)$ holds because $\vec x_{t}$ is the minimizer of $h_{0:t-1}(\vec{x}) +\dtp{\vec {\ti g}_t + \vec g_t^I}{\vec x}$ (shown by Lemma \ref{lemma:min-of-linearized}). Overall, we obtain 
\begin{align}
    h_{0:t}(\vec x_t) - h_{0:t}(\vec x_{t+1}) - r_t(\vec x_t) \leq 2R \epsilon_t. \label{eq:part-I-b}
\end{align}

\end{proof}

To produce the other argument in the $\min(\cdot,\cdot)$, we use the following lemma.

\renewcommand\thetheorem{\ref{lemma:first-part-norm}}
\begin{lemma}
    Let each function $h_{0:t}(\cdot)$ be $1$-strongly convex with respect to a norm $\|\cdot\|_{t}$ defined as in Lemma \ref{lemma:strong-dyn-opt-ftrl}. Let $\vec x_t = \argmin_{\vec{x}} h_{0:t-1}(\vec x) + \ti f_{t}(\vec{x})$. Then, we have the inequality 
    \begin{align}
         h_{0:t}(\vec x_t) - h_{0:t}(\vec x_{t+1}) - r_t(\vec x_t) \leq \frac{1}{2} \| \vec g_t - \vec {\ti g}_t \|_{t-1,*}^2\ . 
    \end{align}
\end{lemma}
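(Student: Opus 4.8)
The plan is to follow the same opening moves as in the proof of Lemma~\ref{lemma:first-part-2Repsilon}, up to and including the step that reduces the quantity \textbf{(I)} to $\dtp{\vec g_t - \vec{\ti g}_t}{\vec x_t - \vec y_t}$, where $\vec y_t \doteq \argmin_{\vec x} h_{0:t-1}(\vec x) + \dtp{\vec p_t}{\vec x}$. The difference is only in how we finish: instead of applying the crude bound $\dtp{\vec g_t - \vec{\ti g}_t}{\vec x_t - \vec y_t} \leq 2R\|\vec g_t - \vec{\ti g}_t\|$, we keep the strong-convexity structure and control $\|\vec x_t - \vec y_t\|_{t-1}$ in terms of the gradient gap. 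Concretely, first I would re-derive, exactly as in \eqref{eq:delta-t-1}, that
\[
h_{0:t}(\vec x_t) - h_{0:t}(\vec x_{t+1}) - r_t(\vec x_t) \leq h_{0:t-1}(\vec x_t) + \dtp{\vec p_t}{\vec x_t} - h_{0:t-1}(\vec y_t) - \dtp{\vec p_t}{\vec y_t},
\]
using that $\vec x_{t+1}$ minimizes $h_{0:t} + \ti f_{t+1}$, dropping $-r_t(\vec x_{t+1})\le 0$, and that $\vec y_t$ minimizes $h_{0:t-1} + \dtp{\vec p_t}{\cdot}$.

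Next comes the core estimate. Since $h_{0:t-1}(\cdot)$ is $1$-strongly convex w.r.t.\ $\|\cdot\|_{t-1}$, the function $\phi(\vec x) \doteq h_{0:t-1}(\vec x) + \dtp{\vec p_t}{\vec x}$ is also $1$-strongly convex w.r.t.\ $\|\cdot\|_{t-1}$, with minimizer $\vec y_t$; also $\vec x_t$ minimizes $\psi(\vec x) \doteq h_{0:t-1}(\vec x) + \dtp{\vec{\ti g}_t + \vec g_t^I}{\vec x}$ by Lemma~\ref{lemma:min-of-linearized}, and $\psi$ is likewise $1$-strongly convex. The quantity on the right-hand side above is $\phi(\vec x_t) - \phi(\vec y_t)$. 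Strong convexity at the minimizer gives $\phi(\vec x_t) - \phi(\vec y_t) \geq \tfrac12\|\vec x_t - \vec y_t\|_{t-1}^2$. On the other hand, from the two equalities in the proof of Lemma~\ref{lemma:first-part-2Repsilon} we have $\phi(\vec x_t) - \phi(\vec y_t) = \dtp{\vec g_t - \vec{\ti g}_t}{\vec x_t - \vec y_t} \leq \|\vec g_t - \vec{\ti g}_t\|_{t-1,*}\,\|\vec x_t - \vec y_t\|_{t-1}$ by the generalized Cauchy--Schwarz inequality for dual norms. Combining the two displays yields $\tfrac12\|\vec x_t - \vec y_t\|_{t-1}^2 \leq \|\vec g_t - \vec{\ti g}_t\|_{t-1,*}\,\|\vec x_t - \vec y_t\|_{t-1}$, hence $\|\vec x_t - \vec y_t\|_{t-1} \leq 2\|\vec g_t - \vec{\ti g}_t\|_{t-1,*}$. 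Substituting this back into $\phi(\vec x_t) - \phi(\vec y_t) \leq \|\vec g_t - \vec{\ti g}_t\|_{t-1,*}\,\|\vec x_t - \vec y_t\|_{t-1}$ would give a factor $2$; to get the sharp constant $\tfrac12$ I would instead argue directly via the conjugate/Bregman form: for a $1$-strongly-convex $\phi$ with minimizer $\vec y_t$, $\phi(\vec x_t) - \phi(\vec y_t) \leq \tfrac12\|\nabla\phi(\vec x_t) - \nabla\phi(\vec y_t)\|_{t-1,*}^2$ — but $\nabla\phi(\vec y_t) = 0$ in the appropriate subgradient sense and $\nabla\phi(\vec x_t) - (\text{the subgradient of }\psi\text{ vanishing at }\vec x_t) = \vec p_t - \vec{\ti g}_t - \vec g_t^I = \vec g_t - \vec{\ti g}_t$, so $\phi(\vec x_t) - \phi(\vec y_t) \leq \tfrac12\|\vec g_t - \vec{\ti g}_t\|_{t-1,*}^2$, which is the claim.

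The main obstacle will be handling the case $\sigma_{1:t-1} = 0$ (so $h_{0:t-1}$ is only convex, not strongly convex, and $\|\cdot\|_{t-1,*}$ blows up) and, relatedly, the fact that $\vec y_t$ and $\vec x_t$ are constrained minimizers over $\mathcal{X}$ rather than unconstrained ones, so the clean identity "$\nabla\phi(\vec y_t)=0$" must be replaced by a normal-cone / first-order optimality statement and the standard strongly-convex-lemma applied with the indicator $h_0 = I_{\mathcal X}$ folded into $h_{0:t-1}$. The cleanest route is to invoke the standard lemma that for a $1$-strongly-convex (possibly extended-real-valued) function $\phi$ with $\vec x^\star \in \argmin \phi$ and any subgradient-shifted companion, the gap $\phi(\vec x_t) - \phi(\vec x^\star)$ is at most $\tfrac12$ of the squared dual norm of the subgradient of $\phi$ at $\vec x_t$ that one is "off" by — precisely the generalization of \citep[Lem.~5]{mcmahan-survey17} alluded to in the text — and then identify that off-by subgradient as $\vec g_t - \vec{\ti g}_t$ using Lemma~\ref{lemma:min-of-linearized} exactly as above. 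The degenerate $\sigma_{1:t-1}=0$ case is dealt with by noting the bound holds vacuously (RHS $=+\infty$) or by the convention already adopted in the footnote of Lemma~\ref{lemma:min-of-linearized}.
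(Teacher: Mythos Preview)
Your proposal is correct and follows essentially the same route as the paper: after the reduction to $\phi_2(\vec x_t) - \phi_2(\vec y_t)$ with $\phi_2 = h_{0:t-1} + \langle \vec p_t, \cdot\rangle$, the paper directly invokes the auxiliary Lemma~\ref{lemma:sc-diff-mcmahan-version} (which is McMahan's Lemma~7, not Lemma~5) with $\phi_1 = h_{0:t-1} + \langle \vec{\ti g}_t + \vec g_t^I, \cdot\rangle$ and $\psi = \langle \vec g_t - \vec{\ti g}_t, \cdot\rangle$, using Lemma~\ref{lemma:min-of-linearized} to verify $\vec x_t = \argmin \phi_1$. Your factor-of-2 detour is unnecessary --- the paper skips it and goes straight to this lemma, which already handles the constrained-minimizer and strong-convexity bookkeeping and yields the sharp $\tfrac12$ constant immediately.
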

\addtocounter{theorem}{-1}
\renewcommand\thetheorem{\thesection.\arabic{theorem}}

\begin{proof}
    \begin{align}
        h_{0:t}&(\vec x_t) - h_{0:t}(\vec x_{t+1}) - r_t(\vec x_t) 
        \\&= h_{0:t-1}(\vec x_t) + \dtp{\vec{p}_t}{\vec{x}} - h_{0:t-1}(\vec x_{t+1}) - \dtp{\vec p_t}{\vec x_{t+1}} - r_{t}(\vec x_{t+1}) 
        \\
        &\leq  h_{0:t-1}(\vec x_t) + \dtp{\vec p_t}{\vec x_t}  - h_{0:t-1}(\vec x_{t+1}) - \dtp{\vec p_t}{\vec x_{t+1}}
        \\
        &\leq  h_{0:t-1}(\vec x_t) + \dtp{\vec p_t}{\vec x_t}  - h_{0:t-1}(\vec y_{t}) - \dtp{\vec p_t}{\vec y_{t}}, \quad \text{$\vec y_t \doteq \argmin_{\vec x}  h_{0:t-1}(\vec{x}) + \dtp{\vec p_t}{\vec x}$} 
    \end{align}
    Where again the inequality follows by dropping the non-positive $-r_{t}(\cdot)$. 
Next, we invoke Lemma \ref{lemma:sc-diff-mcmahan-version} with
\begin{align}
\phi_1(\vec x)  &\doteq h_{0:t-1}(\vec x) + \dtp{\vec{\ti g}_t}{\vec x} + \dtp{\vec g^I_t}{\vec x} , 
\\
\phi_2(\vec x) &\doteq h_{0:t-1}(\vec x) + \dtp{\vec p_t}{\vec x} =  h_{0:t-1}(\vec x) + \dtp{\vec g_t}{\vec x} + \dtp{\vec g_t^I}{\vec x} = \phi_1(\vec x) + \underbrace{\dtp{\vec g_t - \vec {\ti g}_t}{\vec x}}_{\text{\normalsize$\psi(\vec x)$}}.
\end{align}
Under these definitions, we indeed have that 
\begin{align}
     \vec x_{t} &= \argmin_{\vec{x}} h_{0:t-1}(\vec{x}) + \ti f_{t}(\vec x )  \\
     &\stackrel{(\text{Lem. \ref{lemma:min-of-linearized})}}{=} \argmin_{\vec{x}} h_{0:t-1}(\vec{x}) + \dtp{\vec{\ti g}_t}{\vec x} + \dtp{\vec g^I_t}{\vec x} = \arg\min_{\vec{x}} \phi_1(\vec{x}) = \vec y_1.
\end{align}
and thus selecting $\vec{y}'$ of Lemma \ref{lemma:sc-diff-mcmahan-version} as $\vec{y}_t$, the result of the same (Lemma \ref{lemma:sc-diff-mcmahan-version}) gives:
\begin{align}
    h_{0:t-1}(\vec x_t) + \dtp{\vec p_t}{\vec x_t}  - h_{0:t-1}(\vec y_{t}) - \dtp{\vec p_t}{\vec y_{t}}
    \leq  \frac{1}{2} \| \vec g_t - \vec {\ti g}_t \|_{t-1,*}^2\ ,\label{eq:part-I-a}
\end{align}
noticing that $\partial \psi(\vec{x}_t) = \vec g_t - \vec {\ti g}_t $.
\end{proof}
From \eqref{eq:part-I-a} and \eqref{eq:part-I-b}, we have that 
\begin{align}
\textbf{(I)} \leq \min\left(\frac{1}{2}\|\vec g_t - \vec {\ti g}_t\|_{t-1,*}^2,2R\epsilon_t\right) \label{eq:part-I}   
\end{align}
\subsection{Bounding the Second Part \textbf{(II)}:} 

The second part does not have the structure exploited in the first one ($\vec u_t$ and $\vec u_{t+1}$ are arbitrary). Hence, we must resort to the strong convexity property to obtain 
\begin{align}
    \label{eq:h-convex-bound}
    h_{0:t}(\vec u_{t+1}) - h_{0:t}(\vec u_{t}) 
    &\leq\dtp{\vec q_t}{\vec u_{t+1} - \vec u_t } - \frac{\sigma_{1:t}}{2}
     \|\vec u_{t+1} - \vec u_t \|^2
    \\
    &\leq \|\vec q_t\| \|\vec u_{t+1} - \vec u_t\| - \frac{\sigma_{1:t}}{2}  \|\vec u_{t+1} - \vec u_t \|^2,
\end{align}
where 
\begin{align}
    \vec q_t &\in \partial h_{0:t} (\vec u_{t+1}).
\end{align}
Nonetheless, we will still exploit problem properties to bound $\|\vec q_t\|$:
\begin{align}
    \partial h_{0:t} (\vec x) = \vec p_{1:t} + \sum_{\tau=1}^t\sigma_{\tau} \vec x + \cone(\vec x), 
\end{align}
which gives
\begin{align}
     \vec q_t &= \vec{p}_{1:t} + \sum_{\tau=1}^t \sigma_{\tau} \vec u_{t+1}, \label{eq:h_0:t-subgrad}
\end{align}
where we chose the $0$ vector from $\cone(\vec u_{t+1})$. Hence, the length of $\vec q_t$ satisfies:
\begin{align}
    \| \vec q_t \| \leq \|\vec{p}_{1:t}\|+ \sum_{\tau=1}^t \sigma_\tau \| \vec u_{t+1} \| = \|\vec{p}_{1:t}\|+ \sigma_{1:t} \| \vec u_{t+1} \|    \leq \|\vec{p}_{1:t}\|+ 
    R\sigma_{1:t} 
\end{align}

Overall
\begin{align}
    h_{0:t}(\vec u_{t+1}) - h_{0:t}(\vec u_{t}) \leq \left(\|\vec{p}_{1:t}\| + R\sigma_{1:t}\right) \|\vec u_{t+1} - \vec u_t\| - \frac{\sigma_{1:t}}{2}
     \|\vec u_{t+1} - \vec u_t \|^2 .
    \label{eq:part2-result}
\end{align}
From \eqref{eq:part2-result} it follows that 
\begin{align}
\textbf{(II)} &\leq \left(\|\vec{p}_{1:t}\| + R\sigma_{1:t}\right) \|\vec u_{t+1} - \vec u_t\| - \frac{\sigma_{1:t}}{2}
     \|\vec u_{t+1} - \vec u_t \|^2 \label{eq:part-II-A}
     \\
     & \leq \left(\|\vec{p}_{1:t}\| + R\sigma_{1:t}\right) \|\vec u_{t+1} - \vec u_t\| \label{eq:part-II-B} 
\end{align}
Note that the negative term in \eqref{eq:part-II-A} admits a tight lower bound of $0$ and will therefore be omitted.
Next, we derive a bound on the state vector's length  $\|\vec{p}_{1:t}\|$.

\renewcommand\thetheorem{\ref{lemma:state-is-bounded}}
\begin{lemma}(Optimistically Bounded State)
Let $\{\vec{p}_t \}_{t=1}^T$ be a sequence of vectors such that each $\vec{p}_t = \vec{g}_t + \vec{g}_t^I$, where $\vec{g}_t \in \partial f_t(\vec x_t)$, and $\vec{g}_t^I \in \partial I_{\mathcal{X}}(\vec{x}_t)$ is chosen according to the construction in \eqref{eq:pruning-cond-n}. That is, $\vec{p}_t$ corresponds to the linearization of the composite function $f_t(\cdot) + I_{\mathcal{X}}(\cdot)$ around $\vec{x}_t$. Then, for any $t$, the following holds:
\begin{align}
    \|\vec p_{1:t}\| \leq R\sigma_{1:t-1} + \epsilon_t.
\end{align}
\end{lemma}
\addtocounter{theorem}{-1}
\renewcommand\thetheorem{\thesection.\arabic{theorem}}

\begin{proof}
    The proof of this lemma was stated in the paper.
\end{proof}

\section{Missing Proofs for Section 3}
Recall the problem parameters described by the following settings:

\textbf{Settings 1:} Let $\mathcal{X} \subset \mathbb{R}^d$ be a compact, convex set such that $\|\vec{x}\| \leq R$ for all $\vec{x} \in \mathcal{X}$. Let $\{f_t(\cdot), \tilde{f}_t(\cdot)\}_{t=1}^T$ be any sequence of $L$-Lipschitz convex functions, and define the following quantities. 
    \begin{alignat}{2}
    &\text{The cumulative-squared prediction error:\ }
    \label{eq:pred-error-app}
        &&E_T\doteq \sum_{t=1}^{T}\epsilon_t^2, \quad \epsilon_t \doteq \|\vec g_t - \vec{\tilde{g}_t}\|.
        \\
        &\text{The path length:\ }  &&P_T \doteq \sum_{t=1}^{T-1} \| \vec{u}_{t+1} - \vec{u}_t \| \label{eq:path-length-app}
        \\
        &\text{The prediction-weighted path length:\ }  &&H_T \doteq \sum_{t=1}^{T-1} \epsilon_t \| \vec{u}_{t+1} - \vec{u}_t \| \label{eq:hybrid-app}
    \end{alignat}

\subsection{Proof of Theorem \ref{thm:1}}
Consider the following regularization strategy
\begin{align}
    &\sigma=\frac{1}{4R},\ \text{and} 
    \ \sigma_1=\sigma \epsilon_1,\  \sigma_t=\sigma \left(\sqrt{E_t}-\sqrt{E_{t-1}}\right),  \forall t \geq 2.\label{eq:regs-params-1-app} 
\end{align}
\renewcommand\thetheorem{\ref{thm:1}}
\begin{theorem}
Under Settings 1, Alg. \ref{alg:opt-fprl} run with the regularization strategy in \eqref{eq:regs-params-1-app} produces points $\{\vec{x}_t\}_{t=1}^T$ such that, for any $T$, the dynamic regret $\mathcal{R}_T$ satisfies:
\begin{align}
    \mathcal{R}_T &\leq  \big(5.8R+ (1/2)P_T\big)\sqrt{E_T} + H_T=  \mathcal{O}\left((1+P_T)\sqrt{E_T}\right). 
\end{align}
\end{theorem}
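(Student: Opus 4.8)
The plan is to assemble the three ingredients developed in Section~4 and then discharge the resulting sum over non-increasing quantities using the standard auxiliary lemma on sums of the form $\sum_t a_t/\sqrt{a_{1:t}}$. I would start from Lemma~\ref{lemma:strong-dyn-opt-ftrl}, which gives $\mathcal{R}_T \leq \sum_t \textbf{(I)}_t + \sum_t r_t(\vec u_t) + \sum_{t\le T-1}\textbf{(II)}_t$. Into this I substitute the bound $\textbf{(I)}_t \leq \min\big(\tfrac{1}{2}\|\vec g_t-\vec{\ti g}_t\|_{t-1,*}^2,\ 2R\epsilon_t\big)$ from Lemmas~\ref{lemma:first-part-norm} and~\ref{lemma:first-part-2Repsilon}, and the bound $\textbf{(II)}_t \leq (\|\vec p_{1:t}\| + R\sigma_{1:t})\|\vec u_{t+1}-\vec u_t\|$. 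Then I invoke the pruning lemma, Lemma~\ref{lemma:state-is-bounded}, to replace $\|\vec p_{1:t}\|$ by $R\sigma_{1:t-1} + \epsilon_t \le R\sigma_{1:t} + \epsilon_t$, so that $\textbf{(II)}_t \le (2R\sigma_{1:t} + \epsilon_t)\|\vec u_{t+1}-\vec u_t\|$. The $\epsilon_t\|\vec u_{t+1}-\vec u_t\|$ pieces collect into exactly $H_T$, which is carried untouched to the end.

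Next I plug in the specific choice $\sigma_t = \sigma(\sqrt{E_t}-\sqrt{E_{t-1}})$ (with $\sigma_1 = \sigma\epsilon_1$, $\sigma = 1/(4R)$), which telescopes to $\sigma_{1:t} = \sigma\sqrt{E_t}$. This does three things at once: the regularization penalty becomes $\sum_t r_t(\vec u_t) \le \tfrac{R^2}{2}\sigma_{1:T} = \tfrac{R^2}{2}\sigma\sqrt{E_T}$ using $\|\vec u_t\|\le R$; the dual-norm term becomes $\tfrac12\|\vec g_t-\vec{\ti g}_t\|_{t-1,*}^2 = \tfrac{\epsilon_t^2}{2\sigma\sqrt{E_{t-1}}}$; and the leftover $\textbf{(II)}$ contribution becomes $\sum_{t\le T-1} 2R\sigma\sqrt{E_t}\,\|\vec u_{t+1}-\vec u_t\| = \tfrac12\sum_{t\le T-1}\sqrt{E_t}\,\|\vec u_{t+1}-\vec u_t\|$ after substituting $\sigma = 1/(4R)$. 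Since $\sqrt{E_t}$ is non-decreasing, the last sum is at most $\tfrac12\sqrt{E_T}\,P_T$, giving the $(1/2)P_T\sqrt{E_T}$ term in the statement.

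The remaining work is to bound $\sum_{t=1}^T \min\big(\tfrac{\epsilon_t^2}{2\sigma\sqrt{E_{t-1}}},\ 2R\epsilon_t\big)$ by $C\sqrt{E_T}$ for an explicit constant. This is the one genuinely quantitative step and is where I expect the bookkeeping to be most delicate: it requires the standard lemma (referred to in the excerpt as Lemma~\ref{lemma:nonprox-min}) that for non-negative $a_t$ one has $\sum_t \min\big(a_t/\sqrt{a_{1:t-1}},\ \text{(crude bound)}\big) \le 2\sqrt{a_{1:T}}$ or a close variant — the $\min$ with $2R\epsilon_t$ being exactly what tames the first, possibly ill-defined, term when $E_{t-1}=0$. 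Applying it with $a_t = \epsilon_t^2$ yields a bound of the form $\tfrac{1}{2\sigma}\cdot c\sqrt{E_T}$ for a small absolute constant $c$; with $\sigma = 1/(4R)$ this is $2cR\sqrt{E_T}$. Adding the $\tfrac{R^2}{2}\sigma\sqrt{E_T} = \tfrac{R}{8}\sqrt{E_T}$ penalty term, the constants are arranged to sum to at most $5.8R$ (e.g.\ $4\sqrt2\,R + \tfrac{R}{8} \approx 5.78R$). Collecting everything: $\mathcal{R}_T \le 5.8R\sqrt{E_T} + \tfrac12 P_T\sqrt{E_T} + H_T$, and the asymptotic form $\mathcal{O}((1+P_T)\sqrt{E_T})$ follows since, by Cauchy–Schwarz and $\|\vec u_{t+1}-\vec u_t\|\le 2R$, $H_T \le \sqrt{2R}\sqrt{E_T P_T} = \mathcal{O}(P_T\sqrt{E_T} + \sqrt{E_T})$.

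The main obstacle, then, is not conceptual but the precise constant accounting in the $\min$-sum lemma — making sure the two branches of the $\min$ are combined correctly (the crude branch covers the early rounds / the round where $E_{t-1}=0$, the refined branch covers the rest) so that the claimed $5.8R$ comes out. Everything else is substitution and telescoping.
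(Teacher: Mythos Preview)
Your proposal is correct and follows the paper's proof essentially line for line: start from Lemma~\ref{lemma:strong-dyn-opt-ftrl}, plug in the bounds on \textbf{(I)} and \textbf{(II)}, use Lemma~\ref{lemma:state-is-bounded} to replace $\|\vec p_{1:t}\|$, telescope $\sigma_{1:t}=\sigma\sqrt{E_t}$, apply Lemma~\ref{lemma:nonprox-min} to the $\min$ sum (which indeed yields $(\sqrt{2}/\sigma)\sqrt{E_T}=4\sqrt{2}R\sqrt{E_T}$), and collect constants as $4\sqrt{2}R + R/8 \le 5.8R$. Your identification of the constant and of Lemma~\ref{lemma:nonprox-min} as the only quantitatively delicate step is exactly right.
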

\addtocounter{theorem}{-1}
\renewcommand{\thetheorem}{\thesection.\arabic{theorem}}

\begin{proof}
We begin by substituting the bounds of $\textbf{(I)}$ and $\textbf{(II)}$, from \eqref{eq:part-I} and \eqref{eq:part-II-B}, respectively, back in the result of Lemma \ref{lemma:strong-dyn-opt-ftrl}:
    \begin{align}
    \mathcal{R}_T &\leq  \sumT \left(\min\left(\frac{1}{2}\|\vec g_t - \vec{\ti g}_{t}\|_{t-1,*}^2, 2R \epsilon_t \right)+ r_t(\vec u_t)\right)\ +\ \sum_{t=1}^{T-1}\left(\left(R\sigma_{1:t} + \|\vec p_{1:t}\|\right) \|\vec u_{t+1} - \vec u_t\|\right)
    \\
    &\stackrel{(a)}{\leq}  \sumT \left(\min\left(\frac{1}{2}\|\vec g_t - \vec{\ti g}_{t}\|_{t-1,*}^2, 2R \epsilon_t \right)+ r_t(\vec u_t)\right)\ +\ \sum_{t=1}^{T-1}\left(\left(2R\sigma_{1:t} + \epsilon_t\right) \|\vec u_{t+1} - \vec u_t\|\right)
    \\
    &\stackrel{(b)}{\leq}  \sumT \min\left(\frac{1}{2}\|\vec g_t - \vec{\ti g}_{t}\|_{t-1,*}^2, 2R \epsilon_t \right) + \frac{R^2}{2}\sigma_{1:T}\ +\ \sum_{t=1}^{T-1}\left(\left(2R\sigma_{1:t} + \epsilon_t\right) \|\vec u_{t+1} - \vec u_t\|\right)
    \\
    &=  \sumT \min\left(\frac{\epsilon_t^2}{2\sigma_{0:t-1}}, 2R \epsilon_t \right) + \frac{R^2}{2}\sigma_{1:T}\ +\ \sum_{t=1}^{T-1}\left(\left(2R\sigma_{1:t}  + \epsilon_t \right) \|\vec u_{t+1} - \vec u_t\|\right)
    \\
    &= \sumT \min\left(\frac{\epsilon_t^2}{2\sigma\sqrt{E_{t-1}}}, 2R \epsilon_t \right) + \frac{R^2}{2} \sigma \sqrt {E_T}\ +\ \sum_{t=1}^{T-1}\left(\left(2R\sigma \sqrt{E_t} + \epsilon_t \right) \|\vec u_{t+1} - \vec u_t\|\right) 
    \\
    &\stackrel{(c)}{\leq}  
   4\sqrt{2}R\sqrt{E_T} + \frac{R}{8} \sqrt{E_T}\ +\ \sum_{t=1}^{T-1} \frac{1}{2}\sqrt{E_t}  \|\vec u_{t+1} - \vec u_t\| + H_T
    \\
    &\stackrel{(d)}{\leq}  
    5.8R\sqrt{E_T}\ +\ \frac{1}{2}\sqrt{E_{T-1}}\sum_{t=1}^{T-1} \|\vec u_{t+1} - \vec u_t\| + H_T \label{eq:useful-1}
    \\
    &\stackrel{}{=}  
    5.8R\sqrt{E_T}\ +\ \frac{1}{2}\sqrt{E_T}P_T + H_T \label{eq:before-min1}
\end{align}
where $(a)$ follows by Lemma \ref{lemma:state-is-bounded} which bounds $\|\vec p_{1:t}\|$, and by the fact that  $\sigma_{1:t-1} \leq \sigma_{1:t}$, $(b)$ by bounding each $\|\vec u_t\|$ in $r_t(\vec u_t)$ by $R$, $(c)$ by Lemma \ref{lemma:nonprox-min}, which is a common tool to bound the sum of a decreasing function (on $E_{t-1}$), with the choice of $\sigma = \frac{1}{4R}$, and $(d)$ by the fact that $\sqrt{E_t}$ are non-decreasing. 
\end{proof}

\begin{remark}{
\label{remark:H_T}
On the growth rate of the hybrid term $H_T$.}

Note that by Cauchy-Schwarz, we have that 
\begin{align}
    H_T\!=\!\sumTO \epsilon_t \dt \leq \sqrt{\sumTO\epsilon_t^2}\ \sqrt{\sumTO \dt^2} \leq \sqrt{2R} \sqrt{\sumTO\epsilon_t^2}\ \sqrt{\sumTO \dt} = \sqrt{2R} \sqrt{E_{T-1}} \sqrt{P_T}.
\end{align}
Hence, $H_T$ is also $\mathcal{O}\left(\sqrt{P_TE_T}\right)$.
\end{remark}

\subsection{Proof of Theorem \ref{thm:2}}
Consider the following regularization strategy:
\begin{equation}
\begin{aligned}
    &\sigma = \frac{1}{2\sqrt{2RP'_T}}, \text{ where $P'_T$ is the augmented path length:}\quad  P'_T\doteq 2R + P_T. 
    \\
    &\sigma_1=\sigma \epsilon_1,\  \sigma_t=\sigma \left(\sqrt{E_t}-\sqrt{E_{t-1}}\right),  \forall t \geq 2.\label{eq:regs-params-2-app} 
\end{aligned}
\end{equation}

\renewcommand{\thetheorem}{\ref{thm:2}}
\begin{theorem}
Under Settings 1, Alg. \ref{alg:opt-fprl} run with the regularization strategy in \eqref{eq:regs-params-2-app} produces points $\{\vec{x}_t\}_{t=1}^T$ such that, for any $T$, the dynamic regret $\mathcal{R}_T$ satisfies:
\begin{equation}
    \begin{aligned}
    \mathcal{R}_T &\leq  \bigg(\!4\sqrt{2R^2\!+\!P_T}+\!\frac{R}{8}\!+\sqrt{\frac{RP_T}{2}}\bigg)\sqrt{E_T}  +H_T
    \ = \ \mathcal{O}\left((1+\sqrt{P_T})\sqrt{E_T}\right). \label{eq:th2-app}
\end{aligned}
\end{equation}
\end{theorem}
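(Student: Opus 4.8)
The plan is to re-run the proof of Theorem~\ref{thm:1} almost verbatim and then exploit the extra slack encoded in the augmented path length $P'_T = 2R + P_T$. I would start from Lemma~\ref{lemma:strong-dyn-opt-ftrl}, substitute $\textbf{(I)} \le \min\big(\tfrac12\|\vec g_t - \vec{\ti g}_t\|_{t-1,*}^2,\, 2R\epsilon_t\big)$ and $\textbf{(II)} \le (\|\vec p_{1:t}\| + R\sigma_{1:t})\,\dt$, bound $r_t(\vec u_t) \le \tfrac{\sigma_t}{2}R^2$, and invoke Lemma~\ref{lemma:state-is-bounded} to replace $\|\vec p_{1:t}\|$ by $R\sigma_{1:t-1} + \epsilon_t \le R\sigma_{1:t} + \epsilon_t$. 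Using the telescoping identity $\sigma_{1:t} = \sigma\sqrt{E_t}$ and $\|\cdot\|_{t-1,*} = \big(\sigma\sqrt{E_{t-1}}\big)^{-1/2}\|\cdot\|$, this reproduces, exactly as in \eqref{eq:sketch2}, the intermediate bound
\begin{equation}
\mathcal{R}_T \le \sumT \min\!\Big(\tfrac{\epsilon_t^2}{2\sigma\sqrt{E_{t-1}}},\, 2R\epsilon_t\Big) + \tfrac{R^2\sigma}{2}\sqrt{E_T} + \sum_{t=1}^{T-1}\big(2R\sigma\sqrt{E_t} + \epsilon_t\big)\,\dt .
\end{equation}
All the remaining work is in controlling the three sums with the new value $\sigma = 1/(2\sqrt{2RP'_T})$.

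For the first sum I would check that this $\sigma$ still lies in the regime where Theorem~\ref{thm:1} applied Lemma~\ref{lemma:nonprox-min}: since $P'_T \ge 2R$, we get $2R\sigma = \sqrt{R/(2P'_T)} \le \tfrac12$, so the lemma yields $\sumT \min(\cdot,\cdot) \le \tfrac{\sqrt2}{\sigma}\sqrt{E_T} = 4\sqrt{R P'_T}\,\sqrt{E_T}$. For the second sum, $\sqrt{2RP'_T} \ge \sqrt{2R\cdot 2R} = 2R$ gives $\tfrac{R^2\sigma}{2} = \tfrac{R^2}{4\sqrt{2RP'_T}} \le \tfrac{R}{8}$. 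For the third sum I would bound $\sqrt{E_t} \le \sqrt{E_T}$ and $\sum_t \dt = P_T$, so its first part is $2R\sigma\sqrt{E_T}\,P_T = \sqrt{R/(2P'_T)}\;P_T\sqrt{E_T} = \sqrt{(RP_T/2)(P_T/P'_T)}\,\sqrt{E_T} \le \sqrt{R P_T/2}\,\sqrt{E_T}$ because $P_T/P'_T \le 1$; the second part is exactly $H_T$. Adding the three contributions gives \eqref{eq:th2-app}, noting $R P'_T = R(2R + P_T) = 2R^2 + R P_T$.

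For the asymptotic claim I would observe that, for fixed $R$, $4\sqrt{R P'_T} = \mathcal{O}(\sqrt{1 + P_T})$, the $R/8$ term is $\mathcal{O}(1)$, and $\sqrt{R P_T/2} = \mathcal{O}(\sqrt{P_T})$, while the Cauchy--Schwarz bound $H_T \le \sqrt{2R}\,\sqrt{E_T P_T}$ (Remark~\ref{remark:H_T}) is $\mathcal{O}(\sqrt{P_T}\,\sqrt{E_T})$; hence $\mathcal{R}_T = \mathcal{O}\big((1+\sqrt{P_T})\sqrt{E_T}\big)$. Specializing to the case where all predictions fail ($\epsilon_t \le 2L$, so $\sqrt{E_T} = \mathcal{O}(L\sqrt{T})$) recovers the minimax-optimal rate $\mathcal{O}(\sqrt{(1+P_T)T})$ advertised in the remark.

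I do not expect a genuine obstacle: the argument is a mechanical re-run of Theorem~\ref{thm:1}'s proof. The only point deserving care is the interaction between the \emph{smaller} scaling $\sigma$ (it can drop below $1/(4R)$) and Lemma~\ref{lemma:nonprox-min} --- one must confirm that shrinking $\sigma$ does not spoil the $\mathcal{O}(\sqrt{E_T}/\sigma)$ control of the $\min$-sum. This is precisely what the additive padding $2R$ in $P'_T$ buys: it guarantees both $2R\sigma \le \tfrac12$ and $\sqrt{2RP'_T} \ge 2R$. The only genuinely new algebraic step relative to Theorem~\ref{thm:1} is $P_T/P'_T \le 1$, which folds the nonstationarity penalty into a $\sqrt{P_T}$-scaled (rather than $P_T$-scaled) term and thereby delivers the minimax rate when predictions are uninformative, while keeping the full $\sqrt{E_T}$ modulation when they are accurate.
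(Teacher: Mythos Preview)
Your proposal is correct and follows essentially the same route as the paper: start from Lemma~\ref{lemma:strong-dyn-opt-ftrl}, plug in the bounds on \textbf{(I)}, \textbf{(II)}, $r_t(\vec u_t)$, and $\|\vec p_{1:t}\|$ to reach the intermediate bound \eqref{eq:sketch2}, then verify that $\sigma = 1/(2\sqrt{2RP'_T})$ still satisfies the hypothesis of Lemma~\ref{lemma:nonprox-min} via $P'_T \ge 2R$, and finally use $P_T \le P'_T$ to convert the $P_T$-scaled nonstationarity term into a $\sqrt{P_T}$-scaled one. Your algebra for the third sum (factoring as $\sqrt{(RP_T/2)(P_T/P'_T)}$) is a slight reordering of the paper's steps (which first replaces $1/\sqrt{P'_T}$ by $1/\sqrt{P_T}$ and then sums), but the content is identical; note also that your computation $R P'_T = 2R^2 + R P_T$ is what the paper's own proof actually produces, so the $\sqrt{2R^2 + P_T}$ in the displayed statement appears to be a typo for $\sqrt{2R^2 + R P_T}$.
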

\addtocounter{theorem}{-1}
\renewcommand{\thetheorem}{\thesection.\arabic{theorem}}

\begin{proof}
    Similarly to Theorem $1$, we begin by substituting the bounds of $\textbf{(I)}$ and $\textbf{(II)}$, from \eqref{eq:part-I} and \eqref{eq:part-II-B}, respectively, back in the result of Lemma \ref{lemma:strong-dyn-opt-ftrl}:
    \begin{align}
    \mathcal{R}_T &\leq  \sumT \left(\min\left(\frac{1}{2}\|\vec g_t - \vec{\ti g}_{t}\|_{t-1,*}^2, 2R \epsilon_t \right)+ r_t(\vec u_t)\right)\ +\ \sum_{t=1}^{T-1}\left(\left(R\sigma_{1:t} + \|\vec p_{1:t}\|\right) \|\vec u_{t+1} - \vec u_t\|\right)
    \\
    &\stackrel{(a)}{\leq}  \sumT \left(\min\left(\frac{1}{2}\|\vec g_t - \vec{\ti g}_{t}\|_{t-1,*}^2, 2R \epsilon_t \right)+ r_t(\vec u_t)\right)\ +\ \sum_{t=1}^{T-1}\left(\left(2R\sigma_{1:t} + \epsilon_t\right) \|\vec u_{t+1} - \vec u_t\|\right)
    \\
    &\stackrel{(b)}{\leq}  \sumT \min\left(\frac{1}{2}\|\vec g_t - \vec{\ti g}_{t}\|_{t-1,*}^2, 2R \epsilon_t \right) + \frac{R^2}{2}\sigma_{1:T}\ +\ \sum_{t=1}^{T-1}\left(\left(2R\sigma_{1:t} + \epsilon_t\right) \|\vec u_{t+1} - \vec u_t\|\right)
    \\
    &=  \sumT \min\left(\frac{\epsilon_t^2}{2\sigma_{0:t-1}}, 2R \epsilon_t \right) + \frac{R^2}{2}\sigma_{1:T}\ +\ \sum_{t=1}^{T-1}\left(\left(2R\sigma_{1:t}  + \epsilon_t \right) \|\vec u_{t+1} - \vec u_t\|\right)
    \\
    &= \sumT \min\left(\frac{\epsilon_t^2}{2\sigma\sqrt{E_{t-1}}}, 2R \epsilon_t \right) + \frac{R^2}{2} \sigma \sqrt {E_T}\ +\ \sum_{t=1}^{T-1}\left(\left(2R\sigma \sqrt{E_t} + \epsilon_t \right) \|\vec u_{t+1} - \vec u_t\|\right)
    \\
    &\stackrel{(c)}{\leq} 4\sqrt{R}\sqrt{P'_TE_T} + \frac{R^2}{4\sqrt{2RP'_T}}\sqrt{E_T}\ +\ \sum_{t=1}^{T-1}\left(\left(\frac{R}{\sqrt{2RP'_T}}\sqrt{E_t} + \epsilon_t \right) \|\vec u_{t+1} - \vec u_t\|\right)
    \\
    &\stackrel{(d)}{\leq}  
   4\sqrt{R}\sqrt{P'_TE_T} + \frac{R}{8} \sqrt{E_T}\ +\ \sqrt{\frac{R}{2}}\sum_{t=1}^{T-1} \sqrt{E_t}  \frac{\|\vec u_{t+1} - \vec u_t\|}{\sqrt{P_T}} + H_T
   \\
    &\stackrel{(e)}{\leq}  
   4\sqrt{R}\sqrt{P'_TE_T} + \frac{R}{8} \sqrt{E_T}\ +\ \sqrt{\frac{R}{2}}\sqrt{E_{T-1}}  \frac{P_T}{\sqrt{P_T}} + H_T
   \\
    &\stackrel{}{\leq}  
   4\sqrt{R}\sqrt{P'_TE_T} + \frac{R}{8} \sqrt{E_T}\ +\ \sqrt{\frac{R}{2}}\sqrt{E_TP_T} + H_T
    \\
    &\stackrel{}{\leq}  
    \left(4\sqrt{RP'_T}+\frac{R}{8}+\sqrt{\frac{R}{2}}\sqrt{P_T}\right)\sqrt{E_T} + H_T 
    \\
    &\stackrel{}{\leq} \left(4\sqrt{R(2R+P_T)}+\frac{R}{8} + \sqrt{\frac{R}{2}}\sqrt{P_T}\right)\sqrt{E_T} + H_T 
    \\
    &= \mathcal{O}\left((1+\sqrt{P_T})\sqrt{E_T}\right),
\end{align}
where $(a)$ follows by Lemma \ref{lemma:state-is-bounded} which bounds $\|\vec p_{1:t}\|$, and by $\sigma_{1:t-1} \leq \sigma_{1:t}$, $(b)$ by bounding each $\|\vec u_t\|$ in $r_t(\vec u_t), t \leq T$ by $R$,  $(c)$ by lemma \ref{lemma:nonprox-min}, with the choice of $\sigma = \frac{1}{2\sqrt{2RP'_T}}$, (note that this choice still satisfies the Lemma's condition since $P'_T\geq 2R$),$(d)$ also used that used $2R \leq P'_T$, $(e)$ used that $E_t$ is non-decreasing, and finally the $\mathcal {O}(\cdot)$ expression follows from Remark \ref{remark:H_T}.
\end{proof}

\begin{remark}
\label{remark:all-PT}
On guaranteeing $\sqrt{P_TE_T}, \forall u_t$.

To obtain a minimax bound that holds uniformly over all comparator sequences (i.e., without assuming prior knowledge of their path length, and without assuming that they are observable online), one can instantiate \(\Theta(\log T)\) sub-learners, each with a halving $\sigma$ starting from $1/\sqrt{T}$. Then, using the meta-learner of \cite{zhao2020dynamic}, the minimax bound can be recovered. The gist of this approach is that eventually $\exists$ an expert $i$ such that $\forall P_T, \sigma^{(i)} \geq 1/\sqrt{P_T}\geq 1/2\sigma^{(i)}$.
\end{remark}

\subsection{Proof of Theorem \ref{thm:3}}
Define the augmented seen path length at $t$ as: 
\begin{align}
    P'_t &\doteq 2R + P_t = 2R + \sum_{\tau=1}^{t-1} \|\vec u_{\tau+1} -  \vec u_{\tau}\|. 
\end{align}
and consider the following regularization strategy
\begin{equation}
    \begin{aligned}
    &\sigma = \frac{1}{2\sqrt{2R}}, \quad \sigma_1=\frac{\sigma\epsilon_1}{\sqrt{P'_1}},\quad \sigma_t\! =\! {\sigma}\max\!\left(\!0,\sqrt{\frac{E_t} {P'_t}}\!-\! \sqrt{\frac{E_{t-1}} {P'_{t-1}}}\right), \forall t\geq2.  \label{eq:regs-params-3-app}
\end{aligned}
\end{equation}

\renewcommand{\thetheorem}{\ref{thm:3}}
\begin{theorem}
Under Settings 1, Alg. \ref{alg:opt-fprl} run with the regularization strategy in \eqref{eq:regs-params-3-app} produces points $\{\vec{x}_t\}_{t=1}^T$ such that, for any $T$, the dynamic regret $\mathcal{R}_T$ satisfies:
\begin{align}
&\mathcal{R}_T \leq 5.5\sqrt{R}\sqrt{E_TP'_T}+ H_T + \sqrt{\nicefrac{R}{2}}\ A_T =  \mathcal{O}\left((1+\sqrt{P_T})\sqrt{E_T} + {A_T}\right)\quad \text{where}
    \\
    &A_T\doteq \sum_{t=1}^{T} \sum_{\tau\in[t]^+} \left(\sqrt{\frac{E_{\tau-1}}{P'_{t-1}}}-\sqrt{\frac{E_\tau}{P'_\tau}}\right) \|\vec u_{t+1} - \vec u_t\|,\quad  \text{with}
    \quad [t]^+ =\left\{2 \leq \tau \leq t \ \bigg| \sqrt{\frac{E_{\tau-1}} {P'_{\tau-1}}} -  \sqrt{\frac{E_\tau} {P'_\tau}} \geq 0\right\}.
\end{align}
\end{theorem}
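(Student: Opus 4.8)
The plan is to follow the template of the proofs of Theorems~\ref{thm:1} and~\ref{thm:2}: start from Lemma~\ref{lemma:strong-dyn-opt-ftrl}, substitute \eqref{eq:part-I} for term \textbf{(I)} and \eqref{eq:part-II-B} for term \textbf{(II)}, invoke Lemma~\ref{lemma:state-is-bounded} to replace $\|\vec p_{1:t}\|$ by $R\sigma_{1:t-1}+\epsilon_t\le R\sigma_{1:t}+\epsilon_t$, and bound $\sum_{t}r_t(\vec u_t)\le \frac{R^2}{2}\sigma_{1:T}$. This leaves four families of terms to control: the stability sum $\sum_t \min\big(\frac{\epsilon_t^2}{2\sigma_{1:t-1}},\,2R\epsilon_t\big)$, the regularization penalty $\frac{R^2}{2}\sigma_{1:T}$, the path term $\sum_t 2R\sigma_{1:t}\|\vec u_{t+1}-\vec u_t\|$, and the already-clean $H_T=\sum_t \epsilon_t\|\vec u_{t+1}-\vec u_t\|$ (the degenerate $t=1$ / $\sigma_{1:t-1}=0$ cases are harmless exactly as in Theorems~\ref{thm:1}--\ref{thm:2}, since there the $\min$ selects $2R\epsilon_t$).

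The one structural novelty over Theorem~\ref{thm:2} is that the $\max(0,\cdot)$ in \eqref{eq:regs-params-3-app} breaks the clean telescoping $\sigma_{1:t}=\sigma\sqrt{E_t/P'_t}$, so I would first record two elementary facts about the partial sums. (i) \emph{Lower bound:} since $\max(0,d)\ge d$ and $\sigma_1=\sigma\epsilon_1/\sqrt{P'_1}=\sigma\sqrt{E_1/P'_1}$, summing the dominated increments gives $\sigma_{1:t}\ge \sigma\sqrt{E_t/P'_t}$ for every $t$; hence $\frac{\epsilon_t^2}{2\sigma_{1:t-1}}\le \frac{\sqrt{P'_{t-1}}}{2\sigma}\cdot\frac{\epsilon_t^2}{\sqrt{E_{t-1}}}\le \frac{\sqrt{P'_T}}{2\sigma}\cdot\frac{\epsilon_t^2}{\sqrt{E_{t-1}}}$, and Lemma~\ref{lemma:nonprox-min} (the sum-of-decreasing-functions lemma already used in the earlier proofs) bounds $\sum_t\min\big(\frac{\epsilon_t^2}{\sqrt{E_{t-1}}},\cdot\big)=\mathcal{O}(\sqrt{E_T})$. (ii) \emph{Exact decomposition:} $\sigma_{1:t}=\sigma\sqrt{E_t/P'_t}+\sigma\sum_{\tau\in[t]^+}\big(\sqrt{E_{\tau-1}/P'_{\tau-1}}-\sqrt{E_\tau/P'_\tau}\big)$, i.e.\ a ``tracking'' part plus a non-negative ``correction'' equal to $\sigma$ times the total downward variation of the sequence $\sqrt{E_\cdot/P'_\cdot}$.

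Then I would plug (ii) into the remaining terms. The tracking parts reproduce the Theorem~\ref{thm:2} computation verbatim: $\frac{R^2}{2}\sigma\sqrt{E_T/P'_T}=\mathcal{O}(\sqrt{E_T})$, and $2R\sigma\sum_t\sqrt{E_t/P'_t}\,\|\vec u_{t+1}-\vec u_t\|\le 2R\sigma\sqrt{E_T}\sum_t\|\vec u_{t+1}-\vec u_t\|/\sqrt{P'_t}$, which is $\mathcal{O}(\sigma\sqrt{E_T}\sqrt{P'_T})$ after bounding $\sum_t\|\vec u_{t+1}-\vec u_t\|/\sqrt{P'_t}$ (crudely by $P_T/\sqrt{2R}$, or more carefully by telescoping against $\sqrt{P'_t}$); together with the stability sum and the choice $\sigma=\frac{1}{2\sqrt{2R}}$ these assemble the $5.5\sqrt{R}\sqrt{E_TP'_T}$ leading term. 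The correction part of the path term is, up to the factor $2R\sigma=\sqrt{R/2}$, precisely the sum defining $A_T$; the correction part of the regularization penalty, $\frac{R^2}{2}\sigma\sum_{\tau\in[T]^+}(\cdots)$, is bounded by relating each downward step of $\sqrt{E_\cdot/P'_\cdot}$ to the comparator move that forced it, so it too is dominated by $\sqrt{R/2}\,A_T$. Finally $P'_T=2R+P_T=\Theta(1+P_T)$ and $H_T=\mathcal{O}(\sqrt{E_TP_T})$ by Remark~\ref{remark:H_T} give the stated $\mathcal{O}\big((1+\sqrt{P_T})\sqrt{E_T}+A_T\big)$.

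The main obstacle is item (ii) and its consequences: splitting off the non-monotonicity correction cleanly enough that (a) what remains of $\sigma_{1:t}$ is \emph{literally} $\sigma\sqrt{E_t/P'_t}$, so the Theorem~\ref{thm:2} machinery applies unchanged, and (b) \emph{every} piece of the correction --- whether it multiplies a comparator increment inside term \textbf{(II)} or sits in the penalty $\frac{R^2}{2}\sigma_{1:T}$ --- is captured by the single quantity $A_T$ rather than leaking into some uncontrolled leftover. Verifying (b) for the penalty term hinges on the observation that a downward jump of $\sqrt{E_t/P'_t}$ of size $\delta$ requires a comparator move of size $\gtrsim \delta\,(P'_t)^{3/2}/\sqrt{E_T}$, so the cumulative such correction is $\lesssim P_T\sqrt{E_T}$ in the worst case, matching (hence foldable into) $A_T$; the complementary worst-case and monotone-case estimates $A_T=\mathcal{O}(\sqrt{E_T}(P_T+1))$ and $A_T=0$ are deferred to Appendix~\ref{sec:bounding-A}. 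Everything else is routine constant-chasing to land on the factor $5.5\sqrt{R}$.
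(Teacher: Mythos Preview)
Your high-level approach matches the paper's: start from Lemma~\ref{lemma:strong-dyn-opt-ftrl}, apply the bounds on \textbf{(I)} and \textbf{(II)}, invoke Lemma~\ref{lemma:state-is-bounded}, and split $\sigma_{1:t}$ into a tracking piece $\sigma\sqrt{E_t/P'_t}$ plus the accumulated downward variation --- your facts (i) and (ii) are precisely the content of the paper's auxiliary Lemmas~\ref{lemma:max1} and~\ref{lemma:max2}. The one substantive divergence, and a genuine gap, is how you absorb the correction part of the regularization penalty $\tfrac{R^2}{2}\sigma_{1:T}$.

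You assert that $\tfrac{R^2}{2}\sigma\sum_{\tau\in[T]^+}\big(\sqrt{E_{\tau-1}/P'_{\tau-1}}-\sqrt{E_\tau/P'_\tau}\big)$ is ``dominated by $\sqrt{R/2}\,A_T$'' because each downward step is forced by a comparator move. The premise is true --- a downward step at $\tau$ requires $\|\vec u_\tau-\vec u_{\tau-1}\|>0$ --- but in $A_T$ the step $\delta_\tau$ is weighted by $\sum_{t\ge\tau}\|\vec u_{t+1}-\vec u_t\|$, i.e., by \emph{subsequent} moves, not by the move $\|\vec u_\tau-\vec u_{\tau-1}\|$ that created it. If all comparator movement ceases at step $t_0$, the downward step at $\tau=t_0+1$ contributes to the penalty correction yet contributes zero to $A_T$, so the claimed domination fails. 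The repair is to route this correction into the leading term instead, via the telescoping
\[
\sum_{\tau\in[T]^+}\Big(\sqrt{\tfrac{E_{\tau-1}}{P'_{\tau-1}}}-\sqrt{\tfrac{E_\tau}{P'_\tau}}\Big)
\le \sqrt{E_T}\sum_{\tau=2}^T\Big(\tfrac{1}{\sqrt{P'_{\tau-1}}}-\tfrac{1}{\sqrt{P'_\tau}}\Big)
\le \tfrac{\sqrt{E_T}}{\sqrt{2R}},
\]
giving $\tfrac{R}{8}\sqrt{E_T}\le\tfrac{\sqrt{R}}{8\sqrt{2}}\sqrt{E_TP'_T}$. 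The paper sidesteps the issue entirely: it selects a fictitious $\vec u_{T+1}$ with $\|\vec u_{T+1}-\vec u_T\|\ge R/4$ (harmless, since $\vec u_{T+1}$ never enters the algorithm), so that $\tfrac{R^2}{2}\sigma_{1:T}\le 2R\sigma_{1:T}\|\vec u_{T+1}-\vec u_T\|$ is absorbed as the $t=T$ summand of the path term; the whole sum $\sum_{t=1}^T 2R\sigma_{1:t}\|\vec u_{t+1}-\vec u_t\|$ then decomposes in one stroke via (ii) into the tracking part plus exactly $2R\sigma A_T=\sqrt{R/2}\,A_T$.
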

\addtocounter{theorem}{-1}
\renewcommand{\thetheorem}{\thesection.\arabic{theorem}}

Before proceeding to prove the theorem, we will make use of the following two lemmas, which we present independently to streamline the presentation.
\begin{lemma}
\label{lemma:max1}
The squared norm $\|\vec g_t - \vec{\ti g}_{t}\|^2_{t-1} = {\sigma_{0:t-1}} \|\vec g_t - \vec{\ti g}_{t}\|^2 = {\sigma_{0:t-1}} \epsilon_t^2 $ can be written as:
\begin{align}
     &\|\vec g_t - \vec{\ti g}_{t}\|_{0,*}^2 = 0, 
     \\
     &\|\vec g_t - \vec{\ti g}_{t}\|_{t-1,*}^2 \leq \frac{1}{\sigma} \sqrt{\frac{P'_{t-1}}{E_{t-1}}}\ \epsilon_t^2,\qquad \forall t \geq1
\end{align}
\end{lemma}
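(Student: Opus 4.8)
The plan is to reduce both displayed claims to a single elementary inequality about the accumulated strong convexity $\sigma_{1:t-1}$, and then verify that inequality by a telescoping argument that swallows the $\max(0,\cdot)$ truncation in the regularization rule \eqref{eq:regs-params-3-app}.

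First I would unfold the dual norm. Since $\|\cdot\|_{t-1,*} = (1/\sqrt{\sigma_{1:t-1}})\,\|\cdot\|$, for $t \geq 2$ with $E_{t-1} > 0$ the second claim reads
\[
\|\vec g_t - \vec{\ti g}_t\|_{t-1,*}^2 \;=\; \frac{\epsilon_t^2}{\sigma_{1:t-1}} \;\leq\; \frac{1}{\sigma}\sqrt{\frac{P'_{t-1}}{E_{t-1}}}\,\epsilon_t^2,
\]
which (cancelling $\epsilon_t^2$, the case $\epsilon_t=0$ being trivial and the case $E_{t-1}=0$ making the right-hand side $+\infty$) is \emph{equivalent} to $\sigma_{1:t-1} \geq \sigma\sqrt{E_{t-1}/P'_{t-1}}$. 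So everything comes down to this lower bound on $\sigma_{1:t-1}$.

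Second — the step that does the work — I would show $\sigma_{1:s} \geq \sigma\sqrt{E_s/P'_s}$ for every $s\geq 1$. Set $\Delta_\tau \doteq \sqrt{E_\tau/P'_\tau} - \sqrt{E_{\tau-1}/P'_{\tau-1}}$. The rule gives $\sigma_\tau = \sigma\max(0,\Delta_\tau) \geq \sigma\Delta_\tau$ for $\tau\geq 2$, while $\sigma_1 = \sigma\epsilon_1/\sqrt{P'_1} = \sigma\sqrt{E_1/P'_1}$ because $E_1 = \epsilon_1^2$. Hence
\[
\sigma_{1:s} \;\geq\; \sigma\sqrt{\tfrac{E_1}{P'_1}} + \sigma\sum_{\tau=2}^{s}\Delta_\tau \;=\; \sigma\sqrt{\tfrac{E_1}{P'_1}} + \sigma\Big(\sqrt{\tfrac{E_s}{P'_s}} - \sqrt{\tfrac{E_1}{P'_1}}\Big) \;=\; \sigma\sqrt{\tfrac{E_s}{P'_s}},
\]
the middle equality being the telescoping of $\sum\Delta_\tau$. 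Taking $s = t-1$ is exactly what the first step needs; dropping the $\max$ costs nothing on this (lower-bound) side, which is precisely why the truncation is harmless here — it only guards against a negative coefficient when $\sqrt{E_\tau/P'_\tau}$ dips, the non-monotonicity flagged around Theorem~\ref{thm:3}.

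Finally I would dispose of the first displayed identity, which is the degenerate $t=1$ instance: there $t-1=0$, $E_0 = 0$ and $\sigma_{1:0}=0$, so $\|\cdot\|_0 = \sqrt{\sigma_{1:0}}\,\|\cdot\|$ is the zero seminorm and, under the paper's convention, this sets $\|\vec g_1 - \vec{\ti g}_1\|_{0,*}^2 = 0$ (equivalently, the Part-\textbf{(I)} contribution at $t=1$ is accounted for instead by the $2R\epsilon_1$ branch of the $\min$ in \eqref{eq:part-I}). I expect this boundary bookkeeping — together with the vacuous $E_{t-1}=0$ branch — to be the only fussy point; the telescoping inequality itself is immediate, so the lemma is genuinely light once the right reduction is made.
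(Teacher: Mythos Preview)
Your proposal is correct and essentially identical to the paper's proof: both write $\|\vec g_t-\vec{\ti g}_t\|_{t-1,*}^2=\epsilon_t^2/\sigma_{1:t-1}$, then lower-bound $\sigma_{1:t-1}$ by dropping the $\max(0,\cdot)$ (using $\max(0,a)\ge a$) and telescoping to $\sigma\sqrt{E_{t-1}/P'_{t-1}}$. Your treatment of the $t=1$ boundary case is, if anything, more careful than the paper's one-line dismissal.
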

    \begin{proof}
    Recall first that the dual norm of $\|\cdot\|_{t-1,*}$ is $\frac{1}{\sqrt{\sigma_{0:t-1}}}\|\cdot\|$.
    The first part is immediate from $\|\vec x \|_{0,*}=\sqrt{\frac{E_0}{P'_0}}\|\vec x\| = 0$, since $E_0=0$ and $P'_0=2R$. For the second part: 
        \begin{align}
             \|\vec g_t - \vec{\ti g}_{t}\|_{t-1,*}^2 =  \frac{\|\vec g_t - \vec{\ti g}_{t}\|^2}{\sigma_{1:t-1}}  &=  \frac{\epsilon_t^2}{\frac{\sigma\epsilon_1}{\sqrt{P'_1}}+ \sum_{\tau=2}^{t-1}{\sigma}\max\left(0,\ \sqrt{\frac{E_\tau} {P'_\tau}} - \sqrt{\frac{E_{\tau-1}} {P'_{\tau-1}}}\right)}
            \\
            &\leq  \frac{\epsilon_t^2}{\frac{\sigma\epsilon_1}{\sqrt{P'_1}}+ \sum_{\tau=2}^{t-1}{\sigma} \left(\sqrt{\frac{E_\tau} {P'_\tau}} - \sqrt{\frac{E_{\tau-1}} {P'_{\tau-1}}}\right)}
            =  \frac{\epsilon_t^2}{\sigma\sqrt{\frac{E_{t-1}} {P'_{t-1}}}},
        \end{align}
        where the inequality follows by dropping the $\max$ in the denominator.
    \end{proof}
\begin{lemma}
The strong convexity parameter $\sigma_{1:t}$ in \eqref{eq:regs-params-3-app} can be written as the cumulative term $\sqrt{\frac{E_t}{P'_t}}$ plus a corrective term:
\label{lemma:max2}
\begin{align}
     \sigma_{1:t}\ =\  \sigma\left(\sqrt{\frac{E_t} {P'_t}} +  \sum_{\tau\in[t]^+} \sqrt{\frac{E_{\tau-1}} {P'_{\tau-1}}} - \sqrt{\frac{E_{\tau}} {P'_{\tau}}}\right).
\end{align}
\end{lemma}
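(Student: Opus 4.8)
This is a purely algebraic bookkeeping identity: it rewrites the clipped cumulative step size as a telescoping (``monotone'') part plus the corrections forced by the $\max(0,\cdot)$ operator. The plan is to expand $\sigma_{1:t}=\sum_{\tau=1}^{t}\sigma_\tau$ directly from the definition in \eqref{eq:regs-params-3-app} and regroup the terms; no FTRL machinery is needed.

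First I would introduce the shorthand $a_\tau\doteq\sqrt{E_\tau/P'_\tau}$ for $\tau\ge 0$. Using the boundary values $E_0=0$ and $P'_0=2R$ (as in the proof of Lemma~\ref{lemma:max1}) one gets $a_0=0$, and hence $\sigma_1=\sigma\epsilon_1/\sqrt{P'_1}=\sigma a_1=\sigma(a_1-a_0)=\sigma\max(0,a_1-a_0)$, the last step since $a_1\ge 0=a_0$. Thus the first term fits the same template as the later ones, and $\sigma_{1:t}=\sigma\sum_{\tau=1}^{t}\max\!\left(0,\,a_\tau-a_{\tau-1}\right)$.

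The next step is to apply the elementary identity $\max(0,x)=x+\max(0,-x)$ with $x=a_\tau-a_{\tau-1}$, which yields $\max(0,a_\tau-a_{\tau-1})=(a_\tau-a_{\tau-1})+\max(0,a_{\tau-1}-a_\tau)$. Summing the first summand over $\tau=1,\dots,t$ telescopes to $a_t-a_0=a_t$. The second summand is nonzero precisely when $a_{\tau-1}-a_\tau\ge 0$, i.e.\ when $\tau\in[t]^+$ (for $\tau=1$ it equals $\max(0,-a_1)=0$, consistent with $1\notin[t]^+$), and on that set it equals $a_{\tau-1}-a_\tau$. Putting these together gives $\sigma_{1:t}=\sigma\big(a_t+\sum_{\tau\in[t]^+}(a_{\tau-1}-a_\tau)\big)$, which is the claimed expression after substituting back $a_\tau=\sqrt{E_\tau/P'_\tau}$.

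I do not expect any genuine obstacle: the only points needing a little care are reconciling the separately-defined $\sigma_1$ with the generic $\max(\cdot)$ formula and keeping track of the boundary values $E_0=0$, $P'_0=2R$; once those are handled, the result is just the split $\max(0,x)=x+\max(0,-x)$ followed by one telescoping sum. This decomposition is precisely what later isolates the corrective term $A_T$ when combined with Lemma~\ref{lemma:max1} in the proof of Theorem~\ref{thm:3}.
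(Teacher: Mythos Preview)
Your proposal is correct and follows essentially the same route as the paper: both arguments split $\max(0,a_\tau-a_{\tau-1})$ into the raw increment $(a_\tau-a_{\tau-1})$ plus a correction supported on $[t]^+$, then telescope. Your handling is slightly cleaner in that you absorb $\sigma_1$ into the same template via $a_0=0$ and invoke the identity $\max(0,x)=x+\max(0,-x)$ explicitly, whereas the paper keeps $\sigma_1$ separate and cancels $\epsilon_1/\sqrt{P'_1}-\sqrt{E_1/P'_1}$ at the end; the substance is identical.
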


\begin{proof}
    \begin{align}
     \sigma_{1:t} &= \frac{\sigma\epsilon_1}{\sqrt{P'_1}} + {\sigma} \sum_{\tau=2}^t \max\left(0,\ \sqrt{\frac{E_\tau} {P'_\tau}} - \sqrt{\frac{E_{\tau-1}} {P'_{\tau-1}}}\right)   
     \\
     &= \sigma \left(\frac{\epsilon_1}{\sqrt{P'_1}} + \sum_{\tau=2}^t\sqrt{\frac{E_\tau} {P'_\tau}} - \sqrt{\frac{E_{\tau-1}} {P'_{\tau-1}}} \right) + \sigma \sum_{\tau\in[t]^+} \sqrt{\frac{E_{\tau-1}} {P'_{\tau-1}}} - \sqrt{\frac{E_{\tau}} {P'_{\tau}}},
\end{align}
where the last equality holds from the definition of $[t]^+$. That is, for the slot $\tau$ when the $\max$ evaluates to $0$, we write it as $\left(\sqrt{\frac{E_\tau} {P'_\tau}} - \sqrt{\frac{E_{\tau-1}} {P'_{\tau-1}}}\right) + \left(\sqrt{\frac{E_{\tau-1}} {P'_{\tau-1}}}-\sqrt{\frac{E_\tau} {P'_\tau}}\right)$.
Now, by telescoping
\begin{align}
     \sigma_{1:t} &= \sigma \left(\frac{\epsilon_1}{\sqrt{P'_1}} +\sqrt{\frac{E_t} {P'_t}} - \sqrt{\frac{E_{1}} {P'_{1}}} \right) + \sigma \sum_{\tau\in[t]^+} \sqrt{\frac{E_{\tau-1}} {P'_{\tau-1}}} - \sqrt{\frac{E_{\tau}} {P'_{\tau}}} \\
     &= \sigma\sqrt{\frac{E_t} {P'_t}} + \sigma \sum_{\tau\in[t]^+} \sqrt{\frac{E_{\tau-1}} {P'_{\tau-1}}} - \sqrt{\frac{E_{\tau}} {P'_{\tau}}}.
\end{align}
\end{proof}
\begin{proof}[Proof of Theorem \ref{thm:3}]
 We begin by substituting the bounds of $\textbf{(I)}$ and $\textbf{(II)}$, in \eqref{eq:part-I} and \eqref{eq:part-II-B}, respectively, back in the result of Lemma \ref{lemma:strong-dyn-opt-ftrl}:
\begin{align}
    &\mathcal{R}_T \leq  \sumT \left(\min\left(\frac{1}{2}\|\vec g_t - \vec{\ti g}_{t}\|_{t-1,*}^2, 2R \epsilon_t \right)+ r_t(\vec u_t)\right)
    + \sum_{t=1}^{T-1}\left(\left(R\sigma_{1:t} + \|\vec p_{1:t}\|\right) \|\vec u_{t+1} - \vec u_t\|\right)
    \\
    &\stackrel{(a)}{\leq}  \sumT \left(\min\left(\frac{1}{2}\|\vec g_t - \vec{\ti g}_{t}\|_{t-1,*}^2, 2R \epsilon_t \right)+ r_t(\vec u_t)\right)
    + \sum_{t=1}^{T-1}\left(\left(2R\sigma_{1:t}+\epsilon_t\right) \|\vec u_{t+1} - \vec u_t\|\right)
    \\
    &\stackrel{(b)}{\leq}  \sumT \min\left(\frac{1}{2}\|\vec g_t - \vec{\ti g}_{t}\|_{t-1,*}^2, 2R \epsilon_t \right) + \frac{R^2}{2}\sigma_{1:T}
    + \sum_{t=1}^{T-1}2R\sigma_{1:t} \|\vec u_{t+1} - \vec u_t\| + H_T 
    \\
    &\stackrel{(c)}{\leq}  \sumT \min\left(\frac{\sqrt{P'_{t-1}}\epsilon_t^2}{2\sigma\sqrt{E_{t-1}}}, 2R \epsilon_t \right) + \sum_{t=1}^{T}2R\sigma_{1:t} \|\vec u_{t+1} - \vec u_t\| + H_T 
    \\
    &\stackrel{(d)}{\leq}  \sumT \min\left(\frac{\sqrt{P'_{t-1}}\epsilon_t^2}{2\sigma\sqrt{E_{t-1}}}, 2R \epsilon_t \right) +2R\sigma\sum_{t=1}^{T} \sqrt{\frac{E_t}{P'_t}} \|\vec u_{t+1} - \vec u_t\| + H_T 
    \\
    &\qquad +2R\sigma\underbrace{\sum_{t=1}^{T} \sum_{\tau\in[t]^+} \left(\sqrt{\frac{E_{\tau-1}}{P'_{t-1}}}-\sqrt{\frac{E_\tau}{P'_\tau}}\right) \|\vec u_{t+1} - \vec u_t\|}_{\doteq A_T} 
    \\
    &\stackrel{(e)}{\leq}  \sumT \sqrt{P'_T}\min\left(\frac{\epsilon_t^2}{2\sigma\sqrt{E_{t-1}}}, \frac{2R}{\sqrt{P'_T}} \epsilon_t \right) +2R\sigma\sum_{t=1}^{T} \sqrt{\frac{E_t}{P'_t}} \|\vec u_{t+1} - \vec u_t\| + H_T
    + 2R\sigma A_T
    \\
    &\stackrel{(f)}{\leq}  \sumT \sqrt{P'_T}\min\left(\frac{\epsilon_t^2}{2\sigma\sqrt{E_{t-1}}}, \sqrt{2R} \epsilon_t \right) +2R\sigma\sqrt{E_T}\sum_{t=1}^{T} \frac{\|\vec u_{t+1} - \vec u_t\|}{\sqrt{{P'_t}}} + H_T
    + 2R\sigma A_T 
    \\
    &\stackrel{(g)}{\leq}  4\sqrt{R} \sqrt{P'_TE_T} \ +\ \sqrt{\frac{R}{2}}\sqrt{E_T}\sum_{t=1}^{T} \frac{\|\vec u_{t+1} - \vec u_t\|}{\sqrt{{2R+\sumtauO \|\vec u_{\tau+1} - \vec u_\tau\|}}} + H_T
    + \sqrt{\frac{R}{2}}A_T 
    \\
    &\stackrel{(h)}{\leq} 4\sqrt{R} \sqrt{P'_TE_T} \ +\ \sqrt{\frac{R}{2}}\sqrt{E_T}\sum_{t=1}^{T} \frac{\|\vec u_{t+1} - \vec u_t\|}{\sqrt{{\sumtau \|\vec u_{\tau+1} - \vec u_\tau\|}}} + H_T
    + \sqrt{\frac{R}{2}}A_T
    \\
    &\stackrel{(i)}{\leq} 4\sqrt{R} \sqrt{P'_TE_T} \ +\ \sqrt{2R}\sqrt{E_T P'_T}+ H_T + \sqrt{\frac{R}{2}}A_T 
    \\
    &=(4+\sqrt{2})\sqrt{R}\sqrt{P'_TE_T} \ +H_T + \sqrt{\frac{R}{2}}A_T 
\end{align}
where $(a)$ follows by Lemma \ref{lemma:state-is-bounded} which bounds $\|\vec p_{1:t}\|$, and the fact that $\sigma_{1:t-1} \leq \sigma_{1:t}$, $(b)$ by bounding each $\|\vec u_t\|$ in $r_t(\vec u_t), t \leq T$ by $R$, $(c)$ by using Lemma \ref{lemma:max1} for the first sum, and by selecting\footnote{Note that $\vec u_{T+1}$ does not affect the algorithm and hence we can set it without loss of generality} $\vec u_{T+1}$ such that $\|\vec u_{T+1} - \vec u_T \|\geq \frac{R}{4}$, which allows us to append the $\frac{R^2}{2}\sigma_{1:T}$ term to the second sum as the summand with index $T$, $(d)$ by using Lemma \ref{lemma:max2} to re-write $\sigma_{1:t}$, $(e)$ since $P'_t$ is non-decreasing, $(f)$ from $P'_t\geq2R$, $(g)$ by $\sigma = \nicefrac{1}{2\sqrt{2R}}$ and Lemma \ref{lemma:nonprox-min}, $(h)$ by the fact that $2R\geq \|\vec u_{t+1} - \vec u_t\|, \forall t$, and finally $(i)$ by Lemma \ref{lemma:sum-dec} with $a_t = \dt$.
\end{proof}
\subsubsection{Bounding the $A_T$ term}
\label{sec:bounding-A}
In this subsection, we show that the term $A_T$ cannot be worse than the result of Theorem $1$ in all cases.
\begin{align}
    &A_T \doteq \sum_{t=1}^T \sum_{\tau\in[t]^+}\left( \sqrt{\frac{E_{\tau-1}}{P'_{\tau-1}}}-     \sqrt{\frac{E_{\tau}}{P'_{\tau}}}\right)\|\vec u_{t+1} - \vec u_t\|=\sum_{t=1}^T \left[\|\vec u_{t+1} - \vec u_t\|  \sum_{\tau\in[t]^+}\left( \sqrt{\frac{E_{\tau-1}}{P'_{\tau-1}}}-     \sqrt{\frac{E_{\tau}}{P'_{\tau}}}\right)\right]
    \\
    &\stackrel{(a)}\leq \sum_{t=1}^T \left[\|\vec u_{t+1} - \vec u_t\|  \sum_{\tau\in[t]^+}\left( \sqrt{\frac{E_{\tau}}{P'_{\tau-1}}}-     \sqrt{\frac{E_{\tau}}{P'_{\tau}}}\right)\right]
    \\
    &=\sum_{t=1}^T \left[\|\vec u_{t+1} - \vec u_t\|  \sum_{\tau\in[t]^+}\sqrt{E_{\tau}}\left( \sqrt{\frac{1}{P'_{\tau-1}}}-     \sqrt{\frac{1}{P'_{\tau}}}\right)\right]
    \\
     &\stackrel{(b)}\leq \sum_{t=1}^T \left[\|\vec u_{t+1} - \vec u_t\|\sqrt{E_{t}}  \sum_{\tau\in[t]^+}\left( \sqrt{\frac{1}{P'_{\tau-1}}}-     \sqrt{\frac{1}{P'_{\tau}}}\right)\right]
     \\
    &\stackrel{(c)}\leq \sum_{t=1}^T \left[\|\vec u_{t+1} - \vec u_t\|\sqrt{E_{t}}  \sum_{\tau=2}^t\left( \sqrt{\frac{1}{P'_{\tau-1}}}-     \sqrt{\frac{1}{P'_{\tau}}}\right)\right]
    \\   
    &\stackrel{(d)}= \sum_{t=1}^T \left[\|\vec u_{t+1} - \vec u_t\|\sqrt{E_{t}}  \left( \sqrt{\frac{1}{P'_{1}}}-     \sqrt{\frac{1}{P'_{t}}}\right)\right]
    \\
    &\stackrel{(e)}= \sum_{t=1}^T \left[\|\vec u_{t+1} - \vec u_t\|\sqrt{E_{t}}  \sqrt{\frac{1}{P'_{1}}}\right] \label{eq:useful-3}
    \\
    &\stackrel{(f)}\leq \sqrt{\frac{E_T}{P'_1}}\sum_{t=1}^T \|\vec u_{t+1} - \vec u_t\|\\ &= \frac{1}{\sqrt{P'_1}}\sqrt{E_T}P'_T\ =\ \mathcal{O}(\sqrt{E_T}(P_T+1)),
    \end{align}
where $(a)$ is from $E_\tau\geq E_{\tau-1}$ for all $\tau$ by definition; $(b)$ holds similarly because $E_\tau$ is non-decreasing on $\tau$ (or $t$); $(c)$ holds because $P'_{\tau-1}\leq P'_\tau$ for all $\tau$ (for any slot $t$) and hence we can add additional positive terms and create the entire sum from $\tau=2$ to $\tau=t$, instead of only the partial sum of terms in $[t]^+$; $(d)$ holds by writing the telescoping sum; $(e)$ holds by dropping the last term which is negative; and finally in $(f)$ we used the fact that $E_T\geq E_t, \forall t$.

\subsection{Proof of Theorem $\ref{thm:4}$}
Consider the following regularization strategy 
\begin{equation}
\begin{aligned}
\label{eq:regs-params-4-app}
    &\sigma=\frac{1}{8R^2}; \quad \sigma_t = \sigma \delta_t
    \\
    &\delta_1 = \dtp{\vec g_1}{\vec x_1} - \min_{\vec x\in\mathcal{X}} \dtp{\vec g_1}{\vec x}; \ \delta_t = h_{0:t-1}(\vec x_t) + \dtp{\vec p_t}{\vec x_t} -\min_{\vec x\in\mathcal{X}} \left( h_{0:t-1}(\vec{x}) + \dtp{\vec p_t}{x} \right) , \forall t \geq 2; 
\end{aligned}
\end{equation}

\textit{Clarification on the term ``recursive"}: In previous regularization strategies, the strong convexity at time $t$, $\sigma_{1:t}$, can be  expressed in closed form as $\sigma_{1:t} = \sigma \sqrt{E_t}$. This follows from defining each $\sigma_t$ to be exactly $\sigma (\sqrt{E_t} - \sqrt{E_{t-1}}) \leq \frac{\sigma \epsilon_t^2}{2 \sqrt{E_{t-1}}}$. However, when $\sigma_t$ is defined more generally as a scalar $\sigma \delta_t$, where $\delta_t$ can take any value in $[0, \sigma(\frac{\epsilon_t^2}{2 \sqrt{E_{t-1}}})]$, we lose this compact form. Instead, $\sigma_{1:t}$ is now recursively defined as $\sigma_{1:t-1} + \sigma \delta_t$. As discussed, this ensures minimal regularization, impacting both the algorithm’s behavior and the analysis.

\renewcommand{\thetheorem}{\ref{thm:4}}
\begin{theorem}
Under Settings 1, Alg. \ref{alg:opt-fprl} run with the regularization strategy in \eqref{eq:regs-params-4-app} produces points $\{\vec{x}_t\}_{t=1}^T$ such that, for any $T$, the dynamic regret $\mathcal{R}_T$ satisfies:
\begin{align}
    \mathcal{R}_T 
    &\leq1.1\ \delta_{1:T}  +\ \sum_{t=1}^{T-1} \frac{1}{4R}\delta_{1:t} \|\vec u_{t+1} - \vec u_t\|  + H_T
    \\
    &\leq(3.7R+P_T)\sqrt{E_T} + H_T = \mathcal{O}\left((1+P_T)\sqrt{E_T} \right).
\end{align}
\end{theorem}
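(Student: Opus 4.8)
The plan is to mirror the proof of Theorem~\ref{thm:1}, but to stop short of substituting a closed form for the accumulated regularization. Starting from Lemma~\ref{lemma:strong-dyn-opt-ftrl}, the first task is to show that Part~\textbf{(I)} is dominated by $\delta_t$: writing $h_{0:t} = h_{0:t-1} + \dtp{\vec p_t}{\cdot} + r_t$, the $-r_t(\vec x_t)$ in \textbf{(I)} cancels against a term, and since $r_t \ge 0$ and $\vec x_{t+1}\in\mathcal{X}$ we have $h_{0:t}(\vec x_{t+1}) \ge \min_{\vec x}\bigl(h_{0:t-1}(\vec x) + \dtp{\vec p_t}{\vec x}\bigr)$, so $\textbf{(I)} \le h_{0:t-1}(\vec x_t) + \dtp{\vec p_t}{\vec x_t} - \min_{\vec x}\bigl(h_{0:t-1}(\vec x) + \dtp{\vec p_t}{\vec x}\bigr) = \delta_t$. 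Plugging $\textbf{(I)} \le \delta_t$, the Part~\textbf{(II)} bound $\textbf{(II)} \le (\|\vec p_{1:t}\| + R\sigma_{1:t})\dt$, the state bound $\|\vec p_{1:t}\| \le R\sigma_{1:t-1} + \epsilon_t \le R\sigma_{1:t} + \epsilon_t$ from Lemma~\ref{lemma:state-is-bounded}, the identity $\sigma_{1:t} = \sigma\,\delta_{1:t}$, the choice $\sigma = \tfrac{1}{8R^2}$ (so $2R\sigma_{1:t} = \tfrac{1}{4R}\delta_{1:t}$), and $r_t(\vec u_t) = \tfrac{\sigma_t}{2}\|\vec u_t\|^2 \le \tfrac{\sigma_t R^2}{2} = \tfrac{\delta_t}{16}$, Lemma~\ref{lemma:strong-dyn-opt-ftrl} yields
\[
\mathcal{R}_T \le \Bigl(1 + \tfrac{1}{16}\Bigr)\delta_{1:T} + \sum_{t=1}^{T-1}\tfrac{1}{4R}\,\delta_{1:t}\,\dt + H_T ,
\]
and $1 + 1/16 \le 1.1$ gives the first displayed inequality.

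The second inequality requires bounding $\delta_{1:T}$ by $\mathcal{O}(R\sqrt{E_T})$ although, unlike in Theorems~\ref{thm:1}--\ref{thm:3}, the recursion $\sigma_{1:t} = \sigma_{1:t-1} + \sigma\delta_t$ has no closed form. The crucial observation is that $\delta_t$ is exactly the intermediate quantity $h_{0:t-1}(\vec x_t) + \dtp{\vec p_t}{\vec x_t} - h_{0:t-1}(\vec y_t) - \dtp{\vec p_t}{\vec y_t}$ (with $\vec y_t \doteq \argmin_{\vec x} h_{0:t-1}(\vec x) + \dtp{\vec p_t}{\vec x}$) that is bounded inside the proofs of Lemmas~\ref{lemma:first-part-norm} and \ref{lemma:first-part-2Repsilon}; hence $\delta_t \le \min\bigl(\tfrac{\epsilon_t^2}{2\sigma_{1:t-1}},\, 2R\epsilon_t\bigr)$, with $\delta_1 \le \dtp{\vec g_1 - \vec{\ti g}_1}{\vec x_1 - \vec x^\star} \le 2R\epsilon_1$ by first-order optimality of $\vec x_1$ for $\ti f_1$. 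Squaring the recursion,
\[
\sigma_{1:t}^2 - \sigma_{1:t-1}^2 = 2\sigma\,\delta_t\,\sigma_{1:t-1} + \sigma^2\delta_t^2 \le \sigma\epsilon_t^2 + 4\sigma^2 R^2\epsilon_t^2 = \tfrac{3}{2}\,\sigma\epsilon_t^2 ,
\]
using both branches of the $\min$ and $\sigma = \tfrac1{8R^2}$; telescoping gives $\sigma_{1:T} \le \sqrt{\tfrac32\sigma E_T}$, i.e.\ $\delta_{1:T} = \sigma_{1:T}/\sigma \le \sqrt{12R^2 E_T} = 2\sqrt3\,R\sqrt{E_T}$. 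Substituting this together with $\delta_{1:t} \le \delta_{1:T}$ and $\sum_t \dt = P_T$ into the display above gives $\mathcal{R}_T \le \tfrac{17}{16}\cdot 2\sqrt3\,R\sqrt{E_T} + \tfrac{\sqrt3}{2}P_T\sqrt{E_T} + H_T \le (3.7R + P_T)\sqrt{E_T} + H_T$, and $H_T = \mathcal{O}(\sqrt{E_T P_T})$ (Remark~\ref{remark:H_T}) yields the stated $\mathcal{O}((1+P_T)\sqrt{E_T})$ rate.

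The main obstacle is the second step: controlling the recursively defined $\sigma_{1:T}$ without a closed form. Squaring introduces the extra $\sigma^2\delta_t^2$ term, which telescopes only because one can simultaneously invoke the dimension-free increment bound $\delta_t \le 2R\epsilon_t$ — this is exactly the place where the AdaFTRL-style reasoning (and the specific value $\sigma = 1/(8R^2)$, which makes $1 + 4\sigma R^2 = 3/2$) is needed. Care is also required at $t=1$, where $\sigma_{1:0}=0$ makes the $\tfrac{\epsilon_1^2}{2\sigma_{1:0}}$ bound vacuous, which is precisely why the algorithm specifies $\delta_1$ (equivalently $\vec g_1^I$) separately; the same remark covers any initial block of rounds with $\epsilon_\tau = 0$, for which all quantities vanish. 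The remaining work — propagating the $\min$ through the $\sum_t \textbf{(I)}$ term and tracking constants so the leading coefficient stays below $3.7R$ — is routine once this recursion is resolved.
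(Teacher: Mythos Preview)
Your proposal is correct and mirrors the paper's proof essentially step for step: the first inequality is obtained identically (bounding Part~\textbf{(I)} by $\delta_t$ via dropping $r_t(\vec x_{t+1})\ge 0$, applying the state lemma to Part~\textbf{(II)}, and collecting $r_t(\vec u_t) \le \delta_t/16$), and the final constants match. The one minor variation is in bounding the recursion: the paper invokes the AdaFTRL recurrence lemma from \citep{orabona2018scale} as a black box to conclude $\delta_{1:T} \le 2\sqrt{3}R\sqrt{E_T}$, whereas your squaring-and-telescoping argument is effectively an inline, self-contained proof of that same lemma yielding the identical constant.
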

\addtocounter{theorem}{-1}
\renewcommand{\thetheorem}{\thesection.\arabic{theorem}}

\begin{proof}
We begin from the result of Lemma \ref{lemma:strong-dyn-opt-ftrl} but without substituting the upper bound on part $\textbf{(I)}$, in order to characterize it more tightly via the $\delta_t$ terms. For the term $\textbf{(II)}$, we use the upper bound from  \eqref{eq:part-II-B}:
    \begin{align}
    &\mathcal{R}_T \leq  \sumT \left(h_{0:t}(\vec x_t) - h_{0:t}(\vec x_{t+1}) + r_t(\vec u_t)-r_t(\vec x_t)\right)\ +\ \sum_{t=1}^{T-1}\left(\left(R\sigma_{1:t} + \|\vec p_{1:t}\|\right) \|\vec u_{t+1} - \vec u_t\|\right)
    \\
    &\stackrel{(a)}{\leq} \sumT\left(h_{0:t}(\vec x_t)-r_t(\vec x_t) - h_{0:t-1}(\vec x_{t+1}) - r_t(\vec x_{t+1}) - \dtp{\vec p_t}{\vec x_{t+1}} + r_t(\vec u_t)\right)
    \\
    &\qquad+ \sum_{t=1}^{T-1}\left(\left(2R\sigma_{1:t} + \epsilon_t\right) \|\vec u_{t+1} - \vec u_t\|\right)
    \\
    &\stackrel{(b)}{\leq} \sumT\left(h_{0:t-1}(\vec x_t) + \dtp{\vec p_t}{\vec x_t} - h_{0:t-1}(\vec x_{t+1}) - \dtp{\vec p_t}{\vec x_{t+1}} + r_t(\vec u_t)\right)
    \\
    &\qquad+\sum_{t=1}^{T-1}\left(\left(2R\sigma_{1:t} + \epsilon_t\right) \|\vec u_{t+1} - \vec u_t\|\right)
    \\
    &\stackrel{(c)}{\leq} \sumT\left(\delta_t  + r_t(\vec u_t)\right) +\ \sum_{t=1}^{T-1}\left(\left(2R\sigma_{1:t} + \epsilon_t\right) \|\vec u_{t+1} - \vec u_t\|\right)
    \\
    &\stackrel{(d)}{\leq} \delta_{1:T} + \frac{R^2\sigma}{2} \delta_{1:T} +\ \sum_{t=1}^{T-1} 2R\sigma_{1:t} \|\vec u_{t+1} - \vec u_t\| + H_T
    \\
    &\stackrel{(e)}{\leq} 1.1\ \delta_{1:T}  +\ \sum_{t=1}^{T-1} \frac{1}{4R}\delta_{1:t} \|\vec u_{t+1} - \vec u_t\|  + H_T \label{eq:recursion}
\end{align}
where $(a)$ follows by Lemma \ref{lemma:state-is-bounded} which bounds $\|\vec p_{1:t}\|$, and by the fact that $\sigma_{1:t-1} \leq \sigma_{1:t}$, $(b)$ by dropping the non-negative $-r_t(\vec x_{t+1})$, $(c)$ by the definition of $\delta_t$,  $(d)$ by bounding each $\|\vec u_t\|$ in $r_t(\vec u_t), t \leq T$ by $R$, $(e)$ since $\sigma=\nicefrac{1}{8R^2}$.

\subsubsection{Bounding the Recursion}
\label{sec:bounding-recursion}
Note for each $\delta_t$ we have that 
\begin{align}
    \delta_t \leq 2R\epsilon_t.
\end{align}
The above follows from Lemma \ref{lemma:first-part-2Repsilon}. Specifically, note that $\delta_t$ is equal to the expression in \eqref{eq:delta-t-1}.
In addition, we know from Lemma \ref{lemma:first-part-norm} that
\begin{align}
    \delta_t \leq \frac{\epsilon_t^2}{2\sigma_{1:t-1}},
\end{align}
by noticing that $\delta_t$ is the LHS in the inequality \eqref{eq:part-I-a}. Substituting the strong convexity term $\sigma_{1:t}$ for the choices made in this section in \eqref{eq:regs-params-4-app}, we get
\begin{align}
    \delta_t \leq \frac{4R^2\epsilon_t^2}{\delta_{1:t-1}}. 
\end{align}
Overall, 
\begin{align}
    \delta_t \leq \min\left(2R\epsilon_t, \frac{4R^2\epsilon_t^2}{\delta_{1:t-1}}\right).
\end{align}
We can now invoke the auxiliary Lemma \ref{lemma:recurrence} on the last term with $a_t\doteq2R\epsilon_t$, $\Delta_t \doteq \delta_{1:t}$ to get that for any $t$,
\begin{align}
    \delta_{1:t} \leq 2\sqrt{3}R\sqrt{E_{t}}.
\end{align}
Going back to \eqref{eq:recursion}, we get
\begin{align}
    \mathcal{R}_T  &\stackrel{}{\leq} \frac{17\sqrt{3}}{8}R \sqrt{E_T} + \sumTO \frac{\sqrt{3}}{2}  \sqrt{E_t} \|\vec u_{t+1} - \vec u_t\| + H_T 
    \\
    &{\leq} \frac{17\sqrt{3}}{8}R \sqrt{E_T} + \frac{\sqrt{3}}{2} \sqrt{E_T}P_T + H_T 
    = \mathcal{O}\left((1+P_T)\sqrt{E_T}\right).
\end{align}
\end{proof}

\section{Auxiliary Lemmas}
Lemmas/theorems here are (specialized) results from the literature with potentially modified notation.

\begin{lemma}{\citep[Lemma 7]{mcmahan-survey17}}
\label{lemma:sc-diff-mcmahan-version}
Given a convex function $\phi_1(\cdot)$ with a minimizer $ \vec y_{1} \doteq \argmin_{\vec y}{\phi_1(\vec y)}$ and a function $\phi_2(\cdot) = \phi_1(\cdot) + \psi (\cdot)$ that is $1$-strongly convex w.r.t some norm  $\|\cdot\|$. Then,
\begin{align}
    \phi_2(\vec y_1) - \phi_2(\vec y') \leq \frac{1}{2}\| \vec b\|_*^2,
\end{align}
for any $\vec y'$, and any (sub)gradient $b\in \partial \psi({\vec y}_1)$.
\end{lemma}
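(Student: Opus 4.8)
The plan is to combine the first-order optimality of $\vec{y}_1$ for $\phi_1$ with the $1$-strong convexity of $\phi_2$, and then cancel the resulting quadratic term via a Fenchel--Young step. First I would record the consequence of $\vec{y}_1 \in \argmin_{\vec y} \phi_1(\vec y)$: by the first-order optimality condition for (possibly extended-valued) convex functions, $\vec{0} \in \partial \phi_1(\vec{y}_1)$. Since $\phi_2 = \phi_1 + \psi$ and the subdifferential of a sum contains the sum of the subdifferentials, the fixed subgradient $\vec b \in \partial \psi(\vec{y}_1)$ then satisfies $\vec b = \vec{0} + \vec b \in \partial \phi_1(\vec{y}_1) + \partial \psi(\vec{y}_1) \subseteq \partial \phi_2(\vec{y}_1)$, so $\vec b$ is a genuine subgradient of $\phi_2$ at $\vec{y}_1$.

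Next I would invoke $1$-strong convexity of $\phi_2$ with respect to $\|\cdot\|$, evaluated at $\vec{y}_1$ with this subgradient: for any $\vec{y}'$,
\[
\phi_2(\vec{y}') \ge \phi_2(\vec{y}_1) + \dtp{\vec b}{\vec{y}' - \vec{y}_1} + \tfrac{1}{2}\|\vec{y}' - \vec{y}_1\|^2 ,
\]
which after rearranging gives $\phi_2(\vec{y}_1) - \phi_2(\vec{y}') \le \dtp{\vec b}{\vec{y}_1 - \vec{y}'} - \tfrac{1}{2}\|\vec{y}_1 - \vec{y}'\|^2$. I would then bound the bilinear term by Cauchy--Schwarz for the norm/dual-norm pair followed by AM--GM (equivalently, Fenchel--Young):
\[
\dtp{\vec b}{\vec{y}_1 - \vec{y}'} \le \|\vec b\|_* \, \|\vec{y}_1 - \vec{y}'\| \le \tfrac{1}{2}\|\vec b\|_*^2 + \tfrac{1}{2}\|\vec{y}_1 - \vec{y}'\|^2 .
\]
Substituting this in, the two $\tfrac{1}{2}\|\vec{y}_1 - \vec{y}'\|^2$ terms cancel, leaving $\phi_2(\vec{y}_1) - \phi_2(\vec{y}') \le \tfrac{1}{2}\|\vec b\|_*^2$, which is the claim.

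I do not expect a serious obstacle: this is essentially a textbook fact. The only point needing a little care is the subdifferential sum rule used to certify $\vec b \in \partial \phi_2(\vec{y}_1)$; this relies only on the inclusion $\partial \phi_1 + \partial \psi \subseteq \partial(\phi_1 + \psi)$, which holds for arbitrary convex functions and is exactly what is required in the applications of Section~4, where $\phi_1$ carries an indicator term $I_{\mathcal{X}}$. One should also note the inequality is trivially true when $\phi_2(\vec{y}') = +\infty$, and otherwise $\vec{y}'$ lies in the effective domain, so all quantities involved are finite.
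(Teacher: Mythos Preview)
Your proof is correct and is precisely the standard argument for this lemma. The paper itself does not supply a proof: it lists this result among auxiliary lemmas imported from the literature and cites \citep[Lemma 7]{mcmahan-survey17} directly, so there is no paper-specific approach to compare against.
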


\begin{lemma}{\citep[Lemma 3.5]{auer-2002}}
For any $a_t \geq 0, \forall t \in [T]$
\label{lemma:sum-dec}
    \begin{align}
        \sumT \frac{a_t}{\sqrt{a_{1:t}}} \leq 2\sqrt{\sumT a_t},
    \end{align}
with the convention that $\frac{0}{\sqrt{0}} \doteq 0$.
\end{lemma}

\begin{lemma}{\citep[Section 7.6]{orabona2021modern}}
Let $\sigma$ be a positive real number that satisfy $\sigma \leq \frac{1}{2b}$, $b>0$. Then, for any $a_t \geq 0, \forall t \in [T]$:
\label{lemma:nonprox-min}
 
\begin{align}
        \sumT \min\left(\frac{a_t^2}{2\sigma\sqrt{\sumtauO a_\tau} },  b a_t\right) \leq \frac{\sqrt{2}}{\sigma}\sqrt{\sumT a^2_t}.
\end{align}
\end{lemma}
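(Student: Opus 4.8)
The plan is an induction on $T$ that reduces the whole inequality to a single ``per-step increment'' bound. Write $S_t \doteq \sum_{\tau=1}^{t} a_\tau$, so the first argument of the $\min$ at step $t$ is $\frac{a_t^2}{2\sigma\sqrt{S_{t-1}}}$; I adopt the convention that this argument is read as $+\infty$ (hence inactive) whenever $S_{t-1}=0$, and that a step with $a_t=0$ contributes $0$ and may be discarded. In particular, the first non-trivial step contributes $\min(+\infty,\,b a_1)=b a_1$, and the base case is exactly $b a_1\le\frac{\sqrt2}{\sigma}\sqrt{a_1}$, which already uses the hypothesis $\sigma\le\frac1{2b}$; note that $R$ itself never enters the argument — only the relation $\sigma\le\frac1{2b}$ is used (and it is tight in the paper's applications, where $b=2R$ and $\sigma=\frac1{4R}$). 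For the inductive step it suffices to show, for every $t$ with $S_{t-1}>0$,
\[
\min\!\Big(\tfrac{a_t^2}{2\sigma\sqrt{S_{t-1}}},\ b a_t\Big)\ \le\ \tfrac{\sqrt2}{\sigma}\big(\sqrt{S_t}-\sqrt{S_{t-1}}\big),
\]
since adding this to the inductive hypothesis $\sum_{\tau<t}\min(\cdot)\le\frac{\sqrt2}{\sigma}\sqrt{S_{t-1}}$ gives the bound $\frac{\sqrt2}{\sigma}\sqrt{S_t}$ at step $t$.

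To prove the increment inequality I would rewrite its right-hand side with the elementary identity $\sqrt{S_t}-\sqrt{S_{t-1}}=\frac{a_t}{\sqrt{S_t}+\sqrt{S_{t-1}}}$, which turns everything into a comparison of $a_t$-homogeneous quantities divided by (sums of) square roots of partial sums. Then I would split into two regimes according to the size of $a_t$ relative to the accumulated mass $S_{t-1}$. In the regime $a_t\le S_{t-1}$ I keep the quadratic arm $\frac{a_t^2}{2\sigma\sqrt{S_{t-1}}}$; there $S_t=S_{t-1}+a_t\le 2S_{t-1}$, so $\sqrt{S_t}+\sqrt{S_{t-1}}$ is at most a fixed multiple of $\sqrt{S_{t-1}}$ and the desired inequality reduces to a fixed numerical comparison. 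In the complementary regime $a_t> S_{t-1}$ I keep the linear arm $b a_t$; there $\sqrt{S_t}+\sqrt{S_{t-1}}\le(1+\sqrt2)\sqrt{a_t}$, so the target collapses to an inequality of the form $b a_t\le \frac{c}{\sigma}\sqrt{a_t}$, i.e. ``$b\le c/\sigma$'', for an absolute constant $c$ coming from the two square-root estimates — which is exactly the place where $\sigma\le\frac1{2b}$ is invoked.

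The main obstacle is purely the bookkeeping of constants: one must choose the case threshold and the slack in the square-root estimates so that \emph{both} regimes fit under the same coefficient $\frac{\sqrt2}{\sigma}$, and in particular so that the constant $c$ emerging from the linear regime is at least $\tfrac12$, so that the given $\sigma\le\frac1{2b}$ (rather than a strictly smaller threshold) already closes the step. A secondary, non-inductive route — which I would use if the constants in the induction prove awkward — is to partition $[T]$ directly into the steps where the linear arm $b a_t$ is the smaller of the two and the steps where the quadratic arm is: on the former set each such $a_t$ is comparable to $2b\sigma\sqrt{S_{t-1}}$, so $\sum b a_t$ telescopes against the increments of $\sqrt{S_t}$, while on the latter set $\sum \frac{a_t^2}{2\sigma\sqrt{S_{t-1}}}$ is handled by the standard estimate $\sum_t \tfrac{a_t}{\sqrt{S_t}}\le 2\sqrt{S_T}$ (Lemma~\ref{lemma:sum-dec}) after bounding $\sqrt{S_{t-1}}$ below by a constant times $\sqrt{S_t}$; the two partial sums then recombine into $\frac{\sqrt2}{\sigma}\sqrt{S_T}$. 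Either way, the structural content is the same: telescope the increments of $\sqrt{S_t}$, and let the hypothesis $\sigma\le\frac1{2b}$ absorb the linear arm.
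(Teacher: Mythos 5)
Your route (induction with a per-step increment bound, telescoping $\sqrt{S_t}-\sqrt{S_{t-1}}=a_t/(\sqrt{S_t}+\sqrt{S_{t-1}})$) is genuinely different from the paper's, which instead bounds the $\min$ by the harmonic mean $\min(a,b)\le 2/(\nicefrac1a+\nicefrac1b)$, uses $\sigma\le\frac1{2b}$ to absorb the current term into the partial sum in the denominator, and then invokes Lemma~\ref{lemma:sum-dec}. Your induction is more elementary and self-contained (it does not even need Lemma~\ref{lemma:sum-dec}), and your ``secondary route'' of partitioning $[T]$ by which arm of the $\min$ is active is closer in spirit to the paper's argument.

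However, as written your proposal does not close, and the reason is a homogeneity mismatch that you inherited from a typo in the lemma statement. With $S_t\doteq\sum_{\tau\le t}a_\tau$ (no squares), the quadratic arm $\frac{a_t^2}{2\sigma\sqrt{S_{t-1}}}$ scales like $\lambda^{3/2}$ under $a\mapsto\lambda a$ while the target $\frac{\sqrt2}{\sigma}(\sqrt{S_t}-\sqrt{S_{t-1}})$ scales like $\lambda^{1/2}$, so none of your reductions are ``fixed numerical comparisons'': your base case $ba_1\le\frac{\sqrt2}{\sigma}\sqrt{a_1}$ is equivalent to $\sqrt{a_1}\le\sqrt2/(b\sigma)$, a constraint on the \emph{size} of $a_1$ rather than on the relation between $b$ and $\sigma$, and your Case~1 reduces to $a_t\le 4-2\sqrt2$. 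Indeed the literal statement is false (take $T=2$, $a_1=a_2=a$ large: the left side is $2ba$ while the right side is $\frac{2}{\sigma}\sqrt{a}$). The version the paper actually proves and uses everywhere has $\sqrt{\sum_{\tau<t}a_\tau^2}$ in the denominator and $\sqrt{\sum_t a_t^2}$ on the right (it is applied with $a_t=\epsilon_t$ and $E_{t-1}$, $E_T$). With $S_t\doteq\sum_{\tau\le t}a_\tau^2$ and the case split taken on $a_t^2$ versus $S_{t-1}$, every quantity in your argument becomes $1$-homogeneous and both regimes close: in the regime $a_t^2\le S_{t-1}$ one needs $\frac12\le\frac{\sqrt2}{1+\sqrt2}=2-\sqrt2$, and in the regime $a_t^2>S_{t-1}$ one needs $b\le\frac{2-\sqrt2}{\sigma}$, which follows from $\sigma\le\frac1{2b}$ since $\frac12\le 2-\sqrt2$. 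So the plan is sound and yields a valid alternative proof, but only after restoring the squares; as stated, the per-step inequality you propose to prove is unprovable.
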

\begin{proof}
The proof of this lemma closely follows the argument presented in the cited section. However, due to the inclusion of the term $b$ in the second part of the $\min$, we adapt the proof accordingly to account for this difference. 
    \begin{align}
        &\sumT \min\left(\frac{a_t^2}{2\sigma\sqrt{\sumtauO a_\tau^2} },  b a_t\right) 
        = \sumT \sqrt{\min\left(\frac{a_t^4}{4\sigma^2\sumtauO a_\tau^2 },  b^2 a_t^2\right)} 
        \\
        &=\frac{1}{2}\sumT \sqrt{\min\left(\frac{a_t^4}{\sigma^2\sumtauO a_\tau^2 },  4b^2 a_t^2\right)} 
        \stackrel{(a)}{\leq} \frac{1}{2}\sumT \sqrt\frac{2}{{\frac{\sigma^2\sumtauO a_\tau^2}{a_t^4}+\frac{1}{4b^2 a_t^2}}} 
        \\
        &= \frac{1}{2}\sumT \sqrt\frac{2}{{\frac{4b^2\sigma^2\sumtauO a_\tau^2 +a_t^2 }{4b^2a_t^4}}} \stackrel{(b)}{\leq} \frac{1}{2}\sumT \frac{2\sqrt{2}\ b\ a_t^2}{\sqrt{4b^2\sigma^2\sumtau a_\tau^2}} = \frac{\sqrt{2}}{2\sigma} \sumT\frac{a_t^2}{\sqrt{\sumtau a_\tau^2}}, 
\end{align}
where $(a)$ follows by $\min(a,b)\leq \frac{2}{\nicefrac{1}{a}+\nicefrac{1}{b}}$, and $(b)$ used the assumption on $\sigma$ to append $a_t^2$ to the sum. The result then follows by lemma \ref{lemma:sum-dec}.
\end{proof}

\begin{lemma}{\citep[Lemma 7]{orabona2018scale}}
\label{lemma:recurrence}
    Let $a_t\geq0$, $\forall t \in [T]$, and $\Delta_t$ be a sequence of positive numbers satisfying the recurrence 
    \begin{align}
        \Delta_{t}=\Delta_{t-1} + \min \left(a_t,\frac{a_t^2}{\Delta_{t-1}}\right).
    \end{align}
    with $\Delta_0 \doteq 0$. Then, for any $T$, we have that 
    \begin{align}
        \Delta_T = \sqrt{3\sumT a_t^2}.
    \end{align}
\end{lemma}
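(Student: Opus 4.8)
The plan is to prove the stated bound for $\Delta_T$ (the displayed ``$=$'' is really an ``$\le$'', which is all that is invoked in the proof of Theorem~\ref{thm:4}) by a short induction on $T$, carried out on the \emph{squared} quantity so that no square roots have to be manipulated. Write $S_t \doteq \sum_{\tau=1}^t a_\tau^2$; the goal is to show $\Delta_t^2 \le 3 S_t$ for every $t\ge 1$, which gives the claim.

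For the base case, $\Delta_0 = 0$ together with the natural convention $a_1^2/\Delta_0 = +\infty$ (and $=0$ if $a_1=0$) makes the recurrence give $\Delta_1 = \min(a_1,+\infty) = a_1$, so $\Delta_1^2 = a_1^2 \le 3a_1^2 = 3S_1$. For the inductive step, assume $\Delta_{t-1}^2 \le 3S_{t-1}$ and split on the sign of $\Delta_{t-1}-a_t$, which is exactly what selects the active argument of the $\min$. If $\Delta_{t-1}\le a_t$, then $\Delta_t = \Delta_{t-1}+a_t$, and expanding the square and using $2\Delta_{t-1}a_t\le 2a_t^2$ gives $\Delta_t^2 = \Delta_{t-1}^2 + 2\Delta_{t-1}a_t + a_t^2 \le 3S_{t-1} + 3a_t^2 = 3S_t$. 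If instead $\Delta_{t-1}\ge a_t$, then $\Delta_t = \Delta_{t-1} + a_t^2/\Delta_{t-1}$, and using $a_t^4/\Delta_{t-1}^2 \le a_t^2$ gives $\Delta_t^2 = \Delta_{t-1}^2 + 2a_t^2 + a_t^4/\Delta_{t-1}^2 \le \Delta_{t-1}^2 + 3a_t^2 \le 3S_t$. In both cases $\Delta_t^2\le 3S_t$, closing the induction.

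There is essentially no obstacle here; the one thing to get right is that the constant $3$ is precisely what absorbs \emph{both} branches at once — in the first branch the slack is the $2a_t^2$ coming from $2\Delta_{t-1}a_t\le 2a_t^2$ plus the bare $a_t^2$, and in the second it is the $2a_t^2$ cross term plus the $a_t^2$ from $a_t^4/\Delta_{t-1}^2\le a_t^2$ — so any constant strictly below $3$ would break one of the two cases. One minor point worth flagging for the way the lemma is used on $\delta_{1:t}$ in Theorem~\ref{thm:4}: there one only has the \emph{inequality} $\delta_{1:t}\le \delta_{1:t-1}+\min(a_t,a_t^2/\delta_{1:t-1})$ rather than the equality in the statement. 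This is harmless, because $x\mapsto x+\min(a,a^2/x)$ is non-decreasing on $(0,\infty)$ (its derivative is $1$ for $x\le a$ and $1-a^2/x^2\ge 0$ for $x\ge a$), so the same induction goes through verbatim with $\Delta_t$ replaced by any upper bound satisfying the inequality recursion.
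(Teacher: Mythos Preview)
Your proof is correct. The paper does not actually prove this lemma; it is listed in the auxiliary-lemmas appendix with the preface ``Lemmas/theorems here are (specialized) results from the literature with potentially modified notation'' and simply cited to \cite{orabona2018scale}, so there is no in-paper proof to compare against. Your induction on $\Delta_t^2$ with the two-branch case split is exactly the standard short argument for this recurrence, and your observation that the inequality version suffices for the application in Theorem~\ref{thm:4} (via monotonicity of $x\mapsto x+\min(a,a^2/x)$, which lets one sandwich $\delta_{1:t}$ by the equality-recurrence sequence) is a correct and useful addendum that the paper leaves implicit.
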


\section{Comparison with the Literature}
We summarize the key differences between our work and the relevant existing results on optimistic dynamic regret. The selected references represent the best known regret bounds (there are many other works that focus on different aspects, such as efficiency, unbounded domain, etc, but have the same structure as these bounds). It is important to note that some of these works derive bounds in terms of quantities other than the gradient prediction error, such as the extended ``temporal variation" in \cite{scroccaro-composite23} or the ``comparator loss" in \cite{zhao2024adaptivity}. For the sake of consistency, we restrict the comparison to bounds involving $E_T$, choosing $E_T$ in cases where a $\min(E_T, \cdot)$ term is present in the literature. Extending FTRL-based algorithms to handle comparator loss and temporal variations, dynamic regret bounds remain open. Lastly, in the comparison below, “best-case” refers to the scenario where $E_T = 0$, corresponding to perfect predictions.

\textbf{Bounds without tuning for $P_T$}:
\begin{itemize}
    \item \cite{jadbabaie2015online} Obtains a bound of $\mathcal{O}((P_T+1)\sqrt{E_T+1})$, which is $\mathcal{O}(P_T)$ in the best case.
    \item \cite{scroccaro-composite23} Obtains a bound of $\mathcal{O}((P_T+1)\sqrt{D_T+1})$, where $ D_T \doteq\| \sum_{t=1}^T \grd f_{t}(\vec y_{t-1}) - \grd \ti f_{t}(\vec y_{t-1})\|$. which is $\mathcal{O}(P_T)$ in the best case. 
    
    \item \cite{zhao2024adaptivity} Obtains a bound $\mathcal{O}(\sqrt{(V_T + P_T + 1)(1+P_T)})$, which is $\mathcal{O}(P_T)$ in the best case. Note that all $P_T$ appear under the root. Hence, this bound still matches the optimal min-max bound of $\sqrt{T(1+P_T)}$ in the worst case.

    \item This work achieves a dynamic regret bound of $\mathcal{R}_T = \mathcal{O}((1+P_T)\sqrt{E_T}$, which is $0$ in the best case. 
    
    \end{itemize}

\textbf{Bounds with online $P_T$ estimation}:

\begin{itemize}
    \item \cite{jadbabaie2015online} Obtains a bound of $\mathcal{O}(\log(T)\sqrt{(P_T+1)(E_T+1)})$, which is $\mathcal{O}(\log(T)\sqrt{P_T})$ in the best case.
    
    \item \cite{scroccaro-composite23} Obtains a bound of $\mathcal{O}(\sqrt{(P_T+1)(D_T+\theta_T)})$, where $\theta_T$ is the sum of corrective terms $\theta_t$, which are added to the learning rate to ensure monotonicity, and it holds that in the perfect \emph{gradient} prediction case $\theta_T= \mathcal{O}(1+P_T)$ and the regret bound is also $\mathcal{O}(P_T)$. 
    
    \item \cite{zhao2024adaptivity} Maintains the same bounds since, as detailed in the main paper, this meta learning framework provides bounds that hold simultaneously for all $P_T$. The bound is  $\mathcal{O}(\sqrt{(V_T + P_T + 1)(1+P_T)})$, which is $\mathcal{O}(P_T)$ in the best case.

    \item This work achieves a dynamic regret bound of $\mathcal{O}((1+\sqrt{P_T})\sqrt{E_T} + A_T)$, which is $0$ in the best case. 
        
\end{itemize}

\section{Numerical Examples}
In this section, we present numerical examples comparing the performance of three standard implementations of OCO algorithms across multiple non-stationary environments (sequences of cost functions). The algorithms considered are:
\begin{itemize}
\item FTRL with adaptive Euclidean regularization, which corresponds to a lazy projected Online Gradient Descent (OGD) but with Adagrad style tuning \citep[Sec. 3.5]{mcmahan-survey17}
\item OMD with data-adaptive learning rates, which corresponds to a greedy projection OGD \citep[Sec. 4.2]{orabona2021modern}.
\item Our proposed algorithm, \texttt{OptFPRL}, with the vanilla tuning strategy (i.e., in Sec. $3.1$).
\end{itemize}
The implementation code for the algorithms, along with the code to reproduce all experiments, is available at the following repository: \cite{github-repo}.
Since the FTRL and OMD variants used in our experiments neither assume prior knowledge of  $P_T$  nor attempt to estimate it online, we compare them to \texttt{OptFPRL} using the tuning strategy described in Sec. $3.1$ to ensure a fair evaluation. In scenarios where no predictions are used, the predicted functions fed to the algorithms are set to zero. This allows us to understand their performance initially, independent of prediction quality. Even without predictions, the proposed algorithm outperforms the two benchmarks in many scenarios, demonstrating its performance in dynamic environments.

\textbf{Numerical setup}. $\mathcal{X}\doteq \{\vec{x} \in \mathbb{R}^{16}| \|\vec{x}\|\leq 2\}$
$f_t(\vec x) = \dtp{\vec{c}_t}{\vec{x}}$, $T=5000$, with

$\bullet$ Scenario $1$: $\vec c_1, \dots, \vec c_{1000} = -\vec{1}$,  $\quad \vec c_{1000}, \dots, \quad \vec c_{5000} = \vec{1}$.

$\bullet$ Scenario $2$: $\vec c_1, \dots, \vec c_{1000} = -\vec{1}$, $\quad \vec c_{2000}, \dots, \vec c_{2500} = -\vec{1}$, $ \quad \vec c_{3500}, \dots, \quad \vec c_{3750} = -\vec{1}$, $\vec c_t=\vec{1}$ otherwise.

$\bullet$ Scenario $3$: $\vec c_1, \dots, \vec c_{1000} = -\vec{1}$, $\quad \vec c_{2000}, \dots, \vec c_{2500} = -\vec{5}$, $ \quad \vec c_{3500}, \dots, \quad \vec c_{3750} = -\vec{10}$, $c_t=\vec{1}$ otherwise.

$\bullet$ Scenario $4$: $\vec c_t$ alternates between $\vec{1}$ and -$\vec{1}$ every $50$ steps.

$\bullet$ Scenario $5$: $\vec c_t$ alternates between $\vec{1}$ and -$\vec{0.1}$ every $50$ steps.

$\bullet$ Scenario $6$: $\vec c_t$ alternates between $\vec{1}$ and -$\vec{1}$ every $50$ steps; Predictions $\vec{\ti{c}}_t = \vec c_t - \frac{\vec c_t}{0.1t}, \forall t \in [T]$

\begin{figure}[ht]
    \centering
    \begin{subfigure}{0.32\textwidth}
        \centering
        \includegraphics[width=\linewidth]{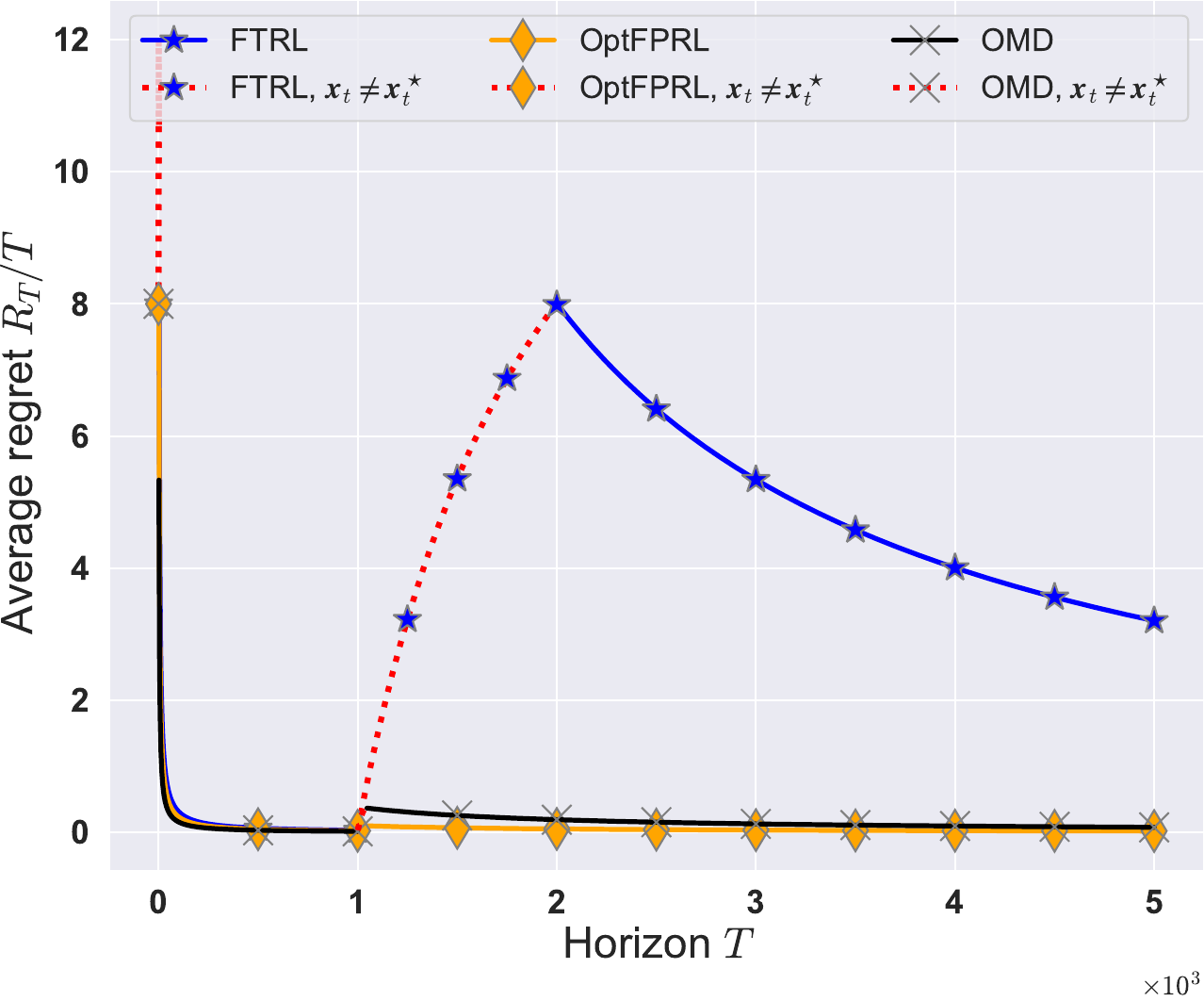}
        \caption{Scenario 1}
    \end{subfigure}
    \hfill
    \begin{subfigure}{0.32\textwidth}
        \centering
        \includegraphics[width=\linewidth]{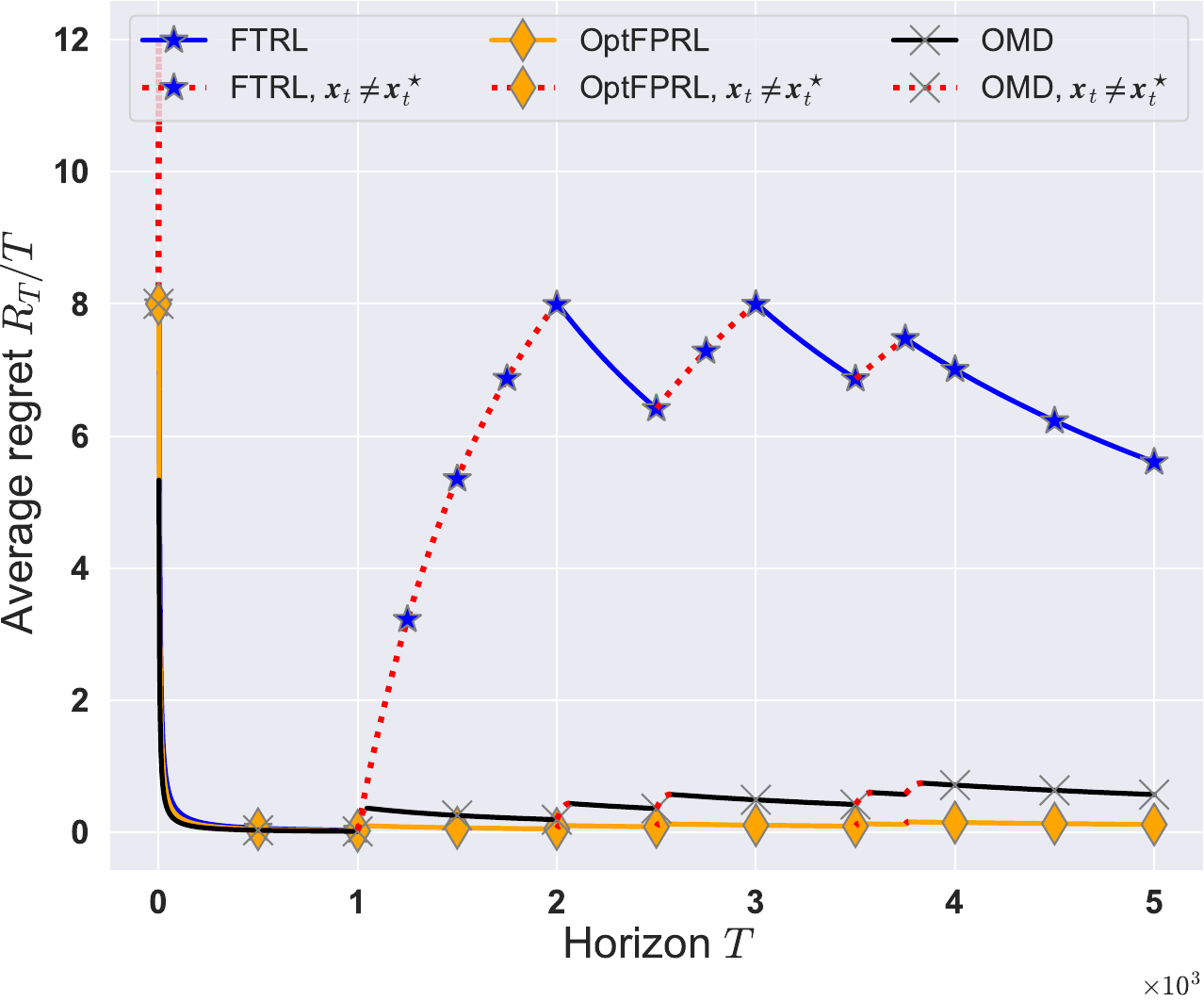}
        \caption{Scenario 2}
    \end{subfigure}
    \hfill
    \begin{subfigure}{0.32\textwidth}
        \centering
        \includegraphics[width=\linewidth]{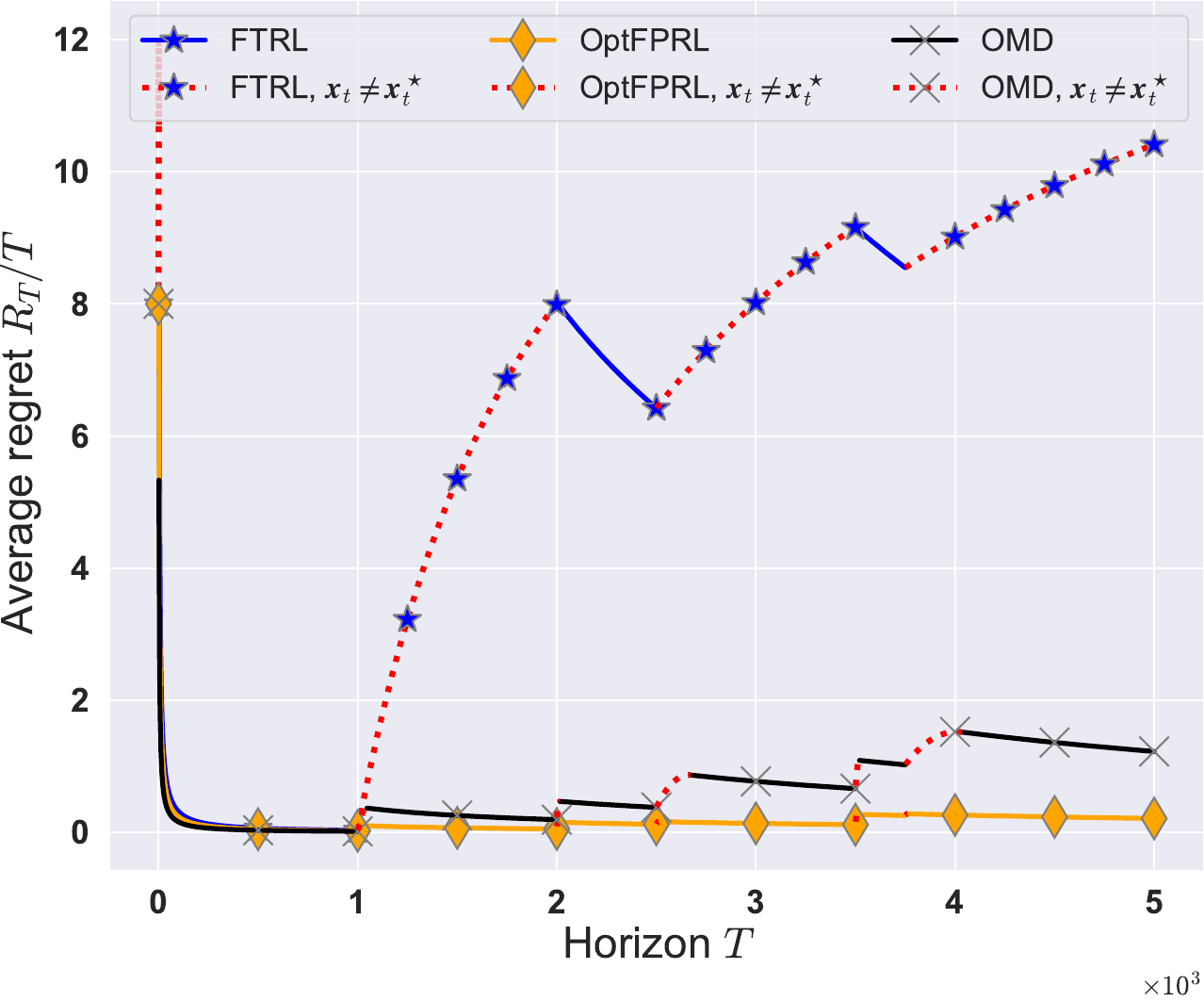}
        \caption{Scenario 3}
    \end{subfigure}

    \vskip 1em
    \begin{subfigure}{0.32\textwidth}
        \centering
        \includegraphics[width=\linewidth]{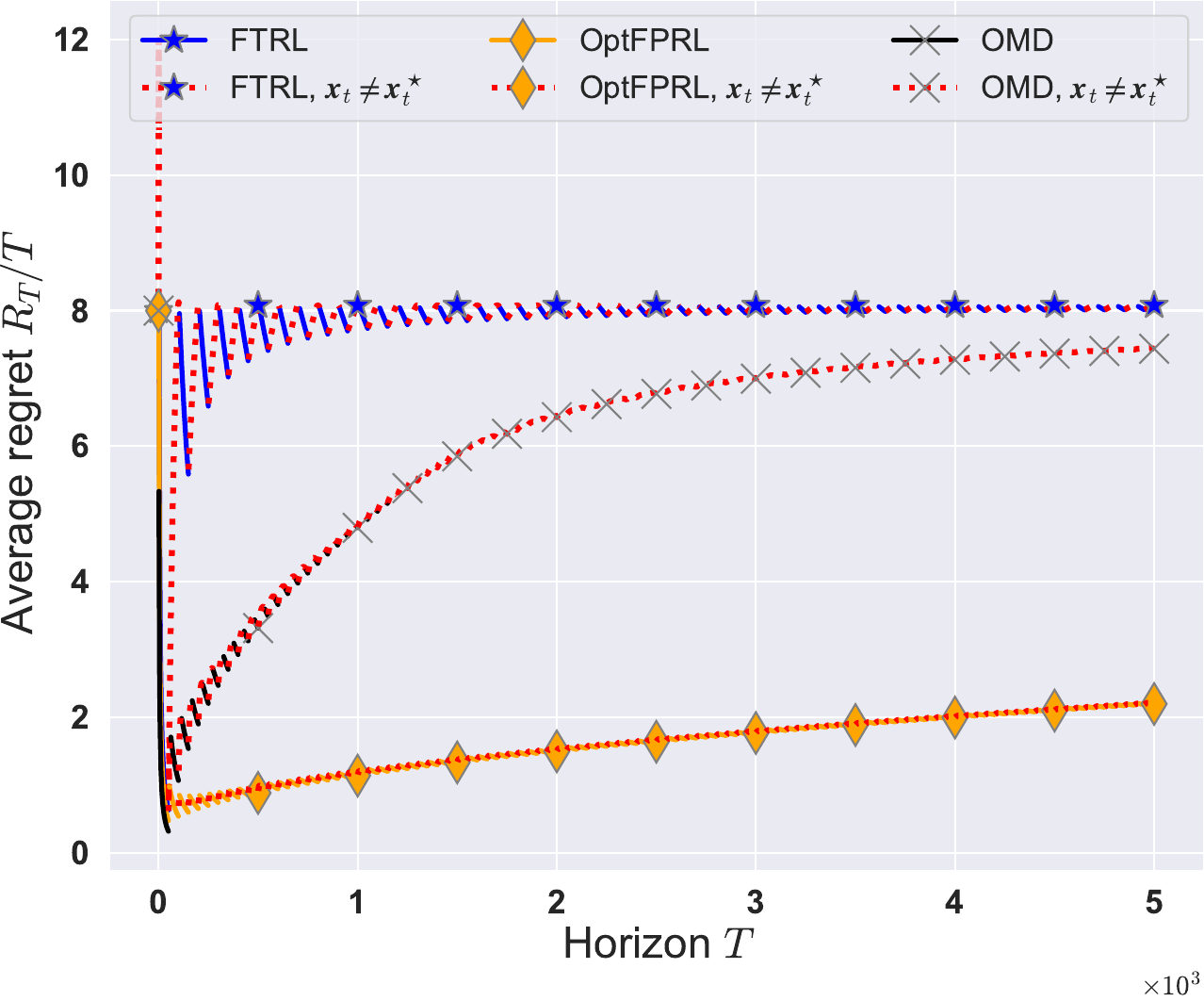}
        \caption{Scenario 4}
    \end{subfigure}
    \ \ 
    \begin{subfigure}{0.32\textwidth}
        \centering
        \includegraphics[width=\linewidth]{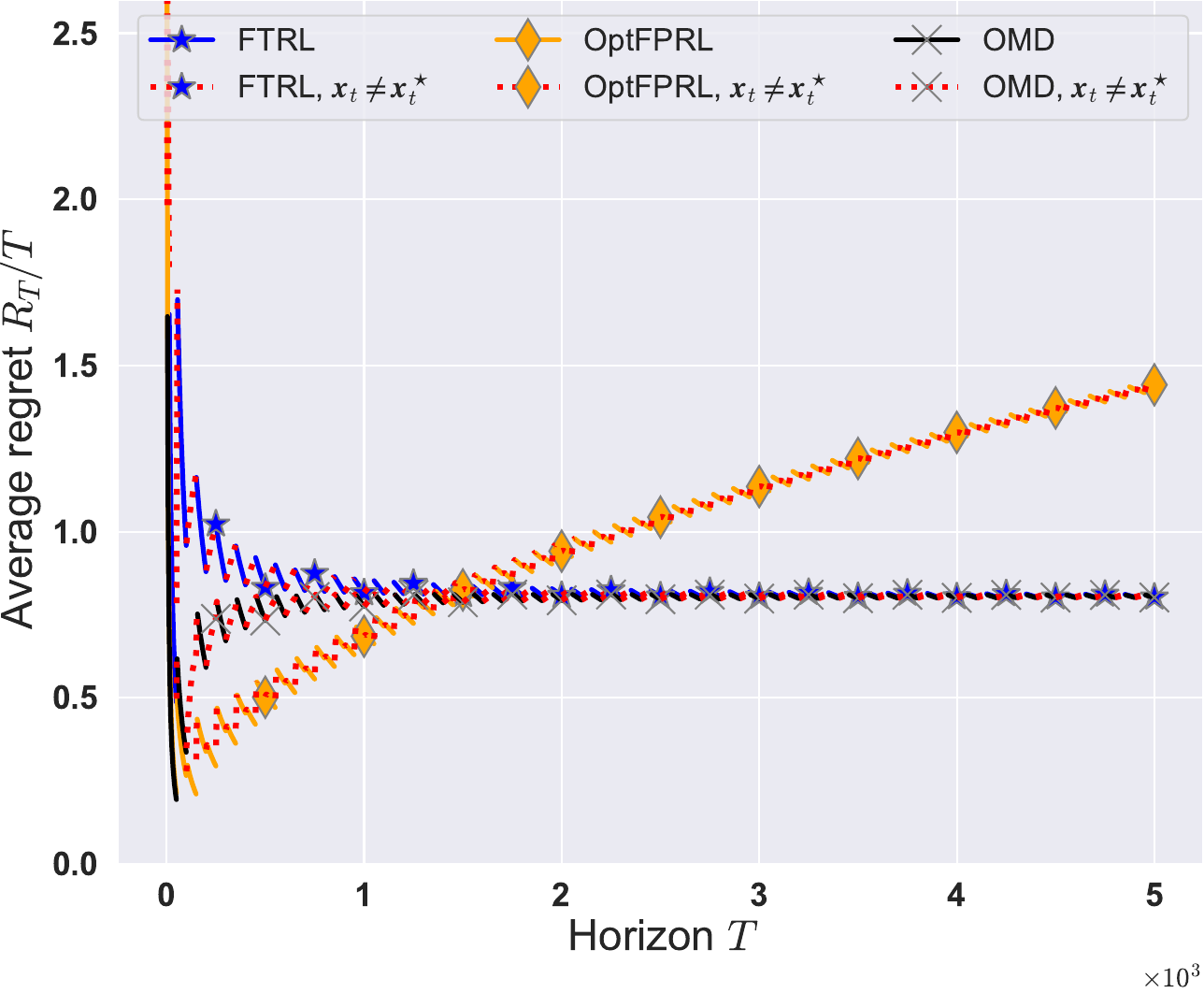}
        \caption{Scenario 5}
    \end{subfigure}
        \begin{subfigure}{0.32\textwidth}
        \centering
        \includegraphics[width=\linewidth]{./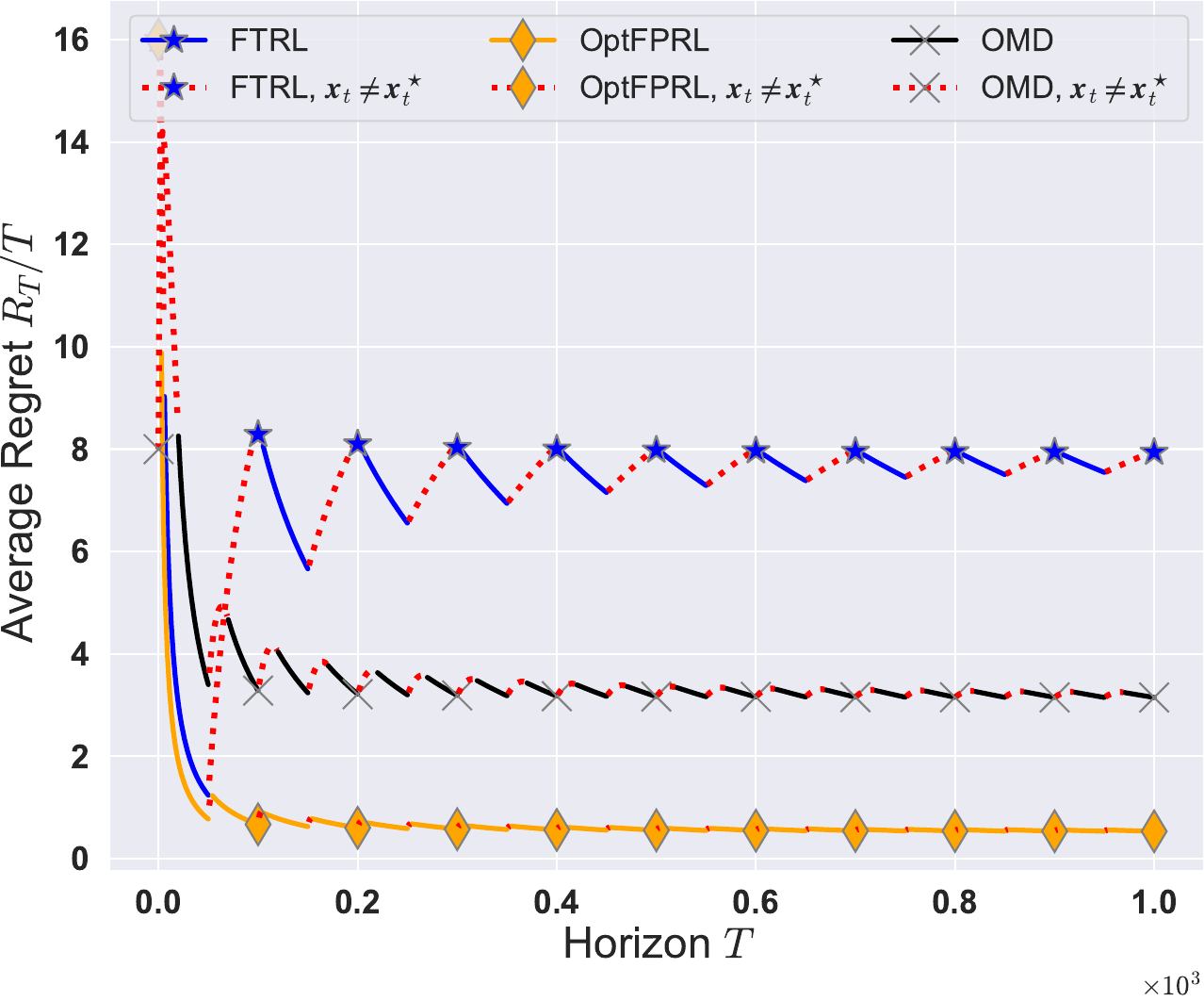}
        \caption{Scenario 6}
    \end{subfigure}

    \caption{Average dynamic regret over time across various non-stationary scenarios. Dashed lines indicate time slots where the computed iterate differs from the comparator $\vec x_t^\star$.}
    \label{fig:main}
\end{figure}
The observed behavior across different scenarios in the above figure aligns with theoretical expectations. In Scenario $1$, where the cost function shifts at $t = 1000$, standard FTRL struggles to adapt due to its reliance on accumulating all past costs. This inertia makes it slow to respond and leads to continued suboptimal actions until $t = 2000$, resulting in high regret. In contrast, OMD reacts immediately to the shift, adjusting its actions accordingly. Similarly, \texttt{OptFPRL} adapts directly, resulting in lower regret. 

In Scenario $3$, where cost directions change multiple times with increasing magnitudes, the limitations of FTRL become evident. The average regret fails to diminish, demonstrating its inability to handle such non-stationarity. While both OMD and \texttt{OptFPRL} respond to these variations, \texttt{OptFPRL} achieves lower regret. The advantages of \texttt{OptFPRL} are even more pronounced in Scenario $4$, which involves high-frequency cost changes. That said, the observed difference between the implemented versions of OMD and \texttt{OptFPRL} is primarily due to the parameter tuning of each algorithm. The tested configurations use the theoretically optimal learning rate $\eta_t$ for OMD and regularization parameter $\sigma_t$ for \texttt{OptFPRL}, but different choices may lead to considerably different behavior. In contrast, the poor performance of vanilla FTRL cannot be mitigated by tuning theoretically motivated parameters--its failure is more fundamental, as discussed in the main text.

We also highlight a scenario in which \texttt{OptFPRL} performs the worst (Scenario $5$): high-frequency cost switches with alternating magnitudes (large and small). This setting is deliberately designed to exploit our method's extra agility, forcing ``undue" frequent adjustments. As expected, \texttt{OptFPRL} exhibits higher regret in this case, consistent with the theoretical results. Nonetheless, this tradeoff is inherent to the design and provides insights into potential extensions that balance agility and stability. 

Lastly, we highlight in Scenario $6$ the role of very high-quality predictions in the performance of the \emph{optimistic} versions of the three algorithms (\cite{jadbabaie2015online} for OMD and \citep[Sec. 7.1]{JOULANI2020108} for FTRL).
We plot the average regret under predictions constructed as the original functions plus adversarial noise. Specifically, the adversarial noise is set as the negative of the original cost functions, with magnitude decaying quickly as $1/(0.1t)$, becoming negligible by $t \approx 100$. As noted in the paper, standard FTRL can be easily ``trapped", accumulating redundant gradients and failing to track the comparators. Optimistic OGD and our OptFPRL react immediately when losses change direction.

\end{document}